\documentclass[journal]{IEEEtran}

\usepackage[T1]{fontenc}
\usepackage[utf8]{inputenc}
\usepackage{microtype}

\usepackage{amsmath,amssymb,amsfonts,amsthm}

\usepackage{mathtools}
\usepackage{bm}

\usepackage{cite}

\usepackage{graphicx}
\usepackage{subcaption}
\usepackage{booktabs}
\usepackage{array}

\usepackage{xcolor}

\usepackage{url}
\usepackage{hyperref}
\hypersetup{hidelinks}

\usepackage{tikz}

\usepackage{enumitem}
\newcommand{\Eaff}{\mathbb E}
\newcommand{\Vtrans}{\mathbb V}
\newcommand{\Vdual}{\mathbb V^*}
\newcommand{\R}{\mathbb R}
\newcommand{\Rthree}{\mathbb R^3}

\newcommand{\SphTwo}{\mathbb S^2}
\newcommand{\Spp}{\mathbb S_{++}}

\newcommand{\Sym}{\operatorname{Sym}}

\newcommand{\norm}[1]{\left\lVert#1\right\rVert}

\newcommand{\inner}[2]{\left\langle#1,#2\right\rangle}

\newcommand{\col}[1]{\operatorname{col}\!\left(#1\right)}
\newcommand{\vech}{\operatorname{vech}}

\newcommand{\im}{\operatorname{im}}

\newcommand{\topT}{^{\mathsf T}}

\newcommand{\eye}[1]{\bm I_{#1}}
\newcommand{\zerovec}[1]{\bm 0_{#1}}

\newcommand{\argmin}{\operatorname*{arg\,min}}

\newcommand{\flatop}{^{\flat}}
\newcommand{\sharpop}{^{\sharp}}

\newcommand{\disp}[2]{\overrightarrow{#1#2}}

\newcommand{\Frame}[1]{\mathcal F_{\mathsf{#1}}}
\newcommand{\fw}{w}
\newcommand{\frameW}{\Frame{\fw}}
\newcommand{\originW}{O_{\mathsf w}}

\newcommand{\coord}[2]{\bm{#2}^{\mathsf{#1}}}

\newcommand{\pointcoord}[2]{\bm p_{#2}^{\mathsf{#1}}}

\newcommand{\loadPoint}{P}
\newcommand{\loadPointEq}{P_{\mathrm e}}
\newcommand{\actualAnchorPoint}[1]{P_{a,#1}}
\newcommand{\actualAnchorPointEq}[1]{P_{a,#1,\mathrm e}}
\newcommand{\desiredAnchorPoint}[1]{P_{d,#1}}

\newcommand{\setpointTuple}{Q}

\newcommand{\loadPos}{\pointcoord{\fw}{\loadPoint}}
\newcommand{\loadPosEq}{\pointcoord{\fw}{\loadPointEq}}
\newcommand{\actualAnchorPos}[1]{\pointcoord{\fw}{\actualAnchorPoint{#1}}}

\newcommand{\desiredAnchorPos}[1]{\pointcoord{\fw}{\desiredAnchorPoint{#1}}}

\newcommand{\setpointCoord}{\bm q^{\mathsf w}}
\newcommand{\setpointVariation}{\delta\bm q^{\mathsf w}}
\newcommand{\setpointVelocity}{\dot{\bm q}^{\mathsf w}}

\newcommand{\virtualCableVector}[1]{r_{#1}}

\newcommand{\cableLength}[1]{l_{0,#1}}
\newcommand{\cableTension}[1]{T_{#1}}
\newcommand{\cableDir}[1]{u_{#1}}
\newcommand{\cableDirCoord}[1]{\coord{\fw}{u_{#1}}}
\newcommand{\cableForceCovector}[1]{\varphi_{#1}}
\newcommand{\cableForceVector}[1]{f_{#1}}
\newcommand{\cableForceCoord}[1]{\coord{\fw}{f_{#1}}}

\newcommand{\CableLengthMap}[1]{\ell_{#1}}

\newcommand{\CableManifold}[1]{\mathcal M_{#1}}

\newcommand{\PerpProjector}[1]{\mathcal P_{u_{#1}}^{\perp}}
\newcommand{\PerpProjectorMat}[1]{\bm P_{u_{#1}}^{\perp,\mathsf w}}

\newcommand{\ParallelProjectorMat}[1]
{\bm P_{u_{#1}}^{\parallel,\mathsf w}}

\newcommand{\geometricStiffnessCoeff}[1]
{\kappa_{\mathrm{geo},#1}}

\newcommand{\AnchorStiffness}[1]{\mathcal K_{p,#1}}
\newcommand{\AnchorCompliance}[1]{\mathcal C_{p,#1}}

\newcommand{\AnchorStiffnessMat}[1]{\bm K_{p,#1}^{\mathsf w}}
\newcommand{\AnchorComplianceMat}[1]{\bm C_{p,#1}^{\mathsf w}}

\newcommand{\GeometricCompliance}[1]{\mathcal C_{\mathrm{geo},#1}}
\newcommand{\LegStiffness}[1]{\mathcal K_{#1}}
\newcommand{\TotalStiffness}{\mathcal K}

\newcommand{\GeometricComplianceMat}[1]{\bm C_{\mathrm{geo},#1}^{\mathsf w}}
\newcommand{\LegStiffnessMat}[1]{\bm K_{#1}^{\mathsf w}}
\newcommand{\TotalStiffnessMat}{\bm K^{\mathsf w}}
\newcommand{\DesiredStiffnessMat}{\bm K_{d}^{\mathsf w}}
\newcommand{\EndEffectorComplianceMat}{\bm C_{\mathrm{EE}}^{\mathsf w}}

\newcommand{\isotropicStiffness}[1]{k_{#1}}
\newcommand{\transverseStiffness}[1]{\gamma_{#1}}
\newcommand{\virtualDistance}[1]{d_{#1}}

\newcommand{\loadMass}{m}
\newcommand{\gravityScalar}{g}

\newcommand{\gravityVector}{f_g}
\newcommand{\externalVector}{f_{\mathrm{ext}}}
\newcommand{\gravityCoord}{\coord{\fw}{f_g}}
\newcommand{\externalCoord}{\coord{\fw}{f_{\mathrm{ext}}}
}

\newcommand{\EquilibriumSet}{\mathcal E_{\mathrm{eq}}}
\newcommand{\EquilibriumResidual}{\mathcal H}
\newcommand{\EquilibriumResidualCoord}{\bm h^{\mathsf w}}
\newcommand{\EquilibriumBranch}[1]{P_{\mathrm e}^{(#1)}}
\newcommand{\EquilibriumBranchCoord}[1]{\bm p_{\mathrm e}^{\mathsf w,(#1)}}

\newcommand{\StiffnessMap}[1]{\boldsymbol\Phi^{(#1)}}

\newcommand{\StiffnessCoordMap}[1]{\bm\Phi^{\mathsf w,(#1)}}
\newcommand{\StiffnessJacobian}[1]{\bm J_K^{\mathsf w,(#1)}}
\newcommand{\StiffnessError}{\bm e_K}

\newcommand{\CombinedJacobian}{\bm J_y^{\mathsf w}}

\newcommand{\StiffnessWeightMat}{\bm W_K}
\newcommand{\StiffnessErrorEnergy}{V_K}
\newcommand{\WeightedNorm}[2]
{\left\lVert #1\right\rVert_{#2}}

\newcommand{\NumConstraints}{N_c}

\newcommand{\AdmissibilityCoordFunction}[1]
{c_{#1}^{\mathsf w}}
\newcommand{\AdmissibilityRateFunction}[1]{\alpha_{#1}}

\newcommand{\AdmissibleBranchCoordDomain}[1]
{\mathcal U_{\mathrm{adm}}^{\mathsf w,(#1)}}

\newcommand{\MinTensionConstraint}[1]
{c_{T,#1}^{\min}}
\newcommand{\MaxTensionConstraint}[1]
{c_{T,#1}^{\max}}
\newcommand{\MinTension}{T_{\min}}
\newcommand{\MaxTension}{T_{\max}}

\newcommand{\CandidateSetpointVelocity}
{\bm\nu^{\mathsf w}}
\newcommand{\OptimalSetpointVelocity}
{\big(\bm\nu^{\mathsf w}\big)^\star}
\newcommand{\MinSetpointVelocity}
{\dot{\bm q}_{\min}^{\mathsf w}}
\newcommand{\MaxSetpointVelocity}
{\dot{\bm q}_{\max}^{\mathsf w}}

\newcommand{\StiffnessRateGain}{\lambda_K}
\newcommand{\VelocityRegularization}{\varepsilon_v}
\newcommand{\NominalStiffnessVelocity}
{\dot{\bm k}_{\mathrm{nom}}}
\newcommand{\DesiredStiffnessVelocity}
{\dot{\bm k}_d}

\newcommand{\StiffnessErrorTolerance}{\varepsilon_K}
\newcommand{\DescentTolerance}{\varepsilon_d}
\newcommand{\VelocityTolerance}{\varepsilon_{\nu}}

\newcommand{\EmpiricalStiffnessMat}
{\bm K_{\mathrm{emp}}^{\mathsf w}}

\newcommand{\PredictedStiffnessMat}
{\bm K_{\mathrm{pred}}^{\mathsf w}}

\newcommand{\StiffnessRelativeError}
{\varepsilon_{K,\mathrm{rel}}}

\newcommand{\NumPerturbations}
{N_{\mathrm p}}

\newcommand{\ForcePerturbation}[1]
{\Delta\bm f_{#1}^{\mathsf w}}

\newcommand{\PositionPerturbation}[1]
{\Delta\bm p_{#1}^{\mathsf w}}

\newcommand{\PredictedDisplacement}[1]
{\Delta\bm p_{\mathrm{pred},#1}^{\mathsf w}}

\newcommand{\EmpiricalDisplacement}[1]
{\Delta\bm p_{\mathrm{emp},#1}^{\mathsf w}}

\newcommand{\TestForce}[1]
{\Delta\bm f_{#1}^{\mathsf w}}

\newtheorem{theorem}{Theorem}
\newtheorem{proposition}{Proposition}

\newtheorem{corollary}{Corollary}
\newtheorem{remark}{Remark}

\usetikzlibrary{
    arrows.meta,
    positioning,
    calc,
    fit,
    backgrounds,
    shapes.geometric,
    decorations.pathmorphing
}

\tikzset{
    theoryblock/.style={
        draw=blue!65!black,
        fill=blue!5,
        rounded corners=1.5pt,
        align=center,
        minimum height=10mm,
        text width=28mm,
        inner sep=3pt,
        font=\footnotesize
    },
    coordinateblock/.style={
        draw=black!65,
        fill=black!4,
        rounded corners=1.5pt,
        align=center,
        minimum height=10mm,
        text width=28mm,
        inner sep=3pt,
        font=\footnotesize
    },
    equilibriumblock/.style={
        draw=orange!80!black,
        fill=orange!10,
        rounded corners=1.5pt,
        align=center,
        minimum height=10mm,
        text width=30mm,
        inner sep=3pt,
        font=\footnotesize
    },
    inputblock/.style={
        draw=orange!80!black,
        fill=orange!5,
        rounded corners=1.5pt,
        align=center,
        inner sep=3pt,
        font=\footnotesize
    },
    layerlabel/.style={
        font=\footnotesize\bfseries,
        text=black!75,
        anchor=east,
        align=right
    },
    diagramnote/.style={
        font=\scriptsize,
        text=black!70,
        align=center
    },
    flowarrow/.style={
        -{Latex[length=2mm]},
        thick,
        draw=black!75
    },
    dependence/.style={
        -{Latex[length=1.8mm]},
        dashed,
        draw=orange!80!black
    },
    representation/.style={
        -{Latex[length=1.8mm]},
        densely dashed,
        draw=black!60
    },
    feasibleblock/.style={
        draw=green!50!black,
        fill=green!8,
        rounded corners=1.5pt,
        align=center,
        minimum height=8mm,
        inner sep=4pt,
        font=\footnotesize
    },
    warningblock/.style={
        draw=red!70!black,
        fill=red!6,
        rounded corners=1.5pt,
        align=center,
        minimum height=8mm,
        inner sep=4pt,
        font=\footnotesize
    }
}

\newcommand{\QuadrotorGlyph}[4]{%
    \begin{scope}[
        shift={#1},
        rotate=#2,
        scale=#3,
        yscale=#4
    ]

        \draw[
            gray!55,
            line width=0.65pt,
            line cap=round
        ]
            (-0.46,-0.46) -- (0.46,0.46);

        \draw[
            gray!65,
            line width=0.75pt,
            line cap=round
        ]
            (-0.46,0.46) -- (0.46,-0.46);

        \draw[
            gray!65,
            fill=gray!12,
            line width=0.6pt,
            rounded corners=0.5pt
        ]
            (-0.13,-0.09)
            rectangle
            (0.13,0.09);

        \foreach \x/\y in {
            -0.46/-0.46,
             0.46/ 0.46,
            -0.46/ 0.46,
             0.46/-0.46
        }{
            \draw[
                gray!55,
                fill=gray!8,
                line width=0.55pt
            ]
                (\x,\y)
                ellipse[
                    x radius=0.22cm,
                    y radius=0.075cm
                ];
        }

        \draw[
            -{Latex[length=0.8mm]},
            gray!55,
            line width=0.45pt
        ]
            (-0.62,-0.46)
            arc[
                start angle=180,
                end angle=20,
                x radius=0.16cm,
                y radius=0.055cm
            ];

        \draw[
            -{Latex[length=0.8mm]},
            gray!55,
            line width=0.45pt
        ]
            (0.62,0.46)
            arc[
                start angle=0,
                end angle=200,
                x radius=0.16cm,
                y radius=0.055cm
            ];

        \draw[
            -{Latex[length=0.8mm]},
            gray!45,
            line width=0.4pt
        ]
            (-0.30,0.46)
            arc[
                start angle=0,
                end angle=-160,
                x radius=0.16cm,
                y radius=0.055cm
            ];

        \draw[
            -{Latex[length=0.8mm]},
            gray!45,
            line width=0.4pt
        ]
            (0.30,-0.46)
            arc[
                start angle=180,
                end angle=340,
                x radius=0.16cm,
                y radius=0.055cm
            ];

    \end{scope}%
}

\newif\ifshowrevs
\showrevsfalse
\newcommand{\rev}[1]{%
     \ifshowrevs\textcolor{blue!65!black}{#1}\else#1\fi}

\title{Passive Stiffness Shaping in Cable-Suspended Aerial Manipulation via Movable Compliant Anchors}

\author{%
Antonio~Franchi$^{1,2}$ and Amr~Afifi$^{3}$%
\thanks{%
$^{1}$ Robotics and Mechatronics Lab, Faculty of Electrical Engineering,
Mathematics \& Computer Science, University of Twente, Enschede,
The Netherlands.
\href{mailto:schol@r-franchi.eu}{schol@r-franchi.eu}.%
}%
\thanks{%
$^{2}$ Department of Computer, Control and Management Engineering,
Sapienza University of Rome, 00185 Rome, Italy.%
}%
\thanks{%
$^{3}$ Department of Information and Computing Sciences,
Utrecht University, Utrecht, The Netherlands.  \href{a.n.m.g.afifi@uu.nl}{a.n.m.g.afifi@uu.nl}%
}%
}

\begin{document}

\maketitle

\begin{abstract}
Cable-suspended aerial manipulation offers a lightweight architecture
for cooperative transportation and physical interaction, yet the
passive mechanical response perceived at the load remains insufficiently
understood and systematically exploited. This work interprets aerial
vehicles as movable compliant anchors and develops a gravity-aware
quasi-static theory for predicting and shaping the passive Cartesian
stiffness of a suspended load. The formulation applies to an arbitrary
number of aerial vehicles connected to a point load by taut, straight,
inextensible cables. At a selected gravity-loaded equilibrium,
aerial-anchor compliance and transverse cable geometric compliance
combine in series within each leg, while the leg stiffnesses act in
parallel on the load. For isotropic aerial-anchor behavior, each leg is
exactly equivalent to a virtual unilateral elastic cable, revealing an
axial--transverse stiffness decomposition governed by the equilibrium
tension. These results define a nonlinear map from commanded-anchor
configuration to passive load stiffness, whose differential enables
local constraint-preserving shaping through anchor repositioning.
\rev{A dynamic rigid-body validation framework with nonlinear vehicle control,
elastic-damped tendons, and environmental contact is defined to assess
when and to what extent the derived stiffness remains predictive beyond
the assumptions of the analytical model.}
\end{abstract}

\begin{IEEEkeywords} 
Aerial manipulation, cable-suspended load, passive compliance, stiffness shaping, 
movable compliant anchors, variable stiffness, cooperative aerial robots.
\end{IEEEkeywords}

\section{Introduction}
\label{sec:introduction}

Cable-suspended aerial manipulation (CSAM) provides a lightweight and
mechanically simple means for teams of aerial robots to transport and
manipulate loads beyond the capabilities of a single vehicle. These
systems are challenging because the load is indirectly actuated,
cable forces are unilateral, and the coupled behavior depends on
geometry, tension, gravity, and vehicle dynamics. In recent years the field has
matured from foundational multi-quadrotor modeling and control to a broad range of transportation and manipulation methods
\cite{SreenathKumar2013RSS,Estevez2024ReviewSuspendedLoads}. 

Against this background, this work introduces a different viewpoint:
the aerial vehicles are regarded as movable compliant anchors whose
commanded positions determine the passive Cartesian stiffness perceived
at the suspended load. Figure~\ref{fig:system_overview} offers an early
preview of this viewpoint and its central mechanism: aerial-anchor
repositioning changes the gravity-loaded cable directions and tensions
that determine, and thereby reshape, the passive stiffness ellipsoid.

\begin{figure}[t]
    \centering

    \begin{tikzpicture}[
        x=1cm,
        y=1cm,
        actualanchor/.style={
            circle,
            draw=blue!70!black,
            fill=blue!65,
            minimum size=2mm,
            inner sep=0pt
        },
        commandedanchor/.style={
            circle,
            draw=blue!70!black,
            fill=white,
            line width=0.8pt,
            minimum size=2.2mm,
            inner sep=0pt
        },
        loadpoint/.style={
            circle,
            draw=black!85,
            fill=black!85,
            minimum size=2mm,
            inner sep=0pt
        },
        complianceconnection/.style={
            draw=blue!65!black,
            line width=0.75pt,
            decorate,
            decoration={
                coil,
                aspect=0.45,
                segment length=1.4mm,
                amplitude=0.8mm
            }
        },
        cable/.style={
            draw=black!80,
            line width=0.9pt
        },
        alternativecable/.style={
            draw=gray!65,
            densely dashed,
            line width=0.75pt
        },
        tensionarrow/.style={
            -{Latex[length=1.5mm]},
            draw=orange!85!black,
            line width=0.9pt
        },
        gravityarrow/.style={
            -{Latex[length=1.7mm]},
            draw=orange!85!black,
            line width=1pt
        },
        worldaxis/.style={
            -{Latex[length=1.25mm]},
            draw=black!70,
            line width=0.65pt
        },
        shapingarrow/.style={
            -{Latex[length=1.8mm]},
            draw=blue!70!black,
            line width=1pt
        },
        paneltitle/.style={
            font=\small\bfseries,
            align=center
        },
        mainlabel/.style={
            font=\footnotesize,
            align=center
        },
        smalllabel/.style={
            font=\footnotesize,
            align=center
        },
        equationnote/.style={
            font=\footnotesize,
            text=black!65,
            align=center
        }
    ]
    \path[use as bounding box]
        (0.00,-4.45)
        rectangle
        (8.40,7.00);

    \node[paneltitle] at (4.20,6.57)
    {(a) Movable compliant aerial-anchor system};
    \QuadrotorGlyph{(1.45,4.90)}{25}{1.22}{0.58}
    \QuadrotorGlyph{(4.70,5.05)}{-2}{1.22}{0.48}
    \QuadrotorGlyph{(6.95,4.90)}{-30}{1.22}{0.58}
    \node[actualanchor] (A1) at (1.45,4.90) {};
    \node[actualanchor] (A2) at (4.70,5.05) {};
    \node[actualanchor] (A3) at (6.95,4.90) {};

    \node[smalllabel, below left=-2mm of A1, xshift=-0mm]
        {$\actualAnchorPoint{1}$};
    \node[smalllabel, right=0mm of A2]
        {$\actualAnchorPoint{2}$};
    \node[smalllabel, below right=0-1mm of A3]
        {$\actualAnchorPoint{3}$};

    \node[commandedanchor] (D1) at (0.95,5.85) {};
    \node[commandedanchor] (D2) at (4.55,6.02) {};
    \node[commandedanchor] (D3) at (7.48,5.65) {};

    \node[smalllabel, left=-1mm of D1]
        {$\desiredAnchorPoint{1}$};
    \node[smalllabel, right=-1mm of D2]
        {$\desiredAnchorPoint{2}$};
    \node[smalllabel, right=-1mm of D3]
        {$\desiredAnchorPoint{3}$};
    \draw[complianceconnection] (A1) -- (D1);
    \draw[complianceconnection] (A2) -- (D2);
    \draw[complianceconnection] (A3) -- (D3);

    \node[
        smalllabel,
        text=blue!70!black,
        text width=20mm
    ] at (2.9,5.43)
        {local controller\\ closed-loop\\stiffness
        $\AnchorStiffness{i}$\\[0.1em]
        Eq.~\eqref{eq:vehicle_equil}};

    \node[loadpoint] (P) at (4.20,2.15) {};

    \node[mainlabel, left=0mm of P]
        {load $\loadPoint$};

    \draw[cable] (P) -- (A1);
    \draw[cable] (P) -- (A2);
    \draw[cable] (P) -- (A3);
    \draw[tensionarrow]
        ($(P)!0.022!(A1)$)
        --
        ($(P)!0.47!(A1)$);

    \node[
        smalllabel,
        anchor=east,
        yshift=-4mm,
        xshift=4mm,
        text width=21mm
    ] at ($(P)!0.58!(A1)+(-0.08,0.03)$)
        {$\cableDir{i}$,\;
        $\cableLength{i}$,\;
        $\cableTension{i}$\\[0.1em]
        Eq.~\eqref{eq:cable_geometry}};
    \node[
        smalllabel,
        text=black!65
    ] at (6.55,3.2)
        {$i=1,\ldots,n$};

    \begin{pgfonlayer}{background}
        \draw[
            draw=green!50!black,
            fill=green!35,
            fill opacity=0.22,
            line width=0.8pt,
            rotate around={18:(P)}
        ]
            (P) ellipse[x radius=1.12cm,y radius=0.43cm];

        \draw[
            draw=green!45!black,
            fill=none,
            densely dashed,
            line width=0.55pt,
            rotate around={18:(P)}
        ]
            (P) ellipse[x radius=0.36cm,y radius=0.43cm];
    \end{pgfonlayer}

    \node[
        smalllabel,
        text=green!35!black,
        anchor=west
    ] at (4.6,2.04)
        {passive stiffness\\
        $\mathcal E_K,\ \TotalStiffness$\\[0.1em]
        Eq.~\eqref{eq:stiffness_compliance_ellipsoids}};

    \draw[gravityarrow]
        (P) -- ++(0,-1.05);

    \node[
        smalllabel,
        text=orange!85!black,
        anchor=west
    ] at (4.34,1.36)
        {$\gravityVector$};

    \node[equationnote] at (4.18,0.82)
        {Eq.~\eqref{eq:gravity_equilibrium}};

    \coordinate (W) at (1.58,1.23);

    \draw[worldaxis] (W) -- ++(0.68,0)
        node[smalllabel, right] {$x_{\mathsf w}$};

    \draw[worldaxis] (W) -- ++(0,0.68)
        node[smalllabel, above] {$z_{\mathsf w}$};

    \draw[worldaxis] (W) -- ++(-0.40,-0.30)
        node[smalllabel, below left] {$y_{\mathsf w}$};

    \node[smalllabel, below=1.2mm of W]
        {$\frameW$};

    \node[commandedanchor] (legendD) at (5.52,1.02) {};
    \node[actualanchor] (legendA) at (5.52,0.67) {};

    \node[smalllabel, anchor=west] at (5.72,1.02)
        {commanded anchor};

    \node[smalllabel, anchor=west] at (5.72,0.67)
        {actual anchor};

    \node[paneltitle] at (4.20,-0.10)
        {(b) Passive load-stiffness shaping by anchor repositioning};

    \begin{scope}[yshift=-3mm]

    \coordinate (PL) at (1.72,-1.73);

    \node[loadpoint] at (PL) {};

    \node[commandedanchor] (LD1) at (0.72,-0.64) {};
    \node[commandedanchor] (LD2) at (1.72,-0.42) {};
    \node[commandedanchor] (LD3) at (2.72,-0.64) {};

    \draw[cable] (PL) -- (LD1);
    \draw[cable] (PL) -- (LD2);
    \draw[cable] (PL) -- (LD3);

    \draw[
        draw=green!50!black,
        fill=green!35,
        fill opacity=0.24,
        line width=0.7pt
    ]
        (PL) ellipse[x radius=0.72cm,y radius=0.28cm];

    \node[mainlabel, align=left] at (0.92,-2.38)
        {initial\\setpoints
        $\setpointTuple$};

    \draw[shapingarrow]
        (3.04,-0.95) -- (5.33,-0.98);

    \node[
        smalllabel,
        text width=25mm
    ] at (4.18,-0.45)
        {aerial-anchor\\repositioning};

    \node[
        smalllabel,
        text width=31mm,
        text=orange!80!black
    ] at (4.18,-1.40)
        {changes directions and\\
        gravity-loaded tensions};

    \coordinate (PR) at (6.62,-1.73);
    \node[loadpoint] at (PR) {};

    \node[commandedanchor] (RD1) at (5.82,-0.42) {};
    \node[commandedanchor] (RD2) at (6.62,-0.76) {};
    \node[commandedanchor] (RD3) at (7.42,-0.42) {};

    \draw[cable] (PR) -- (RD1);
    \draw[cable] (PR) -- (RD2);
    \draw[cable] (PR) -- (RD3);
    \draw[
        draw=green!50!black,
        fill=green!35,
        fill opacity=0.24,
        line width=0.7pt,
        rotate around={58:(PR)}
    ]
        (PR) ellipse[x radius=0.82cm,y radius=0.24cm];

    \node[mainlabel, align=right] at (7.42,-2.38)
        {repositioned\\ setpoints
        ${\setpointTuple}'$};
    \draw[
        gray!65,
        densely dashed,
        line width=0.6pt
    ]
        (PL) -- (PR);

    \node[
        smalllabel,
        fill=white,
        inner sep=1pt,
        text=gray!70!black
    ] at (4.18,-2.67)
        {same regulated equilibrium position};

    \node[
        draw=black!30,
        fill=black!2,
        rounded corners=1.5pt,
        align=center,
        font=\footnotesize,
        text width=71mm,
        minimum height=8mm,
        inner sep=2.5pt
    ] at (4.20,-3.53)
        {$\displaystyle
        \setpointTuple
        \longmapsto
        \{
        \cableDir{i},
        \cableTension{i}
        \}_{i=1}^{n}
        \longmapsto
        \TotalStiffness$\\[0.6mm]
        {\footnotesize
        Eqs.~\eqref{eq:gravity_equilibrium}
        and~\eqref{eq:total_stiffness}}};

    \end{scope}

    \begin{pgfonlayer}{background}
        \path[
            draw=blue!25,
            fill=blue!1,
            rounded corners=2pt
        ]
            (0.10,0.36)
            rectangle
            (8.30,6.92);
        \path[
            draw=green!35!black,
            fill=green!1,
            rounded corners=2pt
        ]
            (0.10,-4.36)
            rectangle
            (8.30,0.20);

    \end{pgfonlayer}

    \end{tikzpicture}

    \caption{System architecture and passive-stiffness-shaping
    principle. Each taut cable connects the load point $\loadPoint$ to
    an actual aerial-anchor point $\actualAnchorPoint{i}$, while the
    finite closed-loop stiffness of aerial vehicle $i$ relates this
    point to its commanded anchor $\desiredAnchorPoint{i}$. Gravity
    determines the equilibrium cable directions and tensions.
    Repositioning the commanded anchors changes these operating-point
    quantities and thereby reshapes the passive Cartesian stiffness
    ellipsoid at the load. The three-vehicle drawing represents the
    general $n$-vehicle system.}

    \label{fig:system_overview}
\end{figure}

\subsubsection{A Quick Tour of Cable-Suspended Aerial Manipulation}

Foundational studies established geometric control and differential
flatness for a quadrotor carrying a cable-suspended load, together with
trajectory optimization for agile suspended-payload maneuvers
\cite{SreenathLeeKumar2013CDC,Foehn2017RSS}. For cooperative
transportation, control and vision-based coordination were developed
for two-vehicle systems, while robust position tracking addressed
unknown wind forces
\cite{PereiraDimarogonas2017CDC,Gassner2017ICRA,
PereiraCortesDimarogonas2020TAC}.

Subsequent research extended the focus from load translation to
cooperative manipulation and physical interaction. Internal cable
forces were characterized in relation to equilibrium, stability, and
passivity, and robust full-pose control was developed for uncertain
multi-UAV suspended-load systems
\cite{TognonGabellieri2018RAL,Sanalitro2020RAL}. Inertial estimation,
energy-aware control, and precise pick-and-place supported increasingly
accurate manipulation
\cite{Petitti2020ICUAS,JimenezCano2022JIRS}. Indirect force control,
equilibrium-sensitivity analysis, and force-based pose regulation
further addressed interaction forces and modeling uncertainty
\cite{Sanalitro2022RAL,Gabellieri2023TRO,Gabellieri2023IROS}.

Recent work has increased the agility, scalability, and autonomy of
cooperative suspended-load systems. Distributed trajectory
optimization, nonlinear model predictive control, and whole-body
kinodynamic planning have enabled scalable coordination and agile
coupled motion
\cite{Jackson2020RAL,LiLoianno2023IROS,DeCarli2025RAL,
Sun2025ScienceRobotics}. Integrated planning and distributed tracking
have further addressed safe transportation in complex environments
\cite{WangGao2025TRO}. Learning-based alternatives include
decentralized multi-agent policies and direct motor-control policies,
while physical human--robot collaboration has also been investigated
with multiple aerial robots
\cite{Zeng2025CoRL,Lorentz2026ICRA,Li2025TRO}.

\subsubsection{An Important Question Remains Open}

The literature reviewed above has established powerful methods for
motion feasibility, trajectory tracking, agility, coordination, and
robustness in cable-suspended aerial systems. These achievements remain
fundamental to reliable transportation and manipulation. As aerial
robots increasingly engage in physical interaction, however, the
mechanical response of the system to contact and disturbances becomes
an equally important consideration. Compliance can accommodate
geometric uncertainty, limit interaction forces, and reduce task
sensitivity to tracking errors.

Existing aerial-robotic approaches have generated compliant behavior
mainly through active feedback. Passivity-based compliance control has
been developed for aerial vehicles carrying articulated manipulators,
while hierarchical control has been used to impose compliant
interaction on a cable-suspended aerial manipulator
\cite{Kim2018PassiveComplianceAerialManipulators,
Gabellieri2020ComplianceSAM}. Indirect force control and precise
pick-and-place further demonstrate the importance of
feedback-controlled interaction behavior
\cite{Sanalitro2022RAL,JimenezCano2022JIRS}.

\emph{Active} compliance necessarily acts through sensing, computation,
actuation, and finite closed-loop bandwidth. Shaping the \emph{passive}
compliance of the physical system offers a complementary
opportunity: it modifies the local mechanical response at the load
before an outer feedback loop reacts. Passive stiffness shaping may
therefore be combined with active control to improve interaction
performance, disturbance accommodation, and safety. This viewpoint is
central to soft robotics and variable-stiffness actuation, where
adjustable physical compliance supports the performance--safety
trade-off
\cite{BicchiTonietti2004FastSoftArm}. In CSAM, it motivates the following
questions:
\begin{quote}
    {\it How can the passive compliance generated by the physical
    aerial-vehicle--cable network be \emph{predicted}, and how can it be
    systematically \emph{shaped} in a CSAM multi-aerial-robot system?}
\end{quote}
The remainder of this introduction identifies the relevant foundations, the distinguishing difficulty of CSAM, and the route followed in this work to answer these questions.

\subsubsection{What Can Be Learned from Neighboring Fields?}

Cable-driven parallel robots (CDPRs) provide the closest established
framework for addressing these questions. General parallel-robot and
cable-robot theory develops the underlying concepts of kinetostatics,
unilateral actuation, workspace, force distribution, and stiffness
\cite{Merlet2006ParallelRobots,Pott2018CableDrivenParallelRobots}.
Reference volumes and recent reviews consolidate the modeling, design,
control, and technological development of these systems
\cite{BruckmannPott2013CableRobots,GosselinCardou2017CableRobots,
Wang2025ReviewCDPR}.

Within this field, interval methods have addressed wrench-feasible
workspaces and direct geometrico-static computation
\cite{GouttefardeDaneyMerlet2011TRO,BertiMerletCarricato2016IJRR}.
Force control and cable dynamics have been studied using elastic,
flexible, and time-varying-length cable models
\cite{Kraus2016Thesis,Tempel2019Thesis}. Variable-stiffness
formulations and internal-force-based impedance control explicitly
treat stiffness as a design and control objective
\cite{KhosraviTaghirad2014VariableStiffness,Reichert2014Impedance},
while recent contributions have investigated stiffness-oriented
formation design and experimentally validated planar cable-robot
stiffness models
\cite{Dona2025StiffnessCDPR,Arslan2025StiffnessCDPR}.
Related aerial architectures include quadrotor-driven cable parallel
robots, cable-towed systems, and suspended macro--mini manipulators.
Their wrench capability, configuration planning, architectural
variability, and dynamic control have been investigated in
\cite{Erskine2019JMR,Erskine2019ICRA,LiErskine2020RAL,
Yigit2021IROS,Yigit2023TRO}. These results reinforce the relevance of
cable-robot concepts to aerial systems while also exposing the
architectural differences considered next.

\subsubsection{The Key Difference between CSAM and CDPR: Movable  Anchors}

CDPRs generally keep the cable-anchor positions fixed and vary
the cable lengths or tensions. CSAM systems instead use cables of
prescribed length whose aerial endpoints move in three dimensions:
fixed anchor and variable cable length become variable anchor position
and fixed cable length.
This difference changes the nature of cable-force generation. A
winch-driven actuator acts directly along a one-dimensional cable
coordinate. An aerial vehicle moves the cable endpoint through its
controlled spatial motion, arising from the coupled translational,
rotational, actuation, and feedback dynamics of the vehicle. The
endpoint therefore cannot be treated immediately as either an ideal
fixed anchor or a direct scalar cable actuator. \emph{How much of this
closed-loop aerial-vehicle behavior must be retained} to predict the
mechanical response perceived at the suspended load, and \emph{which
mathematically tractable local representation preserves the relevant physics}?

A second difficulty follows from the mobility of the anchors. Moving
an aerial endpoint changes the cable direction and the force required
to support the load under gravity. Conversely, a load displacement
changes the cable forces acting on the aerial endpoints. Anchor motion,
cable geometry, gravity-loaded equilibrium, and the passive response at
the load are therefore coupled rather than independently assignable.
CDPR theory provides \emph{essential concepts} and \emph{analytical tools} for
formulating these relationships, but \emph{it does not directly resolve them
for CSAM}.

\subsubsection{Contributions: What This Work Establishes, and Where It Stops}

The complexity identified above makes some reduction unavoidable if a
general analytical answer is sought. The central modeling hypothesis is that, near a
selected operating point and under quasi-static evolution, the
aerial-vehicle behavior relevant to the passive load response is its
local closed-loop translational force--displacement relation at the
cable endpoint. We represent this relation by a symmetric
positive-definite Cartesian stiffness map, while abstracting the
rotational, actuation, and transient dynamics that produce it. This
modeling choice retains three-dimensional, direction-dependent
endpoint compliance and provides a tractable description of an
arbitrary number of movable compliant aerial anchors
(Section~\ref{sec:system_model}).

Within this reduced model, passive stiffness can first be
\emph{predicted}. Gravity selects the equilibrium cable directions and
tensions at which the local response is evaluated. Aerial-anchor
compliance and transverse cable geometric compliance then combine in
series within each leg, while the leg stiffnesses act in parallel on
the load. This structure yields the passive Cartesian stiffness
perceived at the suspended load and identifies the physical mechanisms
contributing to it
(Section~\ref{sec:passive_cartesian_stiffness}).

The isotropic case further exposes the structure hidden in the general
expression. Each leg becomes exactly equivalent to a virtual unilateral
elastic cable connected directly to the commanded anchor. The
resulting axial--transverse decomposition shows how cable direction and
gravity-loaded tension determine the magnitude and anisotropy of the
passive stiffness
(Sections~\ref{sec:virtual_elastic_cable_equivalence}
and~\ref{sec:stiffness_decomposition}).

Prediction then makes systematic \emph{shaping} meaningful. Along a
selected equilibrium branch, the commanded-anchor configuration
induces a nonlinear gravity-aware stiffness map. Its differential
characterizes the stiffness variations locally generated by
commanded-anchor motion and supports constraint-preserving local
regulation. Thus, the theory answers the two central questions in
sequence: it derives the passive stiffness generated by the physical
network and then identifies how aerial-anchor repositioning can modify
it
(Sections~\ref{sec:gravity_aware_stiffness_map}
and~\ref{sec:quasistatic_stiffness_control}).

This analytical clarity carries a corresponding risk. The theory is
quasi-static and translational and considers a point load, taut,
straight, inextensible cables, and locally constant aerial-anchor
stiffness maps. It excludes rigid-load rotation, vehicle and load
transients, cable elasticity and damping, slack--taut transitions, and
environmental contact. The derived stiffness could therefore be exact
for the reduced model yet fail to capture the dominant response of the
richer dynamic system
(Section~\ref{subsec:scope_validation}).

\rev{A dynamic rigid-body validation campaign is defined as a \emph{stress test},}
rather than as a removal of the theoretical limitations. They
reintroduce rigid-payload motion, nonlinear closed-loop aerial-vehicle
dynamics, elastic-damped tendons, and environmental contact. By
comparing analytical and empirically identified stiffnesses, testing
aerial-anchor repositioning, and exploiting task-aligned compliance,
the campaign is intended to assess when and to what extent the reduced theory captures a
mechanically significant structure that persists beyond its defining
assumptions
(Section~\ref{sec:dynamic_simulation_validation}).
\section{System Model}
\label{sec:system_model}

\subsection{Affine and Coordinate Conventions}
\label{subsec:affine_coordinate_convention}

Let $\Eaff$ be an oriented three-dimensional Euclidean affine space,
and let $\Vtrans$ be its translation vector space, endowed with the
inner product $\inner{\cdot}{\cdot}$ and induced norm $\norm{\cdot}$.
Physical positions are points of $\Eaff$, whereas displacements,
velocities, virtual displacements, and cable directions are vectors in
$\Vtrans$. 
The Euclidean inner product induces the musical
isomorphisms $\flat:\Vtrans\to\Vdual$ and
$\sharp:\Vdual\to\Vtrans$, which identify vectors and covectors through
$v\flatop(w)=\inner{v}{w}$ and
$(\alpha\sharpop)\flatop=\alpha$.
For $P_1,P_2\in\Eaff$, the displacement from $P_1$ to $P_2$
is denoted by $\disp{P_1}{P_2}\in\Vtrans$. The affine action of
$v\in\Vtrans$ on $P\in\Eaff$ is denoted by $P+v\in\Eaff$.

Since $\Eaff$ is affine, $T_P\Eaff$ is canonically identified with
$\Vtrans$ for every $P\in\Eaff$. Hence, if $s\mapsto P(s)$ is a smooth
curve with $P(0)=P$, its tangent vector
$\delta p:=\left.\mathrm dP(s)/\mathrm ds\right|_{s=0}$ is regarded as
an element of $\Vtrans$.

Let
$\frameW=\{\originW,x_{\mathsf w},y_{\mathsf w},z_{\mathsf w}\}$
be a right-handed orthonormal inertial frame. The coordinate array of
$v\in\Vtrans$ and the position-coordinate array of $P\in\Eaff$ are
defined by
\begin{equation}
\begin{aligned}
    \coord{\fw}{v}
    &:=
    \begin{bmatrix}
        \inner{x_{\mathsf w}}{v} &
        \inner{y_{\mathsf w}}{v} &
        \inner{z_{\mathsf w}}{v}
    \end{bmatrix}\topT,
    \\
    \pointcoord{\fw}{P}
    &:=
    \coord{\fw}{\disp{\originW}{P}},
    \qquad
    \coord{\fw}{\disp{P_1}{P_2}}
    =
    \pointcoord{\fw}{P_2}-\pointcoord{\fw}{P_1}.
\end{aligned}
\label{eq:coordinate_convention}
\end{equation}
Intrinsic points, vectors, covectors, and maps are unbolded. Coordinate
arrays and matrices are bold and carry their expressing-frame
superscript. For a smooth map $\phi:\mathcal M\to\mathcal N$, its
differential at $x\in\mathcal M$ is denoted by
$d\phi_x:T_x\mathcal M\to T_{\phi(x)}\mathcal N$.

\begin{remark}[Intrinsic and coordinate descriptions]
\label{rem:intrinsic_coordinate_descriptions}
The intrinsic and coordinate descriptions serve distinct and
complementary purposes. The intrinsic formulation preserves the
physical types of points, displacements, forces, and stiffness maps,
making frame changes, dual operations, and pullbacks unambiguous.
Coordinate representations provide the arrays and matrices required
for computation and control. Maintaining both levels prevents
formally plausible but physically inconsistent operations, such as
re-expressing a displacement in another frame while leaving its
stiffness matrix unchanged, and facilitates extensions to generalized
coordinates and rigid-body models.
\end{remark}
\subsection{Geometry and Cable Forces}
\label{subsec:geometry_cable_forces}

Consider a point load of mass $\loadMass>0$ at $\loadPoint\in\Eaff$,
connected to $n\geq1$ aerial vehicles through taut, straight,
inextensible cables. Cable $i$ has fixed length
$\cableLength{i}>0$, actual aerial-anchor point
$\actualAnchorPoint{i}\in\Eaff$, and commanded aerial-anchor point
$\desiredAnchorPoint{i}\in\Eaff$. Their world-frame position arrays
are denoted by $\loadPos$, $\actualAnchorPos{i}$, and
$\desiredAnchorPos{i}$, respectively.
Figure~\ref{fig:system_overview} summarizes the system notation and the passive-stiffness-shaping mechanism investigated in this work.

The taut inextensible-cable constraint manifold is
\begin{equation}
    \CableManifold{i}
    :=
    \left\{
    (P,A)\in\Eaff\times\Eaff:
    \norm{\disp{P}{A}}=\cableLength{i}
    \right\}.
    \label{eq:cable_constraint_manifold}
\end{equation}
Let $\cableDir{i}\in\SphTwo\subset\Vtrans$ be the unit direction from
the load to vehicle $i$. Intrinsically and in world-frame coordinates,
\begin{equation}
\begin{aligned}
    \disp{\loadPoint}{\actualAnchorPoint{i}}
    &=
    \cableLength{i}\cableDir{i},
    \\
    \actualAnchorPos{i}
    &=
    \loadPos+\cableLength{i}\cableDirCoord{i},
    \qquad
    \norm{\cableDirCoord{i}}=1.
\end{aligned}
\label{eq:cable_geometry}
\end{equation}
The parametrization
$(P,u)\mapsto(P,P+\cableLength{i}u)$ identifies
$\CableManifold{i}$ with $\Eaff\times\SphTwo$.

The tangent space at $\cableDir{i}$ is
$T_{\cableDir{i}}\SphTwo
=\{v\in\Vtrans:\inner{\cableDir{i}}{v}=0\}$.
Let $\PerpProjector{i}$ denote the orthogonal projector onto this
space:
\[
    \PerpProjector{i}v
    :=
    v-\inner{\cableDir{i}}{v}\cableDir{i},
    \qquad
    \PerpProjectorMat{i}
    =
    \eye{3}
    -
    \cableDirCoord{i}\big(\cableDirCoord{i}\big)\topT.
\]
For an endpoint variation, differentiation of the normalized
endpoint-difference map gives
\begin{equation}
\begin{aligned}
    \delta u_i
    &=
    \frac{1}{\cableLength{i}}
    \PerpProjector{i}
    (\delta p_{a,i}-\delta p),
    \\
    \delta\cableDirCoord{i}
    &=
    \frac{1}{\cableLength{i}}
    \PerpProjectorMat{i}
    \left(
    \delta\actualAnchorPos{i}-\delta\loadPos
    \right).
\end{aligned}
\label{eq:cable_direction_differential}
\end{equation}
For a variation tangent to $\CableManifold{i}$, the relative endpoint
variation is already transverse, and therefore
$\delta p_{a,i}-\delta p=\cableLength{i}\delta u_i$.

Let $\cableTension{i}>0$ be the cable tension. The force applied by
cable $i$ on the load is intrinsically the covector
$\cableForceCovector{i}\in\Vdual$. Using the Euclidean identification
between $\Vtrans$ and $\Vdual$, define its dual vector
$\cableForceVector{i}\in\Vtrans$. These quantities satisfy
\begin{equation}
    \cableForceCovector{i}
    =
    \cableTension{i}\big(\cableDir{i}\big)\flatop,
    \qquad
    \cableForceVector{i}
    =
    \cableTension{i}\cableDir{i},
    \qquad
    \cableForceCoord{i}
    =
    \cableTension{i}\cableDirCoord{i}.
    \label{eq:cable_force}
\end{equation}
Hereafter, force relations are written using Euclidean-dual force
vectors or their coordinate arrays unless the covector character is
relevant.

\begin{remark}[Virtual-work interpretation of cable tension]
\label{rem:virtual_work_cable_tension}
Let
$\CableLengthMap{i}(P_1,P_2):=\norm{\disp{P_1}{P_2}}$.
At a configuration
$(\loadPoint,\actualAnchorPoint{i})\in\CableManifold{i}$,
its differential acts on arbitrary ambient endpoint variations as
\begin{equation}
    d\CableLengthMap{i}(\delta p,\delta p_{a,i})
    =
    \inner{\cableDir{i}}{\delta p_{a,i}-\delta p}
    =\inner{\cableDir{i}}{\delta p_{a,i}}-\inner{\cableDir{i}}{\delta p}
    .
\label{eq:cable_length_differential}
\end{equation}
Thus,
$d\CableLengthMap{i}
=
(-(\cableDir{i})\flatop,(\cableDir{i})\flatop)$
under the product-space identification
$T^*(\Eaff\times\Eaff)\simeq\Vdual\times\Vdual$.
The ideal internal cable-force covector is therefore
\[
    -\cableTension{i}d\CableLengthMap{i}
    =
    \left(
    \cableTension{i}(\cableDir{i})\flatop,
    -\cableTension{i}(\cableDir{i})\flatop
    \right),
\]
whose components are the equal-and-opposite forces applied to the load
and vehicle endpoints. For every tangent variation of
$\CableManifold{i}$, one has
$d\CableLengthMap{i}(\delta p,\delta p_{a,i})=0$; hence the ideal cable
force performs zero net virtual work on admissible variations of the
two-endpoint system.
\end{remark}

The transverse geometric-stiffness coefficient of cable $i$ is
\begin{equation}
    \geometricStiffnessCoeff{i}
    :=
    \cableLength{i}^{-1}\cableTension{i},
    \label{eq:geometric_stiffness_coefficient}
\end{equation}
with units of $\mathrm{N/m}$. It quantifies the restoring effect of the
equilibrium tension against transverse cable-direction variations.

\subsection{Compliant Aerial-Anchor Model}
\label{subsec:compliant_anchor_model}

The closed-loop translational behavior of aerial vehicle $i$ is
modeled quasi-statically by a linear stiffness map
$\AnchorStiffness{i}:\Vtrans\to\Vdual$, whose associated bilinear form
$(v,w)\mapsto(\AnchorStiffness{i}v)(w)$ is symmetric and positive
definite. Its inverse
$\AnchorCompliance{i}:=\AnchorStiffness{i}^{-1}:\Vdual\to\Vtrans$
is the corresponding linear compliance map.
Their world-frame matrices satisfy
\begin{equation}
    \AnchorStiffnessMat{i}\in\Spp(3),
    \qquad
    \AnchorComplianceMat{i}
    =
    \big(\AnchorStiffnessMat{i}\big)^{-1}.
    \label{eq:vehicle_stiffness}
\end{equation}
Symmetry and positive definiteness are understood with respect to the
dual pairing, i.e.,
$(\AnchorStiffness{i}v_1)(v_2)
=(\AnchorStiffness{i}v_2)(v_1)$ and
$(\AnchorStiffness{i}v)(v)>0$ for every nonzero
$v\in\Vtrans$.

At equilibrium, the vehicle-controller restoring force balances the
cable force. The intrinsic and coordinate relations are
\begin{equation}
\begin{aligned}
    \AnchorStiffness{i}
    \left(
    \disp{\actualAnchorPoint{i}}{\desiredAnchorPoint{i}}
    \right)
    &=
    \cableForceCovector{i},
    \\
    \AnchorStiffnessMat{i}
    \left(
    \desiredAnchorPos{i}-\actualAnchorPos{i}
    \right)
    &=
    \cableForceCoord{i}.
\end{aligned}
\label{eq:vehicle_equil}
\end{equation}
Equivalently,
\begin{equation}
\begin{aligned}
    \disp{\actualAnchorPoint{i}}{\desiredAnchorPoint{i}}
    &=
    \AnchorCompliance{i}\big(\cableForceCovector{i}\big),
    \\
    \actualAnchorPos{i}
    &=
    \desiredAnchorPos{i}
    -
    \AnchorComplianceMat{i}\cableForceCoord{i}.
\end{aligned}
\label{eq:vehicle_yield}
\end{equation}

\subsection{Gravity-Loaded Equilibrium}
\label{subsec:gravity_loaded_equilibrium}

Let $\gravityVector,\externalVector\in\Vtrans$ be the Euclidean-dual
vectors of gravity and an additional constant external-force covector.
The positive direction $z_{\mathsf w}$ is chosen opposite to gravity,
so that
\[
    \gravityVector=-\loadMass\gravityScalar z_{\mathsf w},
    \qquad
    \gravityCoord
    =
    -\loadMass\gravityScalar
    \begin{bmatrix}0&0&1\end{bmatrix}\topT.
\]
The intrinsic and coordinate load-equilibrium equations are
\begin{equation}
\begin{aligned}
    \sum_{i=1}^{n}\cableForceVector{i}
    +\gravityVector+\externalVector
    &=
    0_{\Vtrans},
    \\
    \sum_{i=1}^{n}\cableForceCoord{i}
    +\gravityCoord+\externalCoord
    &=
    \zerovec{3}.
\end{aligned}
\label{eq:gravity_equilibrium}
\end{equation}

Let
$\setpointTuple
:=
(\desiredAnchorPoint{1},\ldots,\desiredAnchorPoint{n})
\in\Eaff^n$
collect the commanded aerial-anchor points, with stacked world-frame
coordinate array
\begin{equation}
    \setpointCoord
    :=
    \col{
        \desiredAnchorPos{1},
        \ldots,
        \desiredAnchorPos{n}
    }
    \in\R^{3n}.
    \label{eq:q_definition}
\end{equation}

For a prescribed $\setpointTuple$, let
$\EquilibriumSet(\setpointTuple)$ denote the set of tuples
\[
    \left(
    \loadPoint,
    \{\actualAnchorPoint{i},\cableTension{i}\}_{i=1}^{n}
    \right)
\]
satisfying, for every $i=1,\ldots,n$,
\begin{equation}
\begin{aligned}
    \disp{\loadPoint}{\actualAnchorPoint{i}}
    &=
    \cableLength{i}\cableDir{i},
    &\cableDir{i}&\in\SphTwo,
    \\
    \disp{\actualAnchorPoint{i}}{\desiredAnchorPoint{i}}
    &=
    \AnchorCompliance{i}
    \left(
        \cableTension{i}
        (\cableDir{i})\flatop
    \right),
    &\cableTension{i}&>0,
    \\
    \sum_{j=1}^{n}
    \cableTension{j}\cableDir{j}
    +\gravityVector+\externalVector
    &=
    0_{\Vtrans}. &&
\end{aligned}
\label{eq:equilibrium_set}
\end{equation}
All coordinate equations defining the set $\EquilibriumSet(\setpointTuple)$ are given by
\eqref{eq:cable_geometry}, \eqref{eq:vehicle_yield}, and
\eqref{eq:gravity_equilibrium}.

Depending on $\setpointTuple$, the set
$\EquilibriumSet(\setpointTuple)$ may be empty, contain one or several
isolated equilibria, or contain an equilibrium continuum. The
stiffness analysis below is local and is evaluated at a selected
element of this set or along a selected smooth equilibrium branch.

\begin{remark}[Equilibrium before stiffness]
Passive stiffness is a local force--displacement relation at a
selected gravity-loaded equilibrium. Gravity is not an additive
stiffness term, but it determines the operating-point cable directions
and tensions entering the stiffness expression.
\end{remark}
\section{Passive Cartesian Stiffness at a Gravity-Loaded Equilibrium}
\label{sec:passive_cartesian_stiffness}

Select a taut gravity-loaded equilibrium
\[
    \left(
    \loadPointEq,
    \{\actualAnchorPointEq{i},T_{i,\mathrm e}\}_{i=1}^{n}
    \right)
    \in
    \EquilibriumSet(\setpointTuple).
\]
All quantities in this section are evaluated at this equilibrium, and
the equilibrium subscript is omitted for readability. The vehicle
setpoints are held fixed while the load and actual aerial anchors
undergo infinitesimal quasi-static variations.

\subsection{Geometric Compliance of One Leg}
\label{subsec:single_leg_geometric_compliance}

Hereafter, \emph{leg $i$} denotes the serial interconnection formed by
cable $i$ and its compliant aerial anchor. The Euclidean-dual
cable-force vector of leg $i$ satisfies
$\cableForceVector{i}
=\cableTension{i}\cableDir{i}$. 
Its variation is 
\begin{equation}
    \delta\cableForceVector{i}
    =
    \delta\cableTension{i}\cableDir{i}
    +
    \cableTension{i}\delta u_i.
    \label{eq:cable_force_variation}
\end{equation}
Since $\delta u_i\in T_{\cableDir{i}}\SphTwo$, projection onto the
transverse space gives
\[
    \PerpProjector{i}\delta\cableForceVector{i}
    =
    \cableTension{i}\delta u_i.
\]
Combining this relation with
$\delta p_{a,i}-\delta p=\cableLength{i}\delta u_i$ yields
\begin{equation}
    \delta p_{a,i}-\delta p
    =
    \GeometricCompliance{i}
    \big(\delta\cableForceCovector{i}\big),
    \qquad
    \GeometricCompliance{i}
    :=
    \frac{\cableLength{i}}{\cableTension{i}}
    \PerpProjector{i}\circ\sharp.
    \label{eq:intrinsic_geometric_compliance}
\end{equation}
Thus,
$\GeometricCompliance{i}:\Vdual\to\Vtrans$ is the transverse
geometric-compliance map of cable $i$. Its world-frame coordinate
matrix is
\begin{equation}
    \GeometricComplianceMat{i}
    =
    \frac{\cableLength{i}}{\cableTension{i}}
    \PerpProjectorMat{i}
    =
    \geometricStiffnessCoeff{i}^{-1}
    \PerpProjectorMat{i}.
    \label{eq:geometric_compliance_matrix}
\end{equation}
The map is positive semidefinite and vanishes in the cable direction, consistently with cable inextensibility.

\subsection{Series Combination with Aerial-Anchor Compliance}
\label{subsec:series_compliance}

At fixed commanded point, differentiation of
\eqref{eq:vehicle_yield} gives
\begin{equation}
    \delta p_{a,i}
    =
    -\AnchorCompliance{i}
    \big(\delta\cableForceCovector{i}\big),
    \qquad
    \delta\actualAnchorPos{i}
    =
    -\AnchorComplianceMat{i}
    \delta\cableForceCoord{i}.
    \label{eq:anchor_variation_fixed_setpoint}
\end{equation}
Substitution into \eqref{eq:intrinsic_geometric_compliance} yields
\[
    -\delta p
    =
    \left(
    \AnchorCompliance{i}
    +
    \GeometricCompliance{i}
    \right)
    \big(\delta\cableForceCovector{i}\big).
\]

\begin{theorem}[Single-leg passive stiffness]
\label{thm:single_leg_passive_stiffness}
At a taut gravity-loaded equilibrium, the passive Cartesian stiffness
contribution of leg $i$ is the symmetric positive-definite linear map
\begin{equation}
    \LegStiffness{i}
    :=
    \left(
    \AnchorCompliance{i}
    +
    \GeometricCompliance{i}
    \right)^{-1}
    :
    \Vtrans\to\Vdual,
    \label{eq:intrinsic_single_leg_stiffness}
\end{equation}
defined by the restoring-force relation
$\delta\cableForceCovector{i}
=-\LegStiffness{i}(\delta p)$.
Its world-frame coordinate matrix is
\begin{equation}
    \LegStiffnessMat{i}
    =
    \left[
    \AnchorComplianceMat{i}
    +
    \frac{\cableLength{i}}{\cableTension{i}}
    \left(
    \eye{3}
    -
    \cableDirCoord{i}
    \big(\cableDirCoord{i}\big)\topT
    \right)
    \right]^{-1}.
    \label{eq:single_leg_stiffness}
\end{equation}
\end{theorem}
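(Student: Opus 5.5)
The proof combines the relation $-\delta p=(\AnchorCompliance{i}+\GeometricCompliance{i})(\delta\cableForceCovector{i})$ derived just before the statement with a short linear-algebra argument. That argument shows the sum compliance $\mathcal C_i:=\AnchorCompliance{i}+\GeometricCompliance{i}:\Vdual\to\Vtrans$ is symmetric and positive definite, hence invertible with a symmetric positive-definite inverse. Once invertibility is in hand, the restoring-force relation $\delta\cableForceCovector{i}=-\LegStiffness{i}(\delta p)$ follows by applying $\mathcal C_i^{-1}$ to both sides. The coordinate formula \eqref{eq:single_leg_stiffness} then follows from \eqref{eq:vehicle_stiffness} and \eqref{eq:geometric_compliance_matrix}. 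The reason is that, in the orthonormal frame $\frameW$, both $\flat$ and $\sharp$ are represented by $\eye{3}$, so the matrix of a sum and of an inverse is the sum and inverse of the matrices.

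The first step is to check symmetry and definiteness of each summand with respect to the dual pairing, i.e. of the bilinear forms $(\alpha,\beta)\mapsto\beta(\mathcal C\alpha)$ on $\Vdual$.
\begin{itemize}[leftmargin=*]
\item For $\AnchorCompliance{i}=\AnchorStiffness{i}^{-1}$, set $v=\AnchorCompliance{i}\alpha$ and $w=\AnchorCompliance{i}\beta$. Then $\beta(\AnchorCompliance{i}\alpha)=(\AnchorStiffness{i}w)(v)$, which is symmetric, and $\alpha(\AnchorCompliance{i}\alpha)=(\AnchorStiffness{i}v)(v)>0$ for $\alpha\neq0$. This uses only the stated properties of $\AnchorStiffness{i}$.
\item For $\GeometricCompliance{i}=(\cableLength{i}/\cableTension{i})\PerpProjector{i}\circ\sharp$, one has $\beta(\GeometricCompliance{i}\alpha)=(\cableLength{i}/\cableTension{i})\inner{\beta\sharpop}{\PerpProjector{i}\alpha\sharpop}$. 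This is symmetric because $\PerpProjector{i}$ is an orthogonal projector, and it is nonnegative because $\cableTension{i}>0$ at a taut equilibrium.
\end{itemize}
The sum of a positive-definite and a positive-semidefinite symmetric form is positive definite. Its kernel is therefore trivial, so $\mathcal C_i$ is injective between equal-dimensional spaces and thus a bijection.

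The second step is to transfer these properties to the inverse. For $v,w\in\Vtrans$, write $v=\mathcal C_i\alpha$ and $w=\mathcal C_i\beta$. Then $(\LegStiffness{i}v)(w)=\alpha(\mathcal C_i\beta)$, so symmetry and positive definiteness of $\LegStiffness{i}$ follow directly from those of $\mathcal C_i$.

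The main obstacle is interpretive rather than computational. We must justify that the pre-statement relation really characterizes the full response to an arbitrary $\delta p\in\Vtrans$ with setpoints fixed. This requires combining three ingredients: the tangent-variation identity $\delta p_{a,i}-\delta p=\cableLength{i}\delta u_i$ (from \eqref{eq:cable_constraint_manifold}), the transverse projection $\PerpProjector{i}\delta\cableForceVector{i}=\cableTension{i}\delta u_i$, and \eqref{eq:anchor_variation_fixed_setpoint}.

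The delicate point is the axial component. Along $\cableDir{i}$ the geometric compliance vanishes, so $\delta\cableTension{i}$ is determined entirely by the anchor compliance. This is consistent because $\inner{\cableDir{i}}{\delta p_{a,i}-\delta p}=0$ by inextensibility. I would verify this by splitting $\delta\cableForceVector{i}$ into its parallel and transverse parts and checking that the combined equation reproduces both the projected equation and the inextensibility constraint. Inverting $\mathcal C_i$ then shows that $\delta\cableForceCovector{i}$ is uniquely determined by $\delta p$, which identifies $\LegStiffness{i}$ as the leg stiffness.
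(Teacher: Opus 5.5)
Your proposal is correct and follows the same route as the paper. Both start from the pre-statement series relation $-\delta p=(\AnchorCompliance{i}+\GeometricCompliance{i})(\delta\cableForceCovector{i})$, note that a symmetric positive-definite compliance plus a symmetric positive-semidefinite one is symmetric positive definite, hence invertible with symmetric positive-definite inverse, and then read off the coordinate matrix. Your explicit dual-pairing checks, the orthonormal-frame argument for the coordinates, and the axial/inextensibility consistency check spell out details that the paper's short proof leaves implicit.
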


\begin{proof}
The geometric and aerial-anchor compliances map the same cable-force
variation to consecutive displacement contributions and therefore
combine in series. The map $\AnchorCompliance{i}$ is symmetric positive
definite, whereas $\GeometricCompliance{i}$ is symmetric positive
semidefinite. Their sum is consequently symmetric positive definite
and invertible, and its inverse $\LegStiffness{i}$ is also symmetric
positive definite. The restoring-force relation then gives
\eqref{eq:intrinsic_single_leg_stiffness} and its coordinate
representation \eqref{eq:single_leg_stiffness}.
\end{proof}

\subsection{Total Passive Stiffness}
\label{subsec:total_passive_stiffness}

The total cable-force variation is the sum of the individual
leg-force variations. Hence,
\[
    \delta\varphi_c
    =
    \sum_{i=1}^{n}\delta\cableForceCovector{i}
    =
    -\sum_{i=1}^{n}\LegStiffness{i}(\delta p).
\]

\begin{corollary}[Total passive Cartesian stiffness]
\label{cor:total_passive_stiffness}
The passive Cartesian stiffness of the suspended load is
\begin{equation}
    \TotalStiffness
    =
    \sum_{i=1}^{n}\LegStiffness{i},
    \qquad
    {
    \TotalStiffnessMat
    =
    \sum_{i=1}^{n}\LegStiffnessMat{i}.
    }
    \label{eq:total_stiffness}
\end{equation}
Thus, the geometric and aerial-anchor compliances combine in series
within each leg, whereas the leg stiffnesses act in parallel on the
load. The associated total restoring-force relation is
$\delta\varphi_c=-\TotalStiffness(\delta p)$.
\end{corollary}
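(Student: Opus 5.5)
The plan is to obtain the corollary from Theorem~\ref{thm:single_leg_passive_stiffness} by linearizing the load-equilibrium balance \eqref{eq:gravity_equilibrium}, with the commanded anchors $\setpointTuple$ held fixed.

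\textbf{Step 1: per-leg restoring relations.} Consider an infinitesimal quasi-static variation $\delta p$ of the load about the selected equilibrium. Each leg responds independently, since leg $i$ involves only $\loadPoint$, $\actualAnchorPoint{i}$ and $\desiredAnchorPoint{i}$. By Theorem~\ref{thm:single_leg_passive_stiffness}, the actual-anchor variation $\delta p_{a,i}$ is determined uniquely by $\delta p$, and
\[
\delta\cableForceCovector{i}=-\LegStiffness{i}(\delta p).
\]
Uniqueness holds because $\AnchorCompliance{i}+\GeometricCompliance{i}$ is invertible. The crucial structural point is that all legs share the same load displacement $\delta p$, which is the parallel interconnection.

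\textbf{Step 2: summation.} The total cable force on the load is $\varphi_c=\sum_i\cableForceCovector{i}$. Because $\gravityVector$ and $\externalVector$ are constant, they do not vary, so
\[
\delta\varphi_c=\sum_i\delta\cableForceCovector{i}=-\Big(\sum_i\LegStiffness{i}\Big)(\delta p)
\]
by linearity of the dual space. Defining $\TotalStiffness$ through $\delta\varphi_c=-\TotalStiffness(\delta p)$ then gives the intrinsic identity. The map $\TotalStiffness$ is well defined because the right-hand side is linear in $\delta p$.

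\textbf{Step 3: properties and coordinates.} $\TotalStiffness$ is symmetric positive definite, since it is a finite sum of symmetric positive-definite maps and $n\ge1$. Taking world-frame matrices is linear, so $\TotalStiffnessMat=\sum_i\LegStiffnessMat{i}$, with each term given by \eqref{eq:single_leg_stiffness}. The series-within-leg and parallel-across-legs interpretation follows directly: compliances add inside each leg, as in Theorem~\ref{thm:single_leg_passive_stiffness}, and stiffnesses add across legs.

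\textbf{Main obstacle.} The delicate point is justifying that the per-leg relation holds simultaneously for all legs under a single $\delta p$. This requires the legs to be decoupled except at the load, and the quasi-static variation to be prescribed at the load while each leg's tension and direction adjust freely. I would argue this from the structure of \eqref{eq:equilibrium_set}: the only cross-leg coupling is the force-balance equation. That equation is exactly the one being linearized to define the restoring force, so it imposes no extra constraint on the individual $\delta p_{a,i}$.
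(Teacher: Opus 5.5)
Your proposal is correct and follows the paper's argument. Like the paper, you sum the per-leg restoring relations $\delta\varphi_i=-\mathcal K_i(\delta p)$ from Theorem~\ref{thm:single_leg_passive_stiffness} under the common load displacement $\delta p$, note that gravity and the constant external force contribute no variation, and pass to world-frame matrices by linearity; your remarks on positive definiteness of the sum and on why load force balance imposes no extra per-leg constraint make explicit what the paper leaves implicit.
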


\begin{remark}[Role of gravity]
The gravitational and constant external forces have zero variation in
the inertial frame and therefore do not contribute additive stiffness
terms. Their effect is contained in the selected operating point:
they determine the equilibrium cable directions and tensions entering
\eqref{eq:single_leg_stiffness}.
\end{remark}
\section{Virtual Elastic-Cable Equivalence}
\label{sec:virtual_elastic_cable_equivalence}

The preceding stiffness result allows anisotropic aerial-anchor
compliance. A more explicit representation is obtained when each
aerial-anchor stiffness is isotropic.

\subsection{Isotropic Aerial-Anchor Stiffness}
\label{subsec:isotropic_anchor_stiffness}

Assume that, for some $\isotropicStiffness{i}>0$,
\begin{equation}
    \AnchorStiffness{i}
    =
    \isotropicStiffness{i}\flat,
    \qquad
    \AnchorCompliance{i}
    =
    \isotropicStiffness{i}^{-1}\sharp,
    \qquad
    \AnchorStiffnessMat{i}
    =
    \isotropicStiffness{i}\eye{3}.
    \label{eq:isotropic_vehicle}
\end{equation}
Let
\[
    \virtualCableVector{i}
    :=
    \disp{\loadPoint}{\desiredAnchorPoint{i}}
    \in\Vtrans,
    \qquad
    \virtualDistance{i}:=\norm{\virtualCableVector{i}}.
\]
By \eqref{eq:vehicle_equil} and \eqref{eq:cable_geometry},
\begin{equation}
    \virtualCableVector{i}
    =
    \left(
    \cableLength{i}
    +
    \isotropicStiffness{i}^{-1}\ \cableTension{i}
    \right)
    \cableDir{i}.
    \label{eq:v_parallel_u}
\end{equation}
Hence, the commanded aerial-anchor point lies along the physical cable
direction, and
\begin{equation}
    \cableTension{i}
    =
    \isotropicStiffness{i}
    \left(
    \virtualDistance{i}-\cableLength{i}
    \right),
    \qquad
    \virtualDistance{i}>\cableLength{i}.
    \label{eq:tension_virtual}
\end{equation}
The inequality is equivalent to positive cable tension.

\subsection{Virtual Elastic-Cable Force Law}
\label{subsec:virtual_elastic_force}

For $\virtualDistance{i}>\cableLength{i}$, $\cableDir{i}=\virtualCableVector{i}/\virtualDistance{i}$. Hence, the force vector and its world-frame coordinates are
\begin{equation}
\begin{aligned}
    \cableForceVector{i}
    &=
    \isotropicStiffness{i}
    \left(
    1-\virtualDistance{i}^{-1}\ \cableLength{i}
    \right)\virtualCableVector{i},
    \\
    \cableForceCoord{i}
    &=
    \isotropicStiffness{i}
    \left(
    1-
    \frac{\cableLength{i}}
    {\norm{\desiredAnchorPos{i}-\loadPos}}
    \right)
    \left(
    \desiredAnchorPos{i}-\loadPos
    \right).
\end{aligned}
    \label{eq:virtual_elastic_force}
\end{equation}

\begin{proposition}[Virtual elastic-cable equivalence]
\label{prop:virtual_elastic_cable_equivalence}
Under \eqref{eq:isotropic_vehicle}, leg $i$ is quasi-statically
equivalent, on its taut domain, to a virtual unilateral elastic cable
connecting $\loadPoint$ directly to $\desiredAnchorPoint{i}$, with
stiffness $\isotropicStiffness{i}$ and rest length
$\cableLength{i}$.
\end{proposition}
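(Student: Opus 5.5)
The plan is to show two things. First, the force that leg $i$ exerts on the load, viewed as a function of the load position $\loadPoint$ with $\desiredAnchorPoint{i}$ fixed, coincides with the force law of a unilateral elastic cable of stiffness $\isotropicStiffness{i}$ and rest length $\cableLength{i}$ stretched between $\loadPoint$ and $\desiredAnchorPoint{i}$. Second, the equivalence extends to the local passive stiffness.

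The target model is the virtual cable force
\[
f^{\mathrm v}_i(\loadPoint)=\isotropicStiffness{i}\max\{0,\virtualDistance{i}-\cableLength{i}\}\,\virtualCableVector{i}/\virtualDistance{i}.
\]

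\textbf{Forward direction.} Take any taut quasi-static configuration of leg $i$ with $\cableTension{i}>0$. Under \eqref{eq:isotropic_vehicle}, relation \eqref{eq:vehicle_yield} gives $\disp{\actualAnchorPoint{i}}{\desiredAnchorPoint{i}}=\isotropicStiffness{i}^{-1}\cableTension{i}\cableDir{i}$. Adding \eqref{eq:cable_geometry} yields \eqref{eq:v_parallel_u}. Because the scalar coefficient $\cableLength{i}+\isotropicStiffness{i}^{-1}\cableTension{i}$ is positive, we get $\cableDir{i}=\virtualCableVector{i}/\virtualDistance{i}$ and \eqref{eq:tension_virtual}. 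Then \eqref{eq:virtual_elastic_force} shows $\cableForceVector{i}=f^{\mathrm v}_i(\loadPoint)$ on the set $\virtualDistance{i}>\cableLength{i}$.

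\textbf{Converse direction.} Given $\loadPoint$ with $\virtualDistance{i}>\cableLength{i}$, set
\[
\cableDir{i}:=\virtualCableVector{i}/\virtualDistance{i},\qquad \cableTension{i}:=\isotropicStiffness{i}(\virtualDistance{i}-\cableLength{i})>0,\qquad \actualAnchorPoint{i}:=\loadPoint+\cableLength{i}\cableDir{i}.
\]
One checks directly that the first two lines of \eqref{eq:equilibrium_set} hold. Hence the leg configuration compatible with a given $(\loadPoint,\desiredAnchorPoint{i})$ on the taut domain is unique. Conversely, if $\virtualDistance{i}\le\cableLength{i}$, no positive tension is possible. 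This matches the slack branch of the unilateral cable, where the force is zero. Because the leg force is then the same function of $\loadPoint$ as the virtual-cable force, replacing each leg by its virtual cable leaves the load equilibrium condition in \eqref{eq:equilibrium_set} unchanged.

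\textbf{Stiffness equivalence.} Here I compare Theorem~\ref{thm:single_leg_passive_stiffness} with the Hessian of the virtual-cable energy $\tfrac12\isotropicStiffness{i}(\virtualDistance{i}-\cableLength{i})^2$. Differentiating $f^{\mathrm v}_i$ with respect to $\loadPoint$ gives
\[
-\big(\isotropicStiffness{i}\,\cableDirCoord{i}\cableDirCoord{i}{}\topT+\cableTension{i}\virtualDistance{i}^{-1}\PerpProjectorMat{i}\big).
\]
Inverting \eqref{eq:single_leg_stiffness} with $\AnchorComplianceMat{i}=\isotropicStiffness{i}^{-1}\eye{3}$ separates into an axial part and a transverse part. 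The axial eigenvalue is $\isotropicStiffness{i}$. The transverse eigenvalue is
\[
\big(\isotropicStiffness{i}^{-1}+\cableLength{i}/\cableTension{i}\big)^{-1}=\cableTension{i}\isotropicStiffness{i}/(\cableTension{i}+\isotropicStiffness{i}\cableLength{i})=\cableTension{i}/\virtualDistance{i},
\]
using \eqref{eq:tension_virtual}, so the two matrices agree.

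The main obstacle is the precise meaning of ``quasi-statically equivalent''. It has to cover the full force law on the taut domain together with its consistency with Theorem~\ref{thm:single_leg_passive_stiffness}, and the converse uniqueness argument is the step needing care, because it requires ruling out $\cableDir{i}=-\virtualCableVector{i}/\virtualDistance{i}$. Positivity of the scalar coefficient in \eqref{eq:v_parallel_u} settles this.
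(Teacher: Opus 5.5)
Your forward direction is exactly the paper's proof: the cable displacement and the isotropic anchor deflection are both parallel to $\cableDir{i}$, which gives \eqref{eq:v_parallel_u}; its norm gives \eqref{eq:tension_virtual}; and substitution into $\cableForceVector{i}=\cableTension{i}\cableDir{i}$ gives \eqref{eq:virtual_elastic_force}. Your two additions are correct and the paper's proof omits both: the converse (existence and uniqueness of the taut leg configuration for every $\loadPoint$ with $\virtualDistance{i}>\cableLength{i}$) makes the equivalence genuinely two-sided, and your Hessian check, giving transverse eigenvalue $\cableTension{i}/\virtualDistance{i}=\transverseStiffness{i}$, independently confirms \eqref{eq:gamma_distance} and the identity $d_P\EquilibriumResidual=-\TotalStiffness^\sharp$ that the paper later invokes in Section~\ref{subsec:equilibrium_sensitivity}.
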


\begin{proof}
The physical cable displacement and the compliant-anchor displacement
are both parallel to $\cableDir{i}$, giving
\eqref{eq:v_parallel_u}. Taking its norm yields
\eqref{eq:tension_virtual}; substitution into
$\cableForceVector{i}
=\cableTension{i}\cableDir{i}$
gives \eqref{eq:virtual_elastic_force}.
\end{proof}

Figure~\ref{fig:physical_virtual_leg} summarizes the physical
series-compliance mechanism and its exact virtual elastic-cable
representation under isotropic aerial-anchor stiffness.

\begin{figure*}[t]
    \centering
    \begin{tikzpicture}[
        x=1cm,
        y=1cm,
        point/.style={
            circle,
            draw=black!80,
            fill=black!80,
            minimum size=2.2mm,
            inner sep=0pt
        },
        commandedpoint/.style={
            circle,
            draw=blue!70!black,
            fill=white,
            line width=0.8pt,
            minimum size=3mm,
            inner sep=0pt
        },
        displacedpoint/.style={
            circle,
            draw=blue!55,
            fill=blue!10,
            dashed,
            minimum size=2.2mm,
            inner sep=0pt
        },
        physicalcable/.style={
            draw=black!80,
            line width=1pt
        },
        perturbedcable/.style={
            draw=blue!55,
            dashed,
            line width=0.8pt
        },
        spring/.style={
            draw=blue!65!black,
            line width=0.9pt,
            decorate,
            decoration={
                coil,
                aspect=0.45,
                segment length=2.2mm,
                amplitude=1.4mm
            }
        },
        virtualspring/.style={
            draw=green!45!black,
            line width=1pt,
            decorate,
            decoration={
                coil,
                aspect=0.45,
                segment length=2.5mm,
                amplitude=1.6mm
            }
        },
        motionarrow/.style={
            -{Latex[length=1.8mm]},
            draw=blue!65!black,
            thick
        },
        forcearrow/.style={
            -{Latex[length=1.8mm]},
            draw=orange!80!black,
            thick
        },
        equivalencearrow/.style={
            {Latex[length=2mm]}-{Latex[length=2mm]},
            draw=black!70,
            thick
        },
        paneltitle/.style={
            font=\footnotesize\bfseries,
            align=center
        },
        figurelabel/.style={
            font=\footnotesize,
            align=center
        },
        smalllabel/.style={
            font=\scriptsize,
            align=center
        }
    ]

    \begin{scope}[xshift=0cm]
        \node[paneltitle] at (2.3,5.28)
        {(a) Physical cable--anchor leg};
        \node[commandedpoint] (Pda) at (2.0,4.25) {};
        \node[
            figurelabel,
            above right =4mm of Pda,
            anchor=east
        ]
            {commanded point\\
            $\desiredAnchorPoint{i}$};
        \node[point] (Paa) at (2.45,3.25) {};
        \node[figurelabel, below right=0mm of Paa]
            {actual anchor $\actualAnchorPoint{i}$};
        \draw[spring] (Pda) -- (Paa);

        \node[smalllabel, left=2mm of $(Pda)!0.5!(Paa)$]
            {aerial-anchor\\compliance
            $\AnchorCompliance{i}$};
        \node[point] (Pa) at (1.45,0.55) {};
        \node[figurelabel, right=1.5mm of Pa]
            {load $\loadPoint$};
        \draw[physicalcable] (Paa) -- (Pa);

        \node[smalllabel, right=1.5mm of $(Paa)!0.52!(Pa)$]
            {taut inextensible cable\\
            $\cableLength{i},\ \cableTension{i},\ \cableDir{i}$};
        \draw[forcearrow]
            ($(Pa)!0.10!(Paa)$)
            --
            ($(Pa)!0.42!(Paa)$);
        \node[displacedpoint] (Pap) at (0.55,0.55) {};
        \node[displacedpoint] (Paap) at (2.20,3.10) {};
        \draw[
            blue!45,
            dashed,
            decorate,
            decoration={
                coil,
                aspect=0.45,
                segment length=2.2mm,
                amplitude=1.4mm
            }
        ] (Pda) -- (Paap);

        \draw[perturbedcable] (Paap) -- (Pap);
        \draw[motionarrow]
            (Pa) --
            node[smalllabel, below] {$\delta p$}
            (Pap);

        \draw[motionarrow]
            (Paa) --
            node[smalllabel, above right=0.5mm] {$\delta p_{a,i}$}
            (Paap);
        \draw[
            -{Latex[length=1.4mm]},
            draw=blue!60,
            thin
        ]
        ($(Pa)+(0.18,0.34)$)
        arc[start angle=70,end angle=105,radius=5mm];

        \node[smalllabel, text=blue!65!black]
            at (0.9,1.30)
            {cable-direction\\variation $\delta u_i$};

    \end{scope}

    \draw[equivalencearrow]
        (5.25,2.55) -- (7.15,2.55);

    \node[smalllabel, text width=22mm]
        at (6.20,3.22)
        {isotropic compliant anchor\\
        $+$ inextensible cable};

    \node[smalllabel]
        at (6.20,1.92)
        {exact quasi-static\\equivalence};

    \begin{scope}[xshift=7.8cm]
        \node[paneltitle] at (2.3,5.28)
            {(b) Virtual elastic-cable representation};
        \node[commandedpoint] (Pdb) at (2.3,4.45) {};
        \node[
            figurelabel,
            left=0mm of Pdb,
            anchor=east
        ]
            {commanded point\\
            $\desiredAnchorPoint{i}$};
        \node[point] (Pb) at (1.55,0.55) {};
        \node[figurelabel, left=1.5mm of Pb]
            {load $\loadPoint$};
        \draw[virtualspring] (Pdb) -- (Pb);

        \node[
            smalllabel,
            right=-3mm of $(Pdb)!0.5!(Pb)$,
            text width=29mm
        ]
            {virtual unilateral elastic cable\\
            stiffness $\isotropicStiffness{i}$\\
            rest length $\cableLength{i}$};
        \draw[forcearrow]
            ($(Pb)!0.10!(Pdb)$)
            --
            ($(Pb)!0.42!(Pdb)$);

        \node[smalllabel, left=1mm of $(Pb)!0.28!(Pdb)$]
            {$\cableForceVector{i}$};
        \draw[
            -{Latex[length=1.7mm]},
            draw=black!65,
            thin
        ]
            (Pb) -- (Pdb);

        \node[smalllabel, left=2mm of $(Pb)!0.62!(Pdb)$]
            {$\virtualCableVector{i}$};

    \end{scope}

    \node[
        draw=black!35,
        fill=black!2,
        rounded corners=1.5pt,
        align=center,
        font=\footnotesize,
        text width=89mm,
        minimum height=10mm
    ] at (6.15,-0.75)
        {$\displaystyle
        \mathcal C_i
        =
        \AnchorCompliance{i}
        +
        \GeometricCompliance{i},
        \quad
        \LegStiffness{i}
        =
        \mathcal C_i^{-1}$ \quad
        series compliance within one leg};
        
        \begin{pgfonlayer}{background}
            \path[
                draw=blue!20,
                fill=blue!1,
                rounded corners=2pt
            ]
                (-0.35,-0.05)
                rectangle
                (4.95,5.55);
            \path[
                draw=green!30!black,
                fill=green!2,
                rounded corners=2pt
            ]
                (7.45,-0.05)
                rectangle
                (12.75,5.55);
        
        \end{pgfonlayer}    

    \end{tikzpicture}

    \caption{Physical and virtual representations of one cable--anchor leg.
    The physical leg combines aerial-anchor compliance and the transverse
    geometric compliance of the taut inextensible cable in series. Under
    isotropic aerial-anchor stiffness, the leg is quasi-statically equivalent
    to a virtual unilateral elastic cable connecting the load directly to the
    commanded anchor point.}

    \label{fig:physical_virtual_leg}
\end{figure*}

\subsection{Gravity-Loaded Equilibrium Residual}
\label{subsec:virtual_equilibrium_residual}

Define the taut virtual-cable domain
\[
    \mathcal D
    :=
    \left\{
    (\loadPoint,\setpointTuple)\in\Eaff\times\Eaff^n:
    \norm{\disp{\loadPoint}{\desiredAnchorPoint{i}}}
    >
    \cableLength{i},\
    i=1,\ldots,n
    \right\}.
\]
The isotropic virtual-cable representation defines the intrinsic
equilibrium residual
\begin{equation}
    \EquilibriumResidual:
    \mathcal D\to\Vtrans,
    \qquad
    \EquilibriumResidual(\loadPoint,\setpointTuple)
    :=
    \sum_{i=1}^{n}\cableForceVector{i}
    +
    \gravityVector+\externalVector.
    \label{eq:intrinsic_equilibrium_residual}
\end{equation}
Its world-frame coordinate representation is
\begin{equation}
\begin{aligned}
    \EquilibriumResidualCoord(\loadPos,\setpointCoord)
    &:=
    \sum_{i=1}^{n}
    \isotropicStiffness{i}
    \left(
    1-
    \frac{\cableLength{i}}
    {\norm{\desiredAnchorPos{i}-\loadPos}}
    \right)
    \left(
    \desiredAnchorPos{i}-\loadPos
    \right)
    \\
    &\quad+
    \gravityCoord+\externalCoord.
\end{aligned}
    \label{eq:h_def}
\end{equation}
A taut gravity-loaded equilibrium satisfies
\begin{equation}
    \EquilibriumResidual(\loadPointEq,\setpointTuple)
    =
    0_{\Vtrans},
    \quad\text{equivalently}\quad
    \EquilibriumResidualCoord(\loadPosEq,\setpointCoord)
    =
    \zerovec{3}.
    \label{eq:h_equilibrium}
\end{equation}
When several solutions exist, all subsequent sensitivity and
stiffness-shaping constructions are applied locally to a selected
smooth equilibrium branch.
\section{Stiffness Decomposition at Gravity-Loaded Equilibria}
\label{sec:stiffness_decomposition}

Throughout this section, the aerial-anchor stiffness is isotropic as
in \eqref{eq:isotropic_vehicle}. Substitution into
\eqref{eq:single_leg_stiffness} gives
\begin{equation}
    \LegStiffnessMat{i}
    =
    \left[
    \isotropicStiffness{i}^{-1}\eye{3}
    +
    \cableTension{i}^{-1}\cableLength{i}
    \PerpProjectorMat{i}
    \right]^{-1}.
    \label{eq:isotropic_leg_inverse}
\end{equation}
Let
\[
    \ParallelProjectorMat{i}
    :=
    \cableDirCoord{i}\big(\cableDirCoord{i}\big)\topT,
    \qquad
    \eye{3}
    =
    \ParallelProjectorMat{i}+\PerpProjectorMat{i}.
\]
Since the axial and transverse projectors are mutually orthogonal,
the matrix inside the inverse in \eqref{eq:isotropic_leg_inverse} becomes
\[
    \isotropicStiffness{i}^{-1}\ParallelProjectorMat{i}
    +
    \left(
    \isotropicStiffness{i}^{-1}
    +
    \cableTension{i}^{-1}\cableLength{i}
    \right)
    \PerpProjectorMat{i}.
\]
Its inverse is therefore obtained by inverting the two scalar
coefficients separately:
\begin{equation}
    \LegStiffnessMat{i}
    =
    \isotropicStiffness{i}\ParallelProjectorMat{i}
    +
    \transverseStiffness{i}\PerpProjectorMat{i},
    \qquad
    \transverseStiffness{i}
    :=
    \frac{
        \isotropicStiffness{i}\cableTension{i}
    }{
        \cableTension{i}
        +
        \isotropicStiffness{i}\cableLength{i}
    }.
    \label{eq:leg_decomp_axial_transverse}
\end{equation}
Using \eqref{eq:tension_virtual}, the transverse stiffness coefficient is also
\begin{equation}
    \transverseStiffness{i}
    =
    \isotropicStiffness{i}
    \left(
    1-
    \frac{\cableLength{i}}{\virtualDistance{i}}
    \right).
    \label{eq:gamma_distance}
\end{equation}
Thus, the Euclidean-identified stiffness operator
$\sharp\circ\LegStiffness{i}:\Vtrans\to\Vtrans$ has the simple
eigenvalue $\isotropicStiffness{i}$ along
$\operatorname{span}\{\cableDir{i}\}$ and the eigenvalue
$\transverseStiffness{i}$, with algebraic and geometric multiplicity
two, on $T_{\cableDir{i}}\SphTwo$.

Figure~\ref{fig:axial_transverse_stiffness} illustrates this
axial--transverse eigenstructure and its evolution with the
operating-point tension.

\begin{figure}[t]
    \centering
    \begin{tikzpicture}[
        x=1cm,
        y=1cm,
        axisarrow/.style={
            -{Latex[length=1.6mm]},
            draw=blue!70!black,
            line width=0.9pt
        },
        transverseaxis/.style={
            -{Latex[length=1.3mm]},
            draw=green!45!black,
            line width=0.7pt
        },
        preloadarrow/.style={
            -{Latex[length=1.6mm]},
            draw=black!65,
            thick
        },
        cableaxis/.style={
            draw=blue!70!black,
            line width=1pt
        },
        planefill/.style={
            draw=green!45!black,
            fill=green!10,
            line width=0.7pt
        },
        limitshape/.style={
            draw=blue!60!black,
            fill=blue!8,
            line width=0.7pt
        },
        centerpoint/.style={
            circle,
            draw=black!80,
            fill=black!80,
            minimum size=1.8mm,
            inner sep=0pt
        },
        mainlabel/.style={
            font=\scriptsize,
            align=center
        },
        smalllabel/.style={
            font=\scriptsize,
            align=center
        },
        equationbox/.style={
            draw=black!35,
            fill=black!2,
            rounded corners=1.5pt,
            align=center,
            font=\scriptsize,
            inner sep=2.5pt
        }
    ]

    \path[use as bounding box] (0.05,-2.02) rectangle (8.45,6.82);

    \node[
        mainlabel,
        font=\footnotesize\bfseries
    ] at (4.25,6.55)
        {(a) Single-leg eigenstructure};
    \draw[planefill]
        (2.05,4.35)
        ellipse[x radius=1.35cm,y radius=0.43cm];
    \draw[cableaxis]
        (2.05,2.95) -- (2.05,5.95);

    \draw[axisarrow]
        (2.05,4.35) -- (2.05,5.80);

    \node[centerpoint] at (2.05,4.35) {};
    \node[
        mainlabel,
        anchor=west,
        text=blue!70!black,
        text width=22mm
    ] at (1.90,5.47)
        {cable direction
        $\cableDir{i}$\\
        axial eigenvalue 
        $\isotropicStiffness{i}$\\
        multiplicity one};
    \draw[transverseaxis]
        (2.05,4.35) -- (3.18,4.35);

    \draw[transverseaxis]
        (2.05,4.35) -- (0.92,4.35);

    \draw[transverseaxis]
        (2.05,4.35) -- (2.63,4.72);
    \node[
        mainlabel,
        text=green!35!black,
        text width=30mm,
        anchor=north
    ] at (1.05,3.95)
        {transverse plane\\
        $T_{\cableDir{i}}\SphTwo$\\
        eigenvalue
        $\transverseStiffness{i}$\\
        multiplicity two};
    \node[
        equationbox,
        text width=35mm
    ] at (6.20,5.70)
        {$\displaystyle
        \sharp\circ\LegStiffness{i}
        =
        \isotropicStiffness{i}
        \mathcal P_{u_i}^{\parallel}        +
        \transverseStiffness{i}
        \mathcal P_{u_i}^{\perp}$\\[1mm]
        {\scriptsize
        Eq.~\eqref{eq:leg_decomp_axial_transverse}}};
    \node[
        equationbox,
        text width=35mm
    ] at (6.20,4.25)
        {$\displaystyle
        \transverseStiffness{i}
        =
        \frac{
            \isotropicStiffness{i}\cableTension{i}
        }{
            \cableTension{i}
            +
            \isotropicStiffness{i}\cableLength{i}
        }$\\[0.5mm]
        $\displaystyle
        0<
        \transverseStiffness{i}
        <
        \isotropicStiffness{i}$\\[0.8mm]
        {\scriptsize
        Eq.~\eqref{eq:leg_decomp_axial_transverse}}};

    \node[
        smalllabel,
        text width=36mm
    ] at (6.20,2.9)
        {Axial stiffness comes from the aerial anchor;
        transverse stiffness combines anchor and
        geometric compliance in series.};

    \node[
        mainlabel,
        font=\footnotesize\bfseries
    ] at (4.25,1.85)
        {(b) Evolution with operating-point tension};
    \draw[limitshape]
        (1.05,0.05) -- (1.05,1.08);

    \node[centerpoint] at (1.05,0.565) {};

    \node[
        smalllabel,
        text width=21mm
    ] at (1.05,-0.48)
        {barely taut\\
        $\cableTension{i}\to0^+$\\
        $\transverseStiffness{i}\to0$};

    \node[
        smalllabel,
        text width=21mm
    ] at (1.05,-1.18)
        {nearly rank-one\\
        contribution};
    \draw[preloadarrow]
        (1.92,0.565) -- (3.14,0.565);

    \node[
        smalllabel,
        text width=15mm
    ] at (2.53,0.97)
        {increasing\\tension};
    \draw[limitshape]
        (4.05,0.565)
        ellipse[x radius=0.48cm,y radius=0.78cm];

    \draw[cableaxis]
        (4.05,-0.34) -- (4.05,1.47);

    \node[centerpoint] at (4.05,0.565) {};

    \node[
        smalllabel,
        text width=22mm
    ] at (4.05,-0.68)
        {finite preload\\
        $0<\transverseStiffness{i}
        <\isotropicStiffness{i}$};

    \node[
        smalllabel,
        text width=22mm
    ] at (4.05,-1.18)
        {axisymmetric\\
        contribution};
    \draw[preloadarrow]
        (4.92,0.565) -- (6.14,0.565);

    \node[
        smalllabel,
        text width=15mm
    ] at (5.53,0.97)
        {increasing\\tension};
    \draw[limitshape]
        (7.08,0.565)
        circle[radius=0.69cm];

    \node[centerpoint] at (7.08,0.565) {};

    \node[
        smalllabel,
        text width=23mm
    ] at (7.08,-0.58)
        {high preload\\
        $\cableTension{i}\to\infty$\\
        $\transverseStiffness{i}
        \to\isotropicStiffness{i}$};

    \node[
        smalllabel,
        text width=22mm
    ] at (7.08,-1.27)
        {isotropic\\
        contribution};
    \node[
        smalllabel,
        text=black!65
    ] at (4.25,-1.78)
        {Limits: Section~\ref{subsec:stiffness_limit_cases}};

    \begin{pgfonlayer}{background}

        \path[
            draw=blue!20,
            fill=blue!1,
            rounded corners=2pt
        ]
            (0.05,2.25)
            rectangle
            (8.45,6.82);

        \path[
            draw=black!20,
            fill=black!1,
            rounded corners=2pt
        ]
            (0.05,-2.02)
            rectangle
            (8.45,2.05);

    \end{pgfonlayer}

    \end{tikzpicture}

    \caption{Axial--transverse eigenstructure of an isotropic
    cable--anchor leg. The Euclidean-identified stiffness operator has
    the simple eigenvalue $\isotropicStiffness{i}$ along the cable
    direction and the eigenvalue $\transverseStiffness{i}$, with
    algebraic and geometric multiplicity two, on the transverse plane.
    Increasing the operating-point tension changes the leg contribution
    from nearly rank one toward isotropic.}

    \label{fig:axial_transverse_stiffness}
\end{figure}

\subsection{Isotropic Baseline and Directional Shaping}
\label{subsec:isotropic_directional_decomposition}

Since
$\PerpProjectorMat{i}
=
\eye{3}
-
\cableDirCoord{i}(\cableDirCoord{i})\topT$,
\eqref{eq:leg_decomp_axial_transverse} is equivalently
\[
    \LegStiffnessMat{i}
    =
    \transverseStiffness{i}\eye{3}
    +
    \left(
    \isotropicStiffness{i}-\transverseStiffness{i}
    \right)
    \cableDirCoord{i}\big(\cableDirCoord{i}\big)\topT.
\]
Summing over all legs yields
\begin{equation}
    \TotalStiffnessMat
    =
    \left(
    \sum_{i=1}^{n}\transverseStiffness{i}
    \right)\eye{3}
    +
    \sum_{i=1}^{n}
    \left(
    \isotropicStiffness{i}-\transverseStiffness{i}
    \right)
    \cableDirCoord{i}\big(\cableDirCoord{i}\big)\topT.
    \label{eq:shape_decomp}
\end{equation}
The first term provides an isotropic stiffness baseline, whereas the
second comprises directional rank-one contributions aligned with the
cables. Both depend on the selected gravity-loaded equilibrium through
the cable directions and tensions.

\subsection{Stiffness and Compliance Ellipsoids}
\label{subsec:stiffness_compliance_ellipsoids}

The intrinsic stiffness unit-level set is
\[
    \mathcal E_K
    :=
    \left\{
    v\in\Vtrans:
    \big(\TotalStiffness v\big)(v)=1
    \right\}.
\]
In world-frame coordinates, the stiffness and reciprocal compliance
ellipsoids are represented by
\begin{equation}
\begin{aligned}
    \mathcal E_K^{\mathsf w}
    &:=
    \left\{
    \bm v^{\mathsf w}\in\Rthree:
    (\bm v^{\mathsf w})\topT
    \TotalStiffnessMat
    \bm v^{\mathsf w}
    =1
    \right\},
    \\
    \mathcal E_C^{\mathsf w}
    &:=
    \left\{
    \bm f^{\mathsf w}\in\Rthree:
    (\bm f^{\mathsf w})\topT
    \EndEffectorComplianceMat
    \bm f^{\mathsf w}
    =1
    \right\},
    \
    \EndEffectorComplianceMat
    :=
    {\TotalStiffnessMat}^{-1}.
\end{aligned}
\label{eq:stiffness_compliance_ellipsoids}
\end{equation}
The first is a displacement-space level set of the stiffness
quadratic form; the second is the corresponding force-space level set
of the compliance quadratic form.

\subsection{Limit Cases}
\label{subsec:stiffness_limit_cases}

Since $0<\transverseStiffness{i}<\isotropicStiffness{i}$, the barely taut limit $\cableTension{i}\to0^+$ gives
\[
    \cableTension{i}\to0^+
    \quad\Longrightarrow\quad
    \transverseStiffness{i}\to0,
    \qquad
    \LegStiffnessMat{i}
    \to
    \isotropicStiffness{i}
    \cableDirCoord{i}\big(\cableDirCoord{i}\big)\topT.
\]
The leg then supplies stiffness only along the cable direction. The
high-preload limit $\cableTension{i}\to\infty$ gives
\[
    \cableTension{i}\to\infty
    \quad\Longrightarrow\quad
    \transverseStiffness{i}\to\isotropicStiffness{i},
    \qquad
    \LegStiffnessMat{i}
    \to
    \isotropicStiffness{i}\eye{3}.
\]
The leg therefore approaches the isotropic stiffness of the aerial
anchor. In practice, cable strength, vehicle thrust, safety margins,
and energy consumption bound the attainable preload.

These limits describe the dependence of the leg stiffness on the operating-point tension, with cable direction and aerial-anchor stiffness held fixed. In the full equilibrium problem, cable tensions and directions are jointly determined by the setpoints, anchor stiffnesses, gravity, and external forces.
\section{Gravity-Aware Stiffness Map}
\label{sec:gravity_aware_stiffness_map}

\subsection{Equilibrium Branch and Stiffness Map}
\label{subsec:equilibrium_dependent_stiffness_map}

Consider an open set $\mathcal U\subset\Eaff^n$ of commanded
aerial-anchor configurations and a selected smooth taut equilibrium
branch
\[
    \EquilibriumBranch{b}:\mathcal U\to\Eaff,
    \qquad
    \EquilibriumResidual
    \big(
    \EquilibriumBranch{b}(\setpointTuple),
    \setpointTuple
    \big)
    =
    0_{\Vtrans}.
\]
The corresponding coordinate branch is denoted by
\[
    \EquilibriumBranchCoord{b}:
    \mathcal U^{\mathsf w}\subset\R^{3n}
    \to\Rthree.
\]
The branch index is omitted below when no ambiguity arises.

At each selected equilibrium, the intrinsic stiffness is an element of
the space of symmetric positive-definite linear maps from $\Vtrans$ to
$\Vdual$. The gravity-aware stiffness map is therefore
\begin{equation}
    \StiffnessMap{b}:
    \mathcal U
    \to
    \Sym^{+}(\Vtrans,\Vdual),
    \quad
    \StiffnessMap{b}(\setpointTuple)
    :=
    \TotalStiffness
    \big(
    \EquilibriumBranch{b}(\setpointTuple),
    \setpointTuple
    \big).
    \label{eq:intrinsic_stiffness_map}
\end{equation}
Its six-dimensional world-frame representation is
\begin{equation}
    \StiffnessCoordMap{b}(\setpointCoord)
    :=
     \bm k(\setpointCoord)
    :=
    \vech\!\left(
    \TotalStiffnessMat
    \big(
    \EquilibriumBranchCoord{b}(\setpointCoord),
    \setpointCoord
    \big)
    \right)
    \in\R^6.
    \label{eq:Phi}
\end{equation}
This map is nonlinear because the setpoints affect stiffness directly
through the virtual aerial anchors and indirectly through the
gravity-loaded equilibrium branch.

Figure~\ref{fig:gravity_aware_stiffness_pipeline} summarizes this
dependency and the passage from the intrinsic physical formulation to
the coordinate representation used for sensitivity analysis and
control.

\begin{figure*}[t]
    \centering
\begin{tikzpicture}[
        node distance=11mm and 4mm,
        theoryblock/.append style={
            text width=25mm,
            minimum height=14mm,
            inner sep=2.5pt
        },
        equilibriumblock/.append style={
            text width=25mm,
            minimum height=14mm,
            inner sep=2.5pt
        },
        coordinateblock/.append style={
            text width=25mm,
            minimum height=14mm,
            inner sep=2.5pt
        },
        inputblock/.append style={
            text width=25mm,
            minimum height=11mm,
            inner sep=2.5pt
        },
        layerlabel/.append style={
            text width=18mm
        },
        diagramnote/.append style={
            text width=34mm
        }
    ]

    \node[layerlabel] (intrinsiclabel)
        {Intrinsic\\physical\\layer};

    \node[theoryblock, right=4mm of intrinsiclabel] (Q)
        {Setpoint tuple\\[0.5em]
        $\setpointTuple\in\Eaff^n$\\[0.4em]
        {\scriptsize Eq.~\eqref{eq:q_definition}}};

    \node[equilibriumblock, right=of Q] (Pe)
        {Selected taut\\[0.5em] equilibrium
        $\EquilibriumBranch{b}
        (\setpointTuple)$\\[0.5em]
        {\scriptsize
        Eqs.~\eqref{eq:h_equilibrium}
        and~\eqref{eq:intrinsic_stiffness_map}}};

    \node[theoryblock, right=of Pe] (state)
        {Equilibrium cable data\\[0.5em]
        $\{\cableDir{i}(\setpointTuple),$
        $\cableTension{i}(\setpointTuple)\}_{i=1}^n$\\[0.4em]
        {\scriptsize Eqs.~\eqref{eq:cable_geometry}
        and~\eqref{eq:equilibrium_set}}};

    \node[theoryblock, right=of state] (legs)
        {Leg stiffness maps\\[0.5em]
        $\displaystyle
        \LegStiffness{i}
        =
        \big(
        \AnchorCompliance{i}
        +
        \GeometricCompliance{i}
        \big)^{-1}$\\[0.4em]
        {\scriptsize Eq.~\eqref{eq:intrinsic_single_leg_stiffness}}};

    \node[theoryblock, right=of legs] (K)
        {Total stiffness map\\[0.5em]
        $\displaystyle
        \StiffnessMap{b}
        (\setpointTuple)$
        $\displaystyle
        =
        \sum_{i=1}^{n}
        \LegStiffness{i}$\\[0.4em]
        {\scriptsize Eqs.~\eqref{eq:total_stiffness}
        and~\eqref{eq:intrinsic_stiffness_map}}};

    \draw[flowarrow] (Q) -- (Pe);
    \draw[flowarrow] (Pe) -- (state);
    \draw[flowarrow] (state) -- (legs);
    \draw[flowarrow] (legs) -- (K);
    \node[inputblock, above=8mm of Pe] (loads)
        {Gravity and external forces\\
        $\gravityVector,\ \externalVector$ 
        \ {\scriptsize Eq.~\eqref{eq:gravity_equilibrium}}};

    \draw[dependence] (loads) -- (Pe);
    \draw[dependence]
        (Q.north east)
        to[out=15,in=165]
        node[
            diagramnote,
            above,
            xshift=8mm,
            text width=31mm
        ]
        {direct dependence through\\
        anchor setpoints and geometry\\[-0.2em]
        {\scriptsize Eq.~\eqref{eq:K_chain_rule}}}
        (K.north west);
    \node[
        diagramnote,
        below=-0.5mm of state,
        text width=44mm
    ] (indirectnote)
        {indirect dependence through\\
        the selected equilibrium\\
        {\scriptsize
        Eqs.~\eqref{eq:equilibrium_sensitivity}
        and~\eqref{eq:K_chain_rule}}};

    \node[coordinateblock, below=11mm of Q] (q)
        {Setpoint coordinates\\[0.5em]
        $\setpointCoord\in\R^{3n}$\\[0.4em]
        {\scriptsize Eq.~\eqref{eq:q_definition}}};

    \node[coordinateblock, below=11mm of Pe] (pew)
        {Equilibrium coordinates\\[0.5em]
        $\EquilibriumBranchCoord{b}$,
        $\big(\setpointCoord\big)$\\[0.4em]
        {\scriptsize Eq.~\eqref{eq:equilibrium_sensitivity}}};

    \node[coordinateblock, below=11mm of state] (Kw)
        {Stiffness matrix\\[0.5em]
        $\TotalStiffnessMat
        (\setpointCoord)$\\[0.4em]
        {\scriptsize Eqs.~\eqref{eq:total_stiffness}
        and~\eqref{eq:shape_decomp}}};

    \node[coordinateblock, below=11mm of legs] (kvec)
        {Independent entries\\[0.5em]
        $\displaystyle
        \bm k(\setpointCoord)$
        $\displaystyle
        =
        \vech(\TotalStiffnessMat)$\\[0.4em]
        {\scriptsize Eq.~\eqref{eq:Phi}}};

    \node[coordinateblock, below=11mm of K] (JK)
        {Stiffness Jacobian\\[0.5em]
        $\displaystyle
        \StiffnessJacobian{b}$
        $\displaystyle
        =
        \frac{\partial\bm k}
        {\partial\setpointCoord}$\\[0.4em]
        {\scriptsize Eq.~\eqref{eq:JK_def}}};

    \node[layerlabel, left=4mm of q] (coordinatelabel)
        {Coordinate\\computational\\layer};

    \draw[flowarrow] (q) -- (pew);
    \draw[flowarrow] (pew) -- (Kw);
    \draw[flowarrow] (Kw) -- (kvec);
    \draw[flowarrow] (kvec) -- (JK);

    \draw[representation] (Q) -- (q);

    \draw[representation] (K) -- (Kw);

    \begin{scope}[on background layer]
        \node[
            draw=blue!25,
            fill=blue!1,
            rounded corners=2pt,
            fit=(intrinsiclabel)(Q)(Pe)(state)(legs)(K)(loads),
            inner sep=3mm
        ] {};

        \node[
            draw=black!20,
            fill=black!1,
            rounded corners=2pt,
            fit=(coordinatelabel)(q)(pew)(Kw)(kvec)(JK),
            inner sep=3mm
        ] {};
    \end{scope}

\end{tikzpicture}

    \caption{Gravity-aware stiffness pipeline and parallel intrinsic and
    coordinate descriptions. The aerial-anchor setpoints determine a selected
    gravity-loaded equilibrium branch and thereby the equilibrium cable
    directions and tensions. These quantities determine the leg and total
    stiffness maps. Their coordinate representations yield the
    stiffness-coordinate vector and its Jacobian for sensitivity analysis and
    control. Gravity affects stiffness through the selected equilibrium rather
    than through an additive stiffness term.}

    \label{fig:gravity_aware_stiffness_pipeline}
\end{figure*}

\subsection{Equilibrium Sensitivity}
\label{subsec:equilibrium_sensitivity}

Let $\delta q\in T_{\setpointTuple}\Eaff^n\simeq\Vtrans^n$ be a
setpoint variation, and let
$\delta p_{\mathrm e}\in\Vtrans$ be the induced variation of the
selected equilibrium. Differentiating the branch equilibrium
condition gives
\begin{equation}
    d_P\EquilibriumResidual
    \big(\delta p_{\mathrm e}\big)
    +
    d_Q\EquilibriumResidual
    \big(\delta q\big)
    =
    0_{\Vtrans},
    \label{eq:equilibrium_differential_intrinsic}
\end{equation}
where both partial differentials are evaluated at
$(\EquilibriumBranch{b}(\setpointTuple),\setpointTuple)$.

By the restoring-force definition
$\delta\varphi_c=-\TotalStiffness(\delta p)$ established in
Corollary~\ref{cor:total_passive_stiffness}, a load variation at fixed
setpoints satisfies
\[
    d_P\EquilibriumResidual(\delta p)
    =
    -(\sharp\circ\TotalStiffness)(\delta p).
\]
Therefore,
$d_P\EquilibriumResidual=-\TotalStiffness^\sharp$, where
$\TotalStiffness^\sharp:=\sharp\circ\TotalStiffness:
\Vtrans\to\Vtrans$.
Since $\TotalStiffness$ is positive definite, this operator is
invertible, and \eqref{eq:equilibrium_differential_intrinsic} gives
\begin{equation}
    \delta p_{\mathrm e}
    =
    \big(\TotalStiffness^{\sharp}\big)^{-1}
    d_Q\EquilibriumResidual
    \big(\delta q\big).
    \label{eq:equilibrium_sensitivity_intrinsic}
\end{equation}

In world-frame coordinates, the same relation is
\begin{equation}
    \delta\loadPosEq
    =
    {\TotalStiffnessMat}^{-1}
    \frac{\partial\EquilibriumResidualCoord}
         {\partial\setpointCoord}
    \setpointVariation,
    \qquad
    \frac{\partial\EquilibriumBranchCoord{b}}
         {\partial\setpointCoord}
    =
    {\TotalStiffnessMat}^{-1}
    \frac{\partial\EquilibriumResidualCoord}
         {\partial\setpointCoord}.
    \label{eq:equilibrium_sensitivity}
\end{equation}
Thus, the stiffness governing the local force response also determines
the first-order sensitivity of the gravity-loaded equilibrium to
aerial-anchor setpoint variations.

\subsection{Stiffness Jacobian}
\label{subsec:stiffness_jacobian}

The stiffness Jacobian is the coordinate matrix of the differential of
$\StiffnessCoordMap{b}$:
\begin{equation}
    \StiffnessJacobian{b}(\setpointCoord)
    :=
    \frac{\partial\StiffnessCoordMap{b}}
         {\partial\setpointCoord}
    \in\R^{6\times3n}.
    \label{eq:JK_def}
\end{equation}
Hence, for a setpoint variation,
\begin{equation}
    \delta\bm k
    =
    d\StiffnessCoordMap{b}_{\setpointCoord}
    (\setpointVariation)
    =
    \StiffnessJacobian{b}(\setpointCoord)
    \setpointVariation.
    \label{eq:stiffness_jacobian_variation}
\end{equation}

The Jacobian includes both the direct dependence of stiffness on the
setpoints and the indirect dependence induced by the equilibrium
shift. In coordinates,
\begin{equation}
    \delta\TotalStiffnessMat
    =
    \left[
    \frac{\partial\TotalStiffnessMat}{\partial\setpointCoord}
    +
    \frac{\partial\TotalStiffnessMat}{\partial\loadPosEq}
    \frac{\partial\EquilibriumBranchCoord{b}}
         {\partial\setpointCoord}
    \right]
    \setpointVariation.
    \label{eq:K_chain_rule}
\end{equation}

The formulation applies to any number $n$ of aerial anchors. The
locally achievable stiffness variations at a given taut equilibrium
are characterized by
\[
    \im\StiffnessJacobian{b}(\setpointCoord)
    \subseteq\R^6.
\]
The number of vehicles influences the dimension of the setpoint space,
the rank of the stiffness Jacobian, the availability of null-space
motions, and positive-tension feasibility.
\section{Constraint-Preserving Local Stiffness Regulation}
\label{sec:quasistatic_stiffness_control}

The stiffness Jacobian in \eqref{eq:JK_def} characterizes the
first-order stiffness variations generated by commanded-anchor motion.
The optimization-based local controller presented in this section uses that differential relation to attempt regulating the stiffness
locally while preserving tautness, the selected equilibrium branch,
and the imposed operational constraints. Thanks to such safety mechanisms, the desired stiffness may be assigned directly and need not be known to be globally realizable.

\subsection{Stiffness Error and Admissible Velocities}
\label{subsec:stiffness_objective_admissibility}

Let $\DesiredStiffnessMat(t)\in\Spp(3)$ be the desired world-frame
stiffness matrix. Define the stiffness error and its weighted energy by
\begin{equation}
\begin{aligned}
    \StiffnessError(\setpointCoord,t)
    &:=
    \bm k(\setpointCoord)
    -
    \vech\!\left(\DesiredStiffnessMat(t)\right),
    \\
    \StiffnessErrorEnergy(\setpointCoord,t)
    &:=
    \frac{1}{2}
    \StiffnessError\topT
    \StiffnessWeightMat
    \StiffnessError,
    \qquad
    \StiffnessWeightMat\in\Spp(6).
\end{aligned}
\label{eq:stiffness_error_function}
\end{equation}
For a fixed desired stiffness, \eqref{eq:stiffness_jacobian_variation} gives
\begin{equation}
    \dot{\StiffnessErrorEnergy}
    =
    \StiffnessError\topT
    \StiffnessWeightMat
    \StiffnessJacobian{b}
    \setpointVelocity,
    \label{eq:instantaneous_stiffness_error_derivative}
\end{equation}
where the configuration arguments are omitted when unambiguous.

Let $\mathcal U^{\mathsf w}\subset\R^{3n}$ be the coordinate domain of
the selected smooth taut equilibrium branch. Operational requirements
are represented by continuously differentiable functions
\[
    \AdmissibilityCoordFunction{r}:
    \mathcal U^{\mathsf w}\to\R,
    \qquad
    r=1,\ldots,\NumConstraints,
\]
with $\AdmissibilityCoordFunction{r}\geq0$ denoting admissibility, and
corresponding domain 
\begin{equation}
    \AdmissibleBranchCoordDomain{b}
    :=
    \left\{
        \setpointCoord\in\mathcal U^{\mathsf w}:
        \AdmissibilityCoordFunction{r}(\setpointCoord)\geq0,\ 
        r=1,\ldots,\NumConstraints
    \right\}.
    \label{eq:admissible_branch_domain}
\end{equation}
These functions may encode tension bounds, flight regions, collision
and cable-separation margins, thrust limits, and distance from
equilibrium-branch singularities. For example,
\begin{equation}
\begin{aligned}
    \MinTensionConstraint{i}
    &:=
    \cableTension{i}(\setpointCoord)-\MinTension,
    &
    \MaxTensionConstraint{i}
    &:=
    \MaxTension-\cableTension{i}(\setpointCoord).
\end{aligned}
\label{eq:tension_admissibility_functions}
\end{equation}
The selected-branch tension
$\cableTension{i}(\setpointCoord)$ includes the equilibrium displacement
induced by the commanded-anchor configuration.

Let each $\AdmissibilityRateFunction{r}$ be locally Lipschitz and
strictly increasing, with $\AdmissibilityRateFunction{r}(0)=0$. The
locally admissible candidate velocities $\CandidateSetpointVelocity$ are asked to satisfy
\begin{equation}
\begin{aligned}
    \nabla_{\setpointCoord}
    {\AdmissibilityCoordFunction{r}}\topT
    \CandidateSetpointVelocity
    &\geq
    -
    \AdmissibilityRateFunction{r}
    \big(\AdmissibilityCoordFunction{r}\big),
    \\
    \MinSetpointVelocity
    \leq
    \CandidateSetpointVelocity
    &\leq
    \MaxSetpointVelocity,
    \qquad
    r=1,\ldots,\NumConstraints.
\end{aligned}
\label{eq:locally_admissible_velocity_set}
\end{equation}
All functions in \eqref{eq:locally_admissible_velocity_set} are
evaluated at $\setpointCoord$, and the velocity bounds are understood
componentwise. They are chosen to admit
$\CandidateSetpointVelocity=\zerovec{3n}$ throughout
$\AdmissibleBranchCoordDomain{b}$.

\subsection{Constraint-Preserving Regulator}
\label{subsec:constraint_preserving_stiffness_regulation}

The nominal stiffness velocity is chosen to be
\[
    \NominalStiffnessVelocity
    =
    -\StiffnessRateGain\StiffnessError,
    \qquad
    \StiffnessRateGain>0.
\]
Since this velocity may lie outside
$\im\StiffnessJacobian{b}$ or require inadmissible commanded-anchor
motion, the controller solves
\begin{equation}
\begin{aligned}
    \OptimalSetpointVelocity
    \in
    \argmin_{\CandidateSetpointVelocity\in\R^{3n}}
    \quad&
    \frac{1}{2}
    \WeightedNorm{
        \StiffnessJacobian{b}
        \CandidateSetpointVelocity
        +
        \StiffnessRateGain\StiffnessError
    }{\StiffnessWeightMat}^{2}
    +
    \frac{\VelocityRegularization}{2}
    \norm{\CandidateSetpointVelocity}^{2}
    \\
    \text{subject to}\quad&
    \eqref{eq:locally_admissible_velocity_set},
    \\
    &
    \StiffnessError\topT
    \StiffnessWeightMat
    \StiffnessJacobian{b}
    \CandidateSetpointVelocity
    \leq0,
\end{aligned}
\label{eq:constraint_preserving_stiffness_control}
\end{equation}
where $\VelocityRegularization>0$, and all quantities are
evaluated at $(\setpointCoord,t)$. The commanded-anchor velocity is
\begin{equation}
    \setpointVelocity
    =
    \OptimalSetpointVelocity.
    \label{eq:optimal_setpoint_velocity}
\end{equation}

The objective selects the admissible commanded-anchor velocity that
best approximates the nominal stiffness evolution while penalizing
unnecessary motion. The final inequality guarantees
$\dot{\StiffnessErrorEnergy}\leq0$ for a fixed target. Since zero
velocity satisfies all constraints, the optimization remains feasible
at every admissible configuration.

\begin{proposition}[Constraint-preserving local regulation]
\label{prop:constraint_preserving_local_stiffness_regulation}
Suppose that
$\setpointCoord(0)\in\AdmissibleBranchCoordDomain{b}$, the selected
equilibrium branch and the functions entering
\eqref{eq:constraint_preserving_stiffness_control} are continuously
differentiable, and the commanded evolution admits an absolutely
continuous solution. Then
$\setpointCoord(t)\in\AdmissibleBranchCoordDomain{b}$ throughout its
solution interval. For a fixed desired stiffness,
$\dot{\StiffnessErrorEnergy}\leq0$ almost everywhere.
\end{proposition}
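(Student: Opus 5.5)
The plan is a standard control-barrier-function (Nagumo-type) invariance argument, followed by a direct evaluation of the Lyapunov-like derivative. Fix an absolutely continuous solution $\setpointCoord(t)$, $t\in[0,t_f)$, of \eqref{eq:optimal_setpoint_velocity}. By definition, for almost every $t$ the derivative $\setpointVelocity(t)$ exists and equals an optimizer of \eqref{eq:constraint_preserving_stiffness_control}. In particular, it satisfies every constraint in \eqref{eq:locally_admissible_velocity_set} evaluated at $\setpointCoord(t)$. Feasibility is not an issue on the admissible domain, since zero velocity is admissible there, as noted before the statement.

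\textbf{Invariance.} For each $r$, set $z_r(t):=\AdmissibilityCoordFunction{r}(\setpointCoord(t))$. Because $\AdmissibilityCoordFunction{r}$ is $C^1$ and $\setpointCoord$ is absolutely continuous, $z_r$ is absolutely continuous with $\dot z_r=\nabla\AdmissibilityCoordFunction{r}\topT\setpointVelocity$ a.e. The constraint then gives
\[
\dot z_r\geq-\AdmissibilityRateFunction{r}(z_r)\quad\text{a.e.}
\]
I would argue by contradiction. Suppose $z_r(t_1)<0$ for some $t_1$, and let $t_0:=\sup\{t\le t_1: z_r(t)\ge0\}$, so that $z_r(t_0)=0$ and $z_r<0$ on $(t_0,t_1]$. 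On that interval, $\AdmissibilityRateFunction{r}(z_r)<0$ because $\AdmissibilityRateFunction{r}$ is strictly increasing with $\AdmissibilityRateFunction{r}(0)=0$. Hence $\dot z_r\ge-\AdmissibilityRateFunction{r}(z_r)>0$ a.e. there. Integrating gives $z_r(t_1)>z_r(t_0)=0$, which is a contradiction. This avoids needing a comparison lemma: only monotonicity of $\AdmissibilityRateFunction{r}$ is used, while local Lipschitz continuity would be needed only for a sharper comparison-based bound.

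\textbf{Subtlety in the invariance step.} The constraints are only defined where the branch exists, that is, on $\mathcal U^{\mathsf w}$. I would take the solution interval to be one on which the solution stays in $\mathcal U^{\mathsf w}$, consistent with ``throughout its solution interval.'' If the paper intends the admissibility functions to encode distance from branch singularities, that is already covered by the argument above. I expect this domain-of-definition point to be the main obstacle, because it has to be handled by carefully restricting the statement rather than by any computation.

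\textbf{Monotonicity of the error energy.} With $\DesiredStiffnessMat$ fixed, $\StiffnessErrorEnergy(\setpointCoord(t))$ is the composition of a $C^1$ function with an absolutely continuous curve, so the chain rule holds a.e. Then \eqref{eq:instantaneous_stiffness_error_derivative} gives
\[
\dot{\StiffnessErrorEnergy}=\StiffnessError\topT\StiffnessWeightMat\StiffnessJacobian{b}\setpointVelocity\le0\quad\text{a.e.}
\]
by the last constraint of \eqref{eq:constraint_preserving_stiffness_control}. Here $\StiffnessJacobian{b}$ is well defined and continuous because the branch and $\TotalStiffnessMat$ are $C^1$, via \eqref{eq:K_chain_rule} and \eqref{eq:equilibrium_sensitivity}.
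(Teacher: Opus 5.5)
Your proof is correct and has the same structure as the paper's. The barrier constraint gives $\dot z_r\ge-\alpha_r(z_r)$ almost everywhere, where $z_r(t)=c_r^{\mathsf w}(\bm q^{\mathsf w}(t))$ as in your proposal. Nonnegativity of each admissibility function is then propagated forward. Finally, the last QP constraint together with \eqref{eq:instantaneous_stiffness_error_derivative} yields $\dot V_K\le0$, and your descent step is exactly the paper's.

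The only difference is the invariance step. The paper invokes the comparison principle, which bounds $z_r$ from below by the solution of $\dot y_r=-\alpha_r(y_r)$, $y_r(0)=z_r(0)\ge0$, and that solution remains nonnegative. Local Lipschitz continuity of $\alpha_r$ is what makes this lemma applicable. You replace it with a first-exit contradiction that uses only $\alpha_r(s)<0$ for $s<0$ and the fundamental theorem of calculus for absolutely continuous functions.

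\emph{What each route buys.} Your argument is self-contained and does not need the Lipschitz hypothesis. It also works directly with the almost-everywhere inequality. Textbook comparison lemmas are usually stated for Dini derivatives holding at every instant, so the paper's one-line appeal tacitly relies on a version for absolutely continuous functions. The comparison route, in exchange, is quantitative: it gives $z_r(t)\ge y_r(t)$. For instance, it gives $z_r(t)\ge z_r(0)e^{-\eta t}$ when $\alpha_r(s)=\eta s$, as in the implemented tension constraint. This bounds how fast the margins can erode, rather than only showing that they stay nonnegative.

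Your explicit restriction of the solution interval to $\mathcal U^{\mathsf w}$ states an assumption the paper leaves implicit.
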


\begin{proof}
Along the commanded evolution,
\(
    \frac{\mathrm d}{\mathrm dt}
    \AdmissibilityCoordFunction{r}
    \geq
    -
    \AdmissibilityRateFunction{r}
    \big(\AdmissibilityCoordFunction{r}\big)
\)
almost everywhere. The comparison principle preserves the
nonnegativity of every admissibility function. The last constraint in
\eqref{eq:constraint_preserving_stiffness_control}, together with
\eqref{eq:instantaneous_stiffness_error_derivative}, gives
$\dot{\StiffnessErrorEnergy}\leq0$.
\end{proof}

\subsection{Local Blockage Status and Feedforward Extension}
\label{subsec:controller_status_extensions}

Convergence is reported when
$\norm{\StiffnessError}\leq\StiffnessErrorTolerance$. Otherwise, local
blockage is declared when
\begin{equation}
\begin{aligned}
    \norm{\OptimalSetpointVelocity}
    &\leq
    \VelocityTolerance,
    &
    -\dot{\StiffnessErrorEnergy}
    &\leq
    \DescentTolerance
\end{aligned}
\label{eq:local_blockage_condition}
\end{equation}
for a prescribed dwell time. This condition indicates negligible local
progress and may result from active constraints, insufficient rank of
the stiffness Jacobian, or a constrained stationary point. It does not
establish global nonrealizability.

If the desired stiffness-rate feedforward  
\[
    \DesiredStiffnessVelocity(t)
    :=
    \frac{\mathrm d}{\mathrm dt}
    \vech\!\left(\DesiredStiffnessMat(t)\right)
\]
is available, the stiffness-velocity mismatch in
\eqref{eq:constraint_preserving_stiffness_control} becomes
\[
    \StiffnessJacobian{b}\CandidateSetpointVelocity
    -
    \DesiredStiffnessVelocity
    +
    \StiffnessRateGain\StiffnessError,
\]
and the descent constraint is formed using
$\dot{\StiffnessError}
=
\StiffnessJacobian{b}\CandidateSetpointVelocity
-
\DesiredStiffnessVelocity$.
Simultaneous equilibrium-position and stiffness regulation follows by
replacing the stiffness error and Jacobian with the combined task error
and $\CombinedJacobian$; the admissibility constraints remain
unchanged.

A block diagram of the controller is provided in Figure~\ref{fig:constraint_preserving_control_architecture}.

\begin{figure}[t]
    \centering

    \begin{tikzpicture}[
        node distance=5mm,
        controlblock/.style={
            coordinateblock,
            text width=42mm,
            minimum height=12mm,
            inner sep=2.5pt,
            font=\scriptsize
        },
        systemblock/.style={
            equilibriumblock,
            text width=42mm,
            minimum height=11mm,
            inner sep=2.5pt,
            font=\scriptsize
        },
        ioblock/.style={
            draw=black!55,
            fill=white,
            rounded corners=1.5pt,
            text width=34mm,
            minimum height=8mm,
            align=center,
            inner sep=2mm,
            font=\scriptsize
        },
        sideblock/.style={
            feasibleblock,
            text width=12mm,
            inner sep=2mm,
            font=\scriptsize
        },
        feedbackarrow/.style={
            -{Latex[length=1.5mm]},
            draw=blue!65!black,
            dashed,
            line width=0.8pt
        }
    ]

    \node[ioblock] (desired)
        {Desired stiffness\\
        $\DesiredStiffnessMat(t)$};

    \node[controlblock, below=of desired] (controller)
        {\textbf{Constraint-preserving regulator}\\
        local stiffness-error descent\\
        { Eq.~\eqref{eq:constraint_preserving_stiffness_control}}};

    \node[sideblock, left=5mm of controller] (constraints)
        {Admissibility\\
        constraints};

    \node[sideblock, right=5mm of controller] (status)
        {Converged\\
        or blocked};

    \node[ioblock, below=of controller] (command)
        {Commanded-anchor velocity\\
        $\setpointVelocity=\OptimalSetpointVelocity$};

    \node[systemblock, below=of command] (system)
        {\textbf{Load--cable--vehicle system}\\
        selected gravity-loaded branch};

    \node[ioblock, below=of system] (output)
        {Stiffness and constraint margins};

    \draw[flowarrow] (desired) -- (controller);
    \draw[dependence] (constraints) -- (controller);
    \draw[flowarrow] (controller) -- (status);
    \draw[flowarrow] (controller) -- (command);
    \draw[flowarrow] (command) -- (system);
    \draw[flowarrow] (system) -- (output);

    \draw[feedbackarrow]
        (output.east)
        -- ++(0.95,0)
        |- (controller.south east);

    \end{tikzpicture}

    \caption{Constraint-preserving local stiffness-regulation loop.}
    \label{fig:constraint_preserving_control_architecture}
\end{figure}
\section{Interpretation, Scope, and Practical Implications}
\label{sec:interpretation_scope}

\subsection{Gravity and the Operating Point}
\label{subsec:interpretation_scope_gravity}

Gravity determines equilibrium feasibility and the tensions supporting
the load. Since
$\geometricStiffnessCoeff{i}
=\cableTension{i}/\cableLength{i}$,
its stiffness contribution is mediated by the gravity-loaded cable
directions and tensions rather than appearing as an additive stiffness
term. The same commanded-anchor geometry may therefore generate
different passive stiffnesses under different load masses or external
forces. Equilibrium and stiffness shaping must consequently be
considered jointly.

In the spatially homogeneous model, equilibrium and stiffness are
invariant under a common translation of the load and all commanded
anchors: such a translation preserves their relative displacements and
therefore the cable directions, tensions, and passive stiffness.
Accordingly, equilibrium-position regulation is implemented by adding to all commanded anchors the same translational
velocity generated from the load-position error, whereas relative
anchor motion performs stiffness shaping. This separation holds before
world-fixed flight-region, obstacle, and other geometric constraints
are imposed; the complete commanded-anchor velocity remains subject to
the admissibility conditions of
\eqref{eq:locally_admissible_velocity_set}.

\subsection{Local Regulation and Constraint Preservation}
\label{subsec:interpretation_scope_local_global}

The regulator accepts an arbitrary instantaneous desired stiffness
without requiring a precomputed stiffness workspace or feasible
reference trajectory. From an admissible commanded-anchor
configuration, it searches locally for a velocity that decreases the
stiffness error while preserving the selected equilibrium branch and
the imposed operational constraints.

The constraints are imposed directly through the admissible velocity
set \eqref{eq:locally_admissible_velocity_set}, rather than treated as
secondary objectives. They may represent tension, geometry, actuation,
and branch-regularity margins. Their gradients account for the direct
setpoint dependence and the dependence induced through the
gravity-loaded equilibrium.

The condition \eqref{eq:local_blockage_condition} reports negligible
progress of the implemented local regulator. Such blockage may result
from active constraints, insufficient rank of the stiffness Jacobian,
or a constrained stationary point. It does not establish global
nonrealizability, since another configuration or equilibrium branch
may realize the desired stiffness.

The constraint-preservation result applies locally to the commanded
quasi-static evolution along the selected equilibrium branch and is
expected to remain physically meaningful when the setpoints evolve
slowly enough that
\begin{equation}
    \loadPos(t)
    \simeq
    \EquilibriumBranchCoord{b}
    \big(\setpointCoord(t)\big).
    \label{eq:quasistatic_branch_condition}
\end{equation}
This condition does not require the actual aerial anchors to coincide
with their commanded points, since their equilibrium deflections
generate the cable tensions. Finite-bandwidth dynamics, tracking and
estimation errors, and disturbances may reduce the physical
admissibility margins; practical implementation therefore requires
sufficiently slow commanded-anchor motion together with positive
tension, separation, and actuation margins.

\subsection{Relation to Cable-Driven Parallel Manipulators}
\label{subsec:relation_cdpm}

The proposed framework is an aerial counterpart of stiffness analysis
in cable-driven parallel manipulators. Conventional systems generally
use fixed frame anchors and actuated cable lengths or tensions. Here,
the aerial vehicles provide movable endpoints whose finite closed-loop
translational stiffness makes them compliant anchors.

Under isotropic aerial-anchor stiffness, an inextensible cable and its
compliant aerial anchor are quasi-statically equivalent to a unilateral
elastic cable connected directly to the commanded anchor point. Flight
therefore shapes passive load stiffness through the coupled variation
of cable directions and gravity-loaded tensions, while introducing
additional workspace, collision, thrust, and equilibrium-branch
constraints.

\subsection{Passive Stiffness and Active Impedance Control}
\label{subsec:relation_active_impedance}

The stiffness studied here is a local equilibrium property of the
cable--vehicle--load interconnection. Active impedance or admittance
control instead imposes dynamic behavior through feedback. The two are
complementary: passive stiffness shaping modifies the immediate
mechanical response, while active control may add damping, regulate
transients, and reject disturbances.

A complete load-port impedance model would additionally include
apparent inertia and damping:
\begin{equation}
    \bm M_{\mathrm{app}}\delta\ddot{\bm p}
    +
    \bm D_{\mathrm{app}}\delta\dot{\bm p}
    +
    \bm K_{\mathrm{app}}\delta\bm p
    =
    \delta\bm f_{\mathrm{ext}}.
    \label{eq:apparent_dynamic_impedance}
\end{equation}
These additional terms depend on the system dynamics, dissipation,
cable properties, actuator bandwidth, and delays. Their derivation
requires a dynamic model and lies outside the present scope.

\subsection{Scope and Validation}
\label{subsec:scope_validation}

The theory is quasi-static and translational and assumes a point load,
taut straight inextensible cables, constant gravity, and finite
Cartesian stiffness at the aerial anchors. It excludes cable sagging,
slack--taut transitions, rigid-load attitude, aerodynamic coupling,
actuator saturation, estimation errors, and contact dynamics.

The dynamic simulations (Sec.~\ref{sec:dynamic_simulation_validation}) test whether the predicted stiffness and local
regulation remain informative under coupled vehicle--load dynamics,
elastic-damped tendons, and environmental interaction. Perturbation
tests compare analytical and identified stiffnesses, while regulation
tests evaluate error reduction and constraint preservation for both
locally attainable and constraint-limited commands. These simulations
stress-test the quasi-static predictions without extending the
invariance guarantee to the full dynamic system.
\section{Dynamic Simulation Validation Framework}
\label{sec:dynamic_simulation_validation}

\subsection{Validation Questions}
\label{subsec:validation_questions}

The dynamic validation campaign is organized around three successive questions:
\begin{enumerate}
    \item Does the quasi-static theory predict the Cartesian stiffness
    empirically observed at the payload under full closed-loop system
    dynamics?

    \item Can aerial-anchor repositioning drive the system toward
    gravity-loaded operating points with prescribed passive stiffness
    while preserving positive cable tensions?

    \item Does task-aligned passive compliance improve the execution of
    contact-rich cable-suspended manipulation tasks?
\end{enumerate}
These questions define a progressive validation chain:
\[
    \text{predict stiffness}
     \longrightarrow\
    \text{shape stiffness}
     \longrightarrow\
    \text{exploit stiffness}.
\]
The first question tests the analytical model beyond its derivation
assumptions. The second evaluates the local stiffness-shaping
controller within the feasible taut domain. The third examines whether
the resulting passive mechanical behavior provides a practical benefit
during physical interaction.

\subsection{Simulation System and Model Fidelity}
\label{subsec:simulation_system_fidelity}

The validation environment is implemented in MuJoCo using rigid-body
dynamics, spatial tendons, and environmental contacts. It comprises
four identical quadrotors connected by elastic-damped tendons to
spatially separated attachment sites on a free rigid payload. Each
vehicle is controlled by a nonlinear geometric controller on $SE(3)$,
with its commanded position acting as the aerial-anchor setpoint.
Table~\ref{tab:simulation_parameters} summarizes the model and
controller parameters.

\begin{table}[t]
    \centering
    \caption{MuJoCo model and controller parameters.}
    \label{tab:simulation_parameters}
    \footnotesize
    \setlength{\tabcolsep}{4pt}
    \renewcommand{\arraystretch}{1.12}
    \begin{tabular}{
        @{}
        p{0.55\columnwidth}
        p{0.37\columnwidth}
        @{}
    }
        \toprule
        \textbf{Parameter}
        & \textbf{Value}
        \\
        \midrule

        System: number of quadrotors and cables
        & $4$
        \\

        Quadrotor: mass
        & $1.28\,\mathrm{kg}$
        \\

        Quadrotor: principal inertia
        & $\operatorname{diag}(0.015,\,0.015,\,0.007)\,
           \mathrm{kg\,m^2}$
        \\

        Payload: mass
        & $2.00\,\mathrm{kg}$
        \\

        Payload: principal inertia
        & $\operatorname{diag}(0.05,\,0.05,\,0.05)\,
           \mathrm{kg\,m^2}$
        \\

        Translational control: position gain $\bm K_p$
        & $\operatorname{diag}(12,\,12,\,14)\,\mathrm{N/m}$
        \\

        Translational control: velocity gain $\bm D_p$
        & $\operatorname{diag}(8,\,8,\,6)\,\mathrm{N\,s/m}$
        \\

        Attitude control: roll--pitch/yaw gains
        & $20.0/4.0\,\mathrm{N\,m/rad}$
        \\

        Attitude control: roll--pitch/yaw rate gains
        & $1.5/0.35\,\mathrm{N\,m\,s/rad}$
        \\

        Tendon: rest length
        & $1.50\,\mathrm{m}$
        \\

        Tendon: axial stiffness
        & $2500\,\mathrm{N/m}$
        \\

        Tendon: axial damping
        & $150\,\mathrm{N\,s/m}$
        \\

        Simulation: integration timestep
        & $2\,\mathrm{ms}$
        \\

        Geometry: vehicle and payload attachment sites
        & Fixed by the simulation model
        \\

        \bottomrule
    \end{tabular}
\end{table}

The simulation retains the principal physical phenomena needed for
validation: vehicle and payload inertia, nonlinear closed-loop
aerial-vehicle dynamics, finite cable elasticity and damping, offset
payload attachments, payload rotation, translation--rotation coupling,
and contact forces. It is therefore richer than the quasi-static
point-load model used to derive the stiffness map. In particular, the
simulation assesses whether the analytical translational stiffness
remains informative in the presence of rigid-payload motion, tendon
elongation, dissipation, and finite settling dynamics.

The simulator remains an approximation of the physical system. Its
tendon model does not represent distributed cable mass, sagging,
bending, aerodynamic drag, or transverse cable vibration. This
distinguishes the gap between theory and simulation from the remaining
gap between simulation and physical reality.

The mildly anisotropic translational gains permit direct comparison
with Theorem~\ref{thm:single_leg_passive_stiffness}; dedicated
isotropic-gain tests evaluate the virtual elastic-cable equivalence.
Nonzero equilibrium errors between actual and commanded vehicle
positions are intentional, since they generate the cable forces
through the finite aerial-anchor stiffness.

\rev{The payload geometry and attachment-site coordinates are fixed in the simulation model. Their numerical specification, together with measured vehicle and coupled-system settling times, belongs to the reproducibility record of the final validation campaign.}

\subsubsection{Theory-to-Simulation and Simulation-to-Reality Gaps}
\label{subsubsec:model_fidelity_gaps}

Table~\ref{tab:model_fidelity_comparison} distinguishes two modeling
gaps with different roles in the validation. The
\emph{theory-to-simulation gap} is intentionally substantial: the
simulation adds rigid-payload motion, closed-loop vehicle dynamics,
tendon elasticity and damping, and environmental contact. Agreement
with the empirical stiffness under these conditions would indicate
that the quasi-static theory captures the dominant local compliance
mechanism beyond the assumptions used in its derivation. Disagreement
can instead reveal when rotational coupling, cable elongation, or
transient dynamics cease to be secondary.

The \emph{simulation-to-reality gap} concerns effects still omitted
from the numerical model, including distributed cable mass and
sagging, actuator imperfections, sensing errors, and aerodynamic
coupling. The simulations therefore provide a dynamic stress test of
the theory rather than a substitute for physical experiments. Their
purpose is to assess stiffness prediction, anchor reconfiguration, and
contact behavior before these remaining effects are examined
experimentally.

\begin{table*}[t]
    \centering
    \caption{Relationship among the analytical model, the MuJoCo
    implementation, and the corresponding physical system.}
    \label{tab:model_fidelity_comparison}
    \renewcommand{\arraystretch}{1.15}
    \begin{tabular}{
        >{\raggedright\arraybackslash}p{0.08\textwidth}
        >{\raggedright\arraybackslash}p{0.21\textwidth}
        >{\raggedright\arraybackslash}p{0.30\textwidth}
        >{\raggedright\arraybackslash}p{0.31\textwidth}}
        \hline
        \textbf{Phenomenon}
        & \textbf{Analytical representation}
        & \textbf{MuJoCo representation}
        & \textbf{Remaining simulation-to-reality gap}
        \\
        \hline

        Payload
        & Point load with translational displacement
        & Free rigid body with mass, inertia, attitude, and four
          spatially separated attachment sites
        & Structural flexibility, uncertain inertial parameters, and
          unmodeled payload aerodynamics
        \\

        Aerial vehicles
        & Movable anchors with prescribed linear Cartesian stiffness
        & Rigid quadrotors with nonlinear $SE(3)$ tracking and finite translational gains
        & Motor dynamics, thrust uncertainty, battery effects, and
          aerodynamic interaction
        \\

        Anchor compliance
        & Symmetric positive-definite linear stiffness map
        & Controller-induced displacement under cable loading
        & Gain variation, saturation, delays, estimation errors, and off-equilibrium nonlinearities
        \\

        Cables
        & Taut, straight, massless, and inextensible
        & Elastic-damped spatial tendons with finite axial stiffness
          and rest length
        & Distributed mass, sagging, bending, drag, transverse
          vibration, and cable contact
        \\

        Dynamics
        & Quasi-static evolution along a selected equilibrium branch
        & Coupled vehicle--payload transients with inertia and
          dissipation
        & Unmodeled high-frequency dynamics and hardware-dependent
          settling behavior
        \\

        Contact
        & Excluded from the stiffness derivation
        & Optimization-based contact between the payload and
          environment
        & Surface compliance, friction uncertainty, impact dynamics,
          and geometry imperfections
        \\

        Sensing
        & Exact operating-point geometry and tension
        & \rev{Simulator-exact payload, vehicle, cable-geometry, and tendon quantities}
        & Sensor noise, calibration errors, latency, packet loss, and
          imperfect tension estimation
        \\
        \hline
    \end{tabular}
\end{table*}

\subsection{Empirical Cartesian Stiffness Identification}
\label{subsec:empirical_stiffness_identification}

The analytical stiffness is compared with an empirical local stiffness
identified directly from the dynamic simulation. For each tested
gravity-loaded equilibrium, the aerial-anchor setpoints are held fixed
and the payload is subjected to small perturbations around its settled
operating point.

Let
$\{\Delta\bm p_j,\Delta\bm f_j\}_{j=1}^{N_{\mathrm p}}$
denote the measured payload-displacement and restoring-force
variations. The empirical stiffness matrix is obtained from
\begin{equation}
    \EmpiricalStiffnessMat
    :=
    \argmin_{\bm X\in\mathbb S^3}
    \sum_{j=1}^{\NumPerturbations}
    \left\|
    \ForcePerturbation{j}
    +
    \bm X\PositionPerturbation{j}
    \right\|^2.
    \label{eq:empirical_stiffness_identification}
\end{equation}
The perturbations must span $\Rthree$ and remain sufficiently small
for the local linear approximation to apply. The identified matrix is
projected onto $\Spp(3)$ only if needed to suppress numerical
asymmetry or measurement noise.

\rev{The identification protocol uses perturbations spanning the three
translational directions and a settling test based on residual payload
motion. Perturbation amplitudes must be verified through a local-linearity
study. Because the simulated payload is a free rigid body, the reported
quantity is an effective translational stiffness that includes any residual
translation--rotation coupling unless attitude is explicitly constrained.}

\subsection{Analytical Stiffness Validation}
\label{subsec:analytical_stiffness_validation}

This experiment evaluates whether the analytical stiffness
$\TotalStiffnessMat_{\mathrm{pred}}$ predicts the empirical stiffness
$\EmpiricalStiffnessMat$ across a broad set of taut gravity-loaded
configurations. The analytical matrix is evaluated from the measured
operating-point cable directions, tensions, and aerial-anchor
stiffnesses, rather than from nominal formation data.

\begin{figure*}[t]
    \centering

    \begin{subfigure}[t]{0.35\textwidth}
        \centering
        \includegraphics[width=\linewidth]{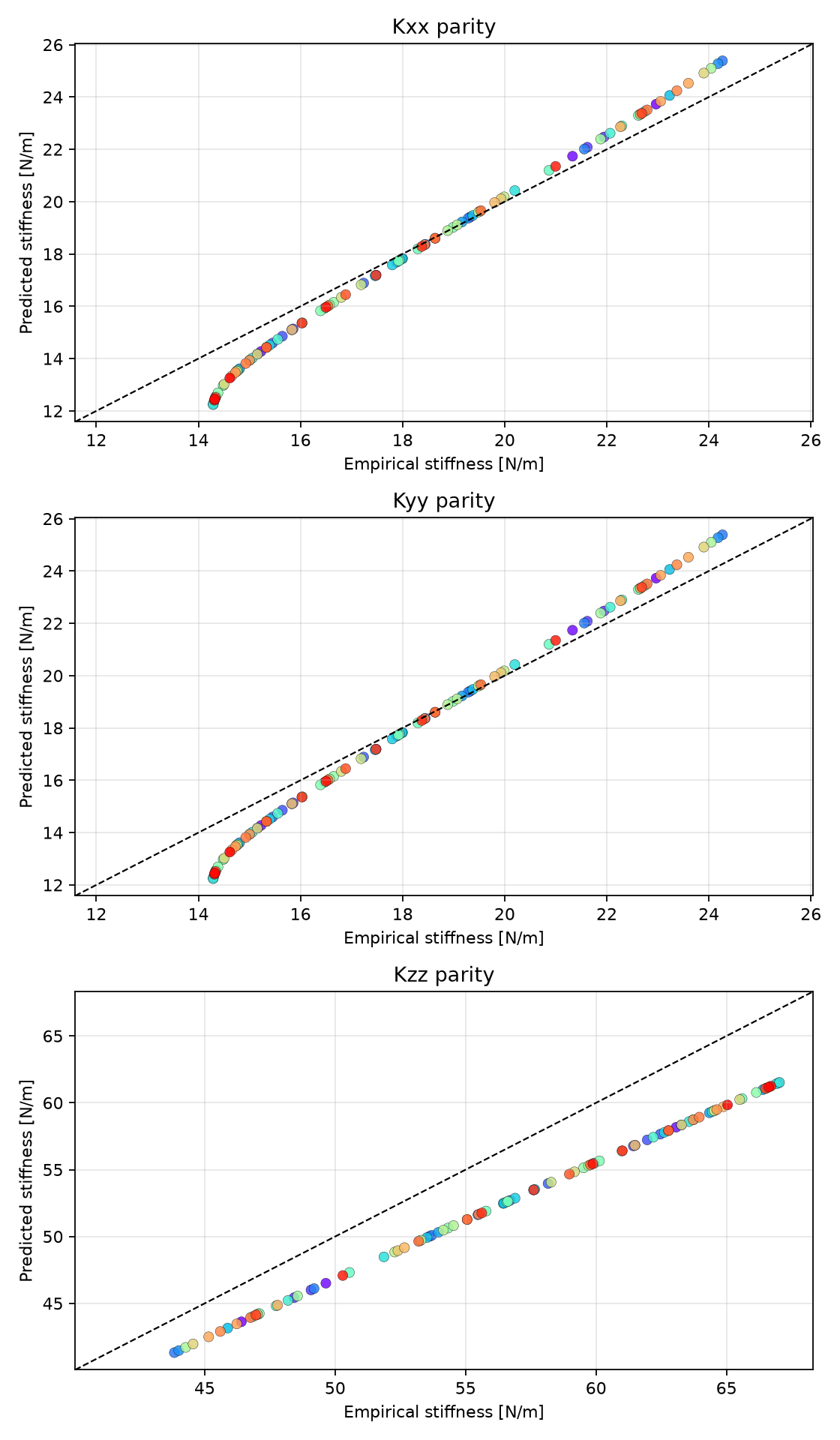}
        \caption{Diagonal stiffness parity.}
        \label{fig:stiffness_parity_diagonal}
    \end{subfigure}
    \hspace{0.01\textwidth}
    \begin{subfigure}[t]{0.35\textwidth}
        \centering
        \includegraphics[width=\linewidth]{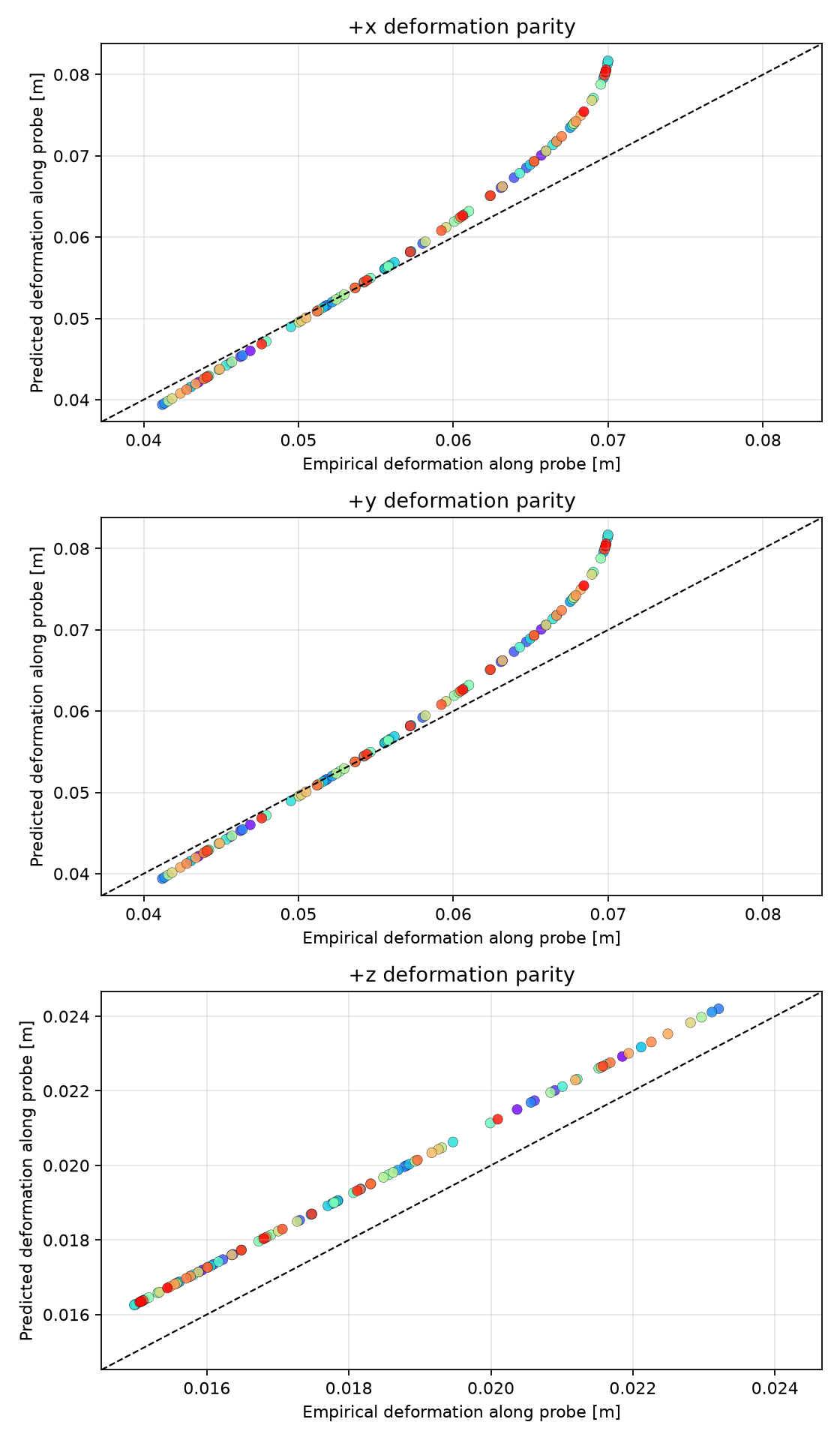}
        \caption{Positive-axis deformation parity.}
        \label{fig:deformation_parity_positive}
    \end{subfigure}

    \caption{Parity-plot comparison across the selected stiffness and deformation quantities.}
    \label{fig:parity_two_panel}
\end{figure*}

\subsubsection{Configuration Campaign}
\label{subsubsec:configuration_campaign}

A set of aerial-anchor configurations is sampled over the admissible
flight region. Each candidate is simulated until the coupled system
settles, after which configurations violating positive-tension,
workspace, collision, or equilibrium requirements are rejected. The
remaining equilibria should cover variations in cable direction,
tension, stiffness anisotropy, and tendon elongation.

\rev{The campaign is designed to report both accepted and rejected
configurations, including cases near low-tension and large-deformation
boundaries. Sampling bounds, acceptance counts, and settling thresholds
are treated as protocol parameters and must accompany the released
numerical dataset.}

\subsubsection{Error Metrics and Statistical Analysis}
\label{subsubsec:stiffness_error_metrics}

For each accepted configuration, define the normalized matrix error
\begin{equation}
    \StiffnessRelativeError
    :=
    \frac{
        \left\|
        \PredictedStiffnessMat-\EmpiricalStiffnessMat
        \right\|_{\mathrm F}
    }{
        \left\|\EmpiricalStiffnessMat\right\|_{\mathrm F}
    }.
    \label{eq:relative_stiffness_error}
\end{equation}

The comparison also considers the principal stiffnesses and principal
directions, because a small aggregate matrix error can conceal a
relevant directional mismatch.

To express the error in task-level units, a common set of test forces
$\Delta\bm f_\ell$ is applied to both compliance matrices:
\begin{equation}
\begin{aligned}
    \PredictedDisplacement{\ell}
    &=
    {\PredictedStiffnessMat}^{-1}\TestForce{\ell},
    \\
    \EmpiricalDisplacement{\ell}
    &=
    {\EmpiricalStiffnessMat}^{-1}\TestForce{\ell}.
\end{aligned}
    \label{eq:deflection_prediction_comparison}
\end{equation}
The resulting displacement error complements the matrix-level metrics
with a directly interpretable physical quantity.

\rev{Parity plots and error distributions are used instead of selected
examples alone. Individual configurations are retained so that aggregate
statistics do not conceal directional or boundary-dependent errors.}

\subsubsection{Failure and Boundary Cases}
\label{subsubsec:stiffness_failure_cases}

Prediction errors are analyzed against the effects excluded from the
analytical model, including tendon elongation, payload rotation,
translation--rotation coupling, proximity to slackness, and incomplete
settling. This analysis serves to identify the operating region in
which the quasi-static inextensible-cable approximation remains
informative.

\rev{Error is examined against minimum cable tension, tendon strain,
payload-attitude deviation, and residual motion. Unfavorable cases remain
part of the analysis because they identify the boundary of the reduced
model's useful operating region.}

\subsection{Stiffness-Shaping Controller Validation}
\label{subsec:controller_validation}

The second validation stage evaluates whether aerial-anchor
repositioning can drive the dynamic system toward prescribed passive
stiffnesses while preserving tautness and the imposed operational
constraints. Desired stiffness matrices are supplied directly to the
local regulator, without prior generation of a feasible stiffness
trajectory. The high-level regulator updates the commanded-anchor
positions, while the low-level $SE(3)$ controllers generate the
corresponding cable-loaded vehicle motion.

The experiments include stiffness commands that are locally attainable
from the initial configuration and commands whose continued pursuit is
limited by tension, flight-region, or other admissibility constraints.
The former evaluate convergence toward the desired stiffness, whereas
the latter evaluate constraint preservation and the local blockage
criterion in \eqref{eq:local_blockage_condition}.

Controller performance is evaluated through the stiffness error
$\StiffnessError(t)$, load-position error, commanded-anchor velocities,
and minimum tension and operational margins. The principal measures are
the initial, final, and minimum attained stiffness errors, load-position
accuracy, convergence time for attained targets, and active constraints
for locally blocked commands. The validity of the quasi-static
approximation is assessed by monitoring the distance between the actual
payload position and the instantaneous selected equilibrium during
aerial-anchor repositioning.

\begin{figure}[t]
    \centering

    \begin{subfigure}[t]{\columnwidth}
        \centering
        \includegraphics[width=\linewidth]{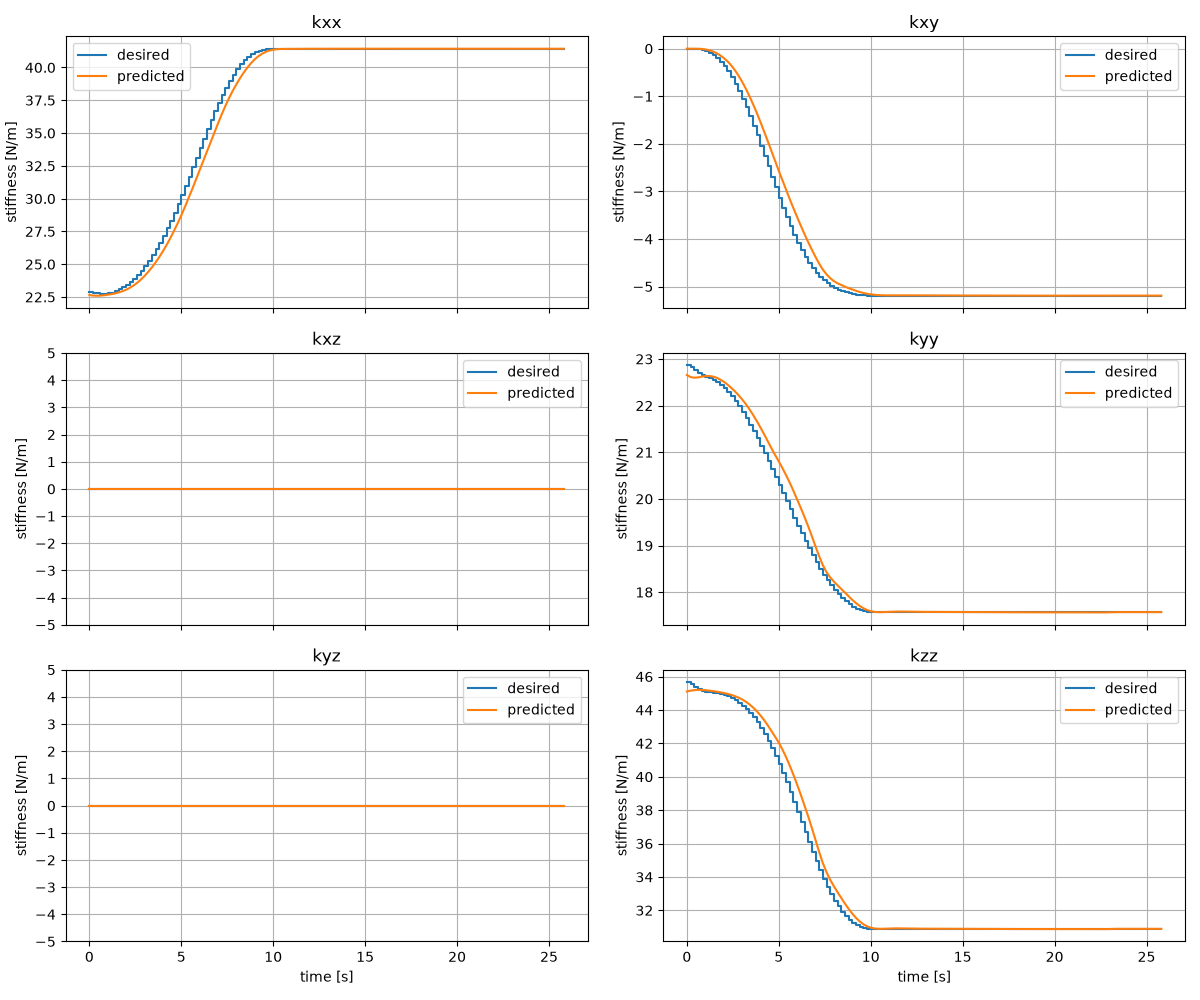}
        \caption{Stiffness regulation: greater stiffness along the longitudinal axis than along the lateral axis}
        \label{fig:stack_1}
    \end{subfigure}

    \vspace{0.5em}

    \begin{subfigure}[t]{\columnwidth}
        \centering
        \includegraphics[width=\linewidth]{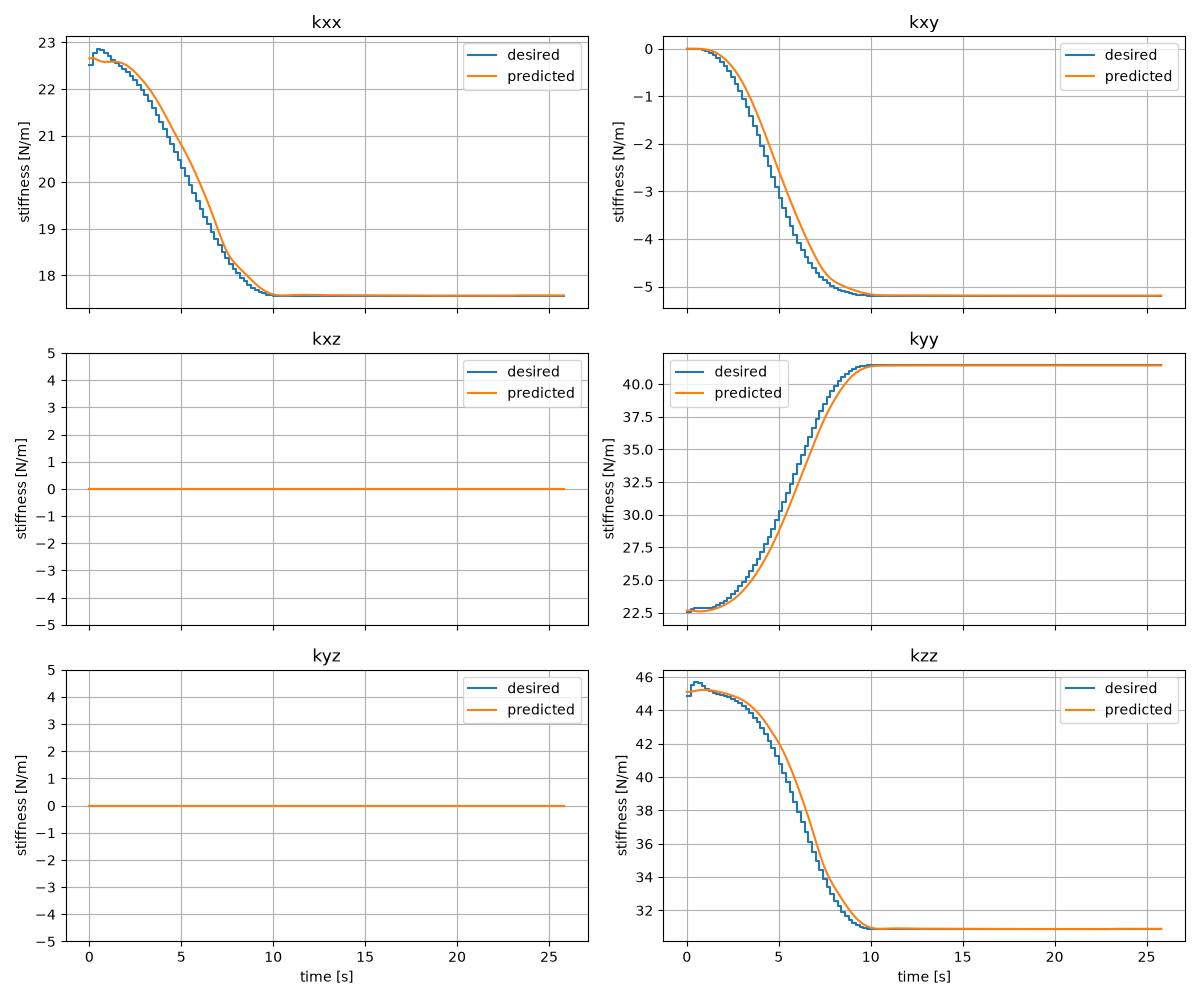}
        \caption{Stiffness regulation: greater stiffness along the lateral axis than along the longitudinal axis}
        \label{fig:stack_2}
    \end{subfigure}
    \caption{Stiffness tracking plots for 2 of the 4 cases used in zigzag sliding}
    \label{fig:stack_asym}
\end{figure}
    
\begin{figure}[t]
    \begin{subfigure}[t]{\columnwidth}
        \centering
        \includegraphics[width=\linewidth]{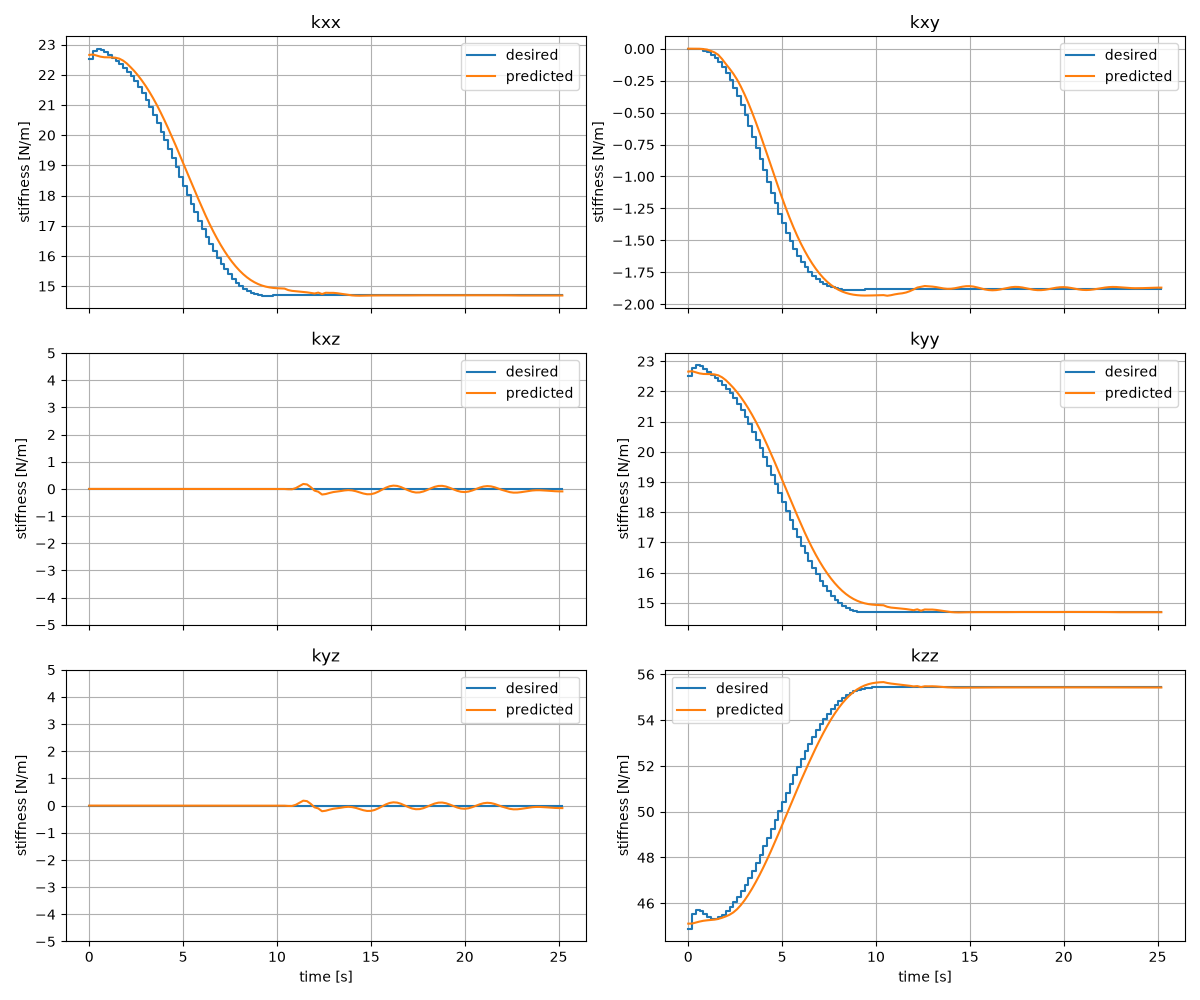}
        \caption{Stiffness regulation: reduced stiffness along both horizontal axes}
        \label{fig:stack_3}
    \end{subfigure}

    \vspace{0.5em}

    \begin{subfigure}[t]{\columnwidth}
        \centering
        \includegraphics[width=\linewidth]{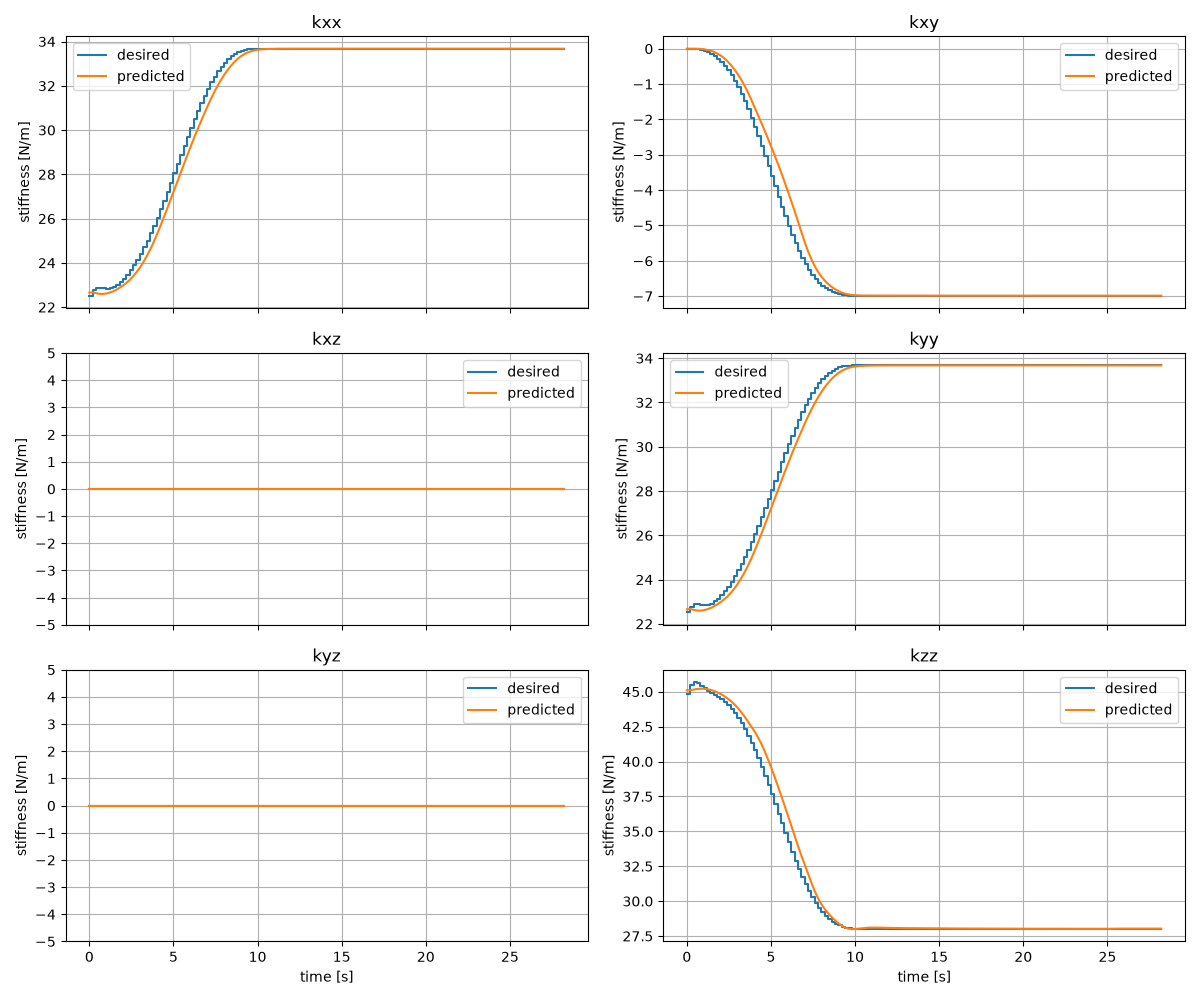}
        \caption{Stiffness regulation: increased stiffness along both horizontal axes}
        \label{fig:stack_4}
    \end{subfigure}

    \caption{Stiffness tracking plots for 2 of the 4 cases used in zigzag sliding}
    \label{fig:stack_sym}
\end{figure}

\begin{figure}[t]
    \centering

    \begin{subfigure}[t]{0.48\columnwidth}
        \centering
        \includegraphics[width=\linewidth]{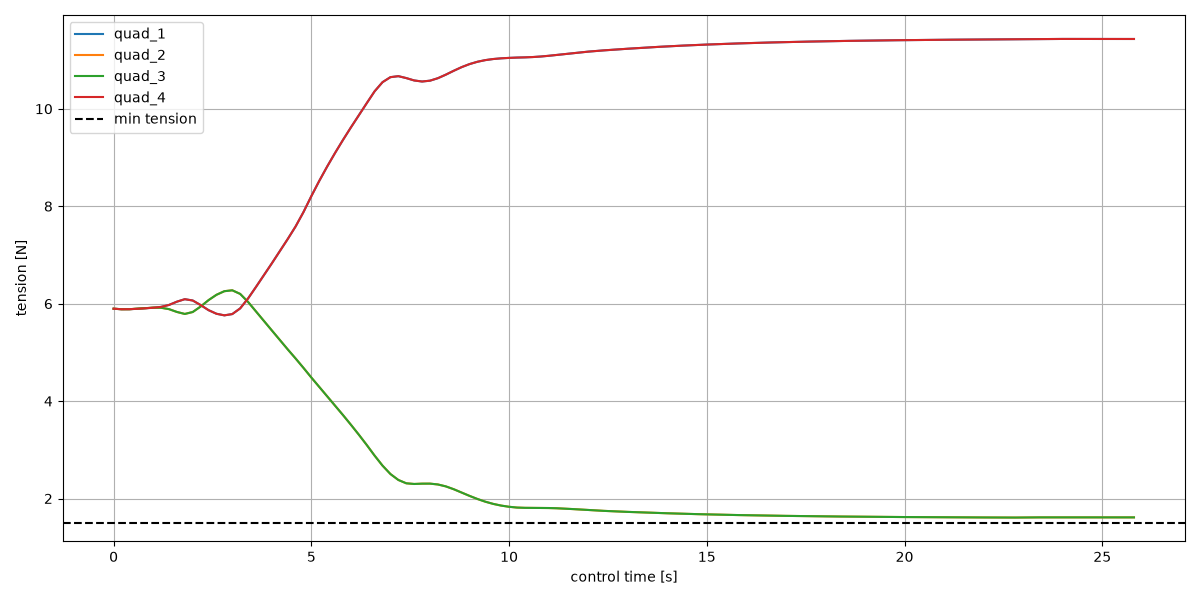}
        \caption{Longitudinally stiff}
        \label{fig:tension_asym_x}
    \end{subfigure}
    \hfill
    \begin{subfigure}[t]{0.48\columnwidth}
        \centering
        \includegraphics[width=\linewidth]{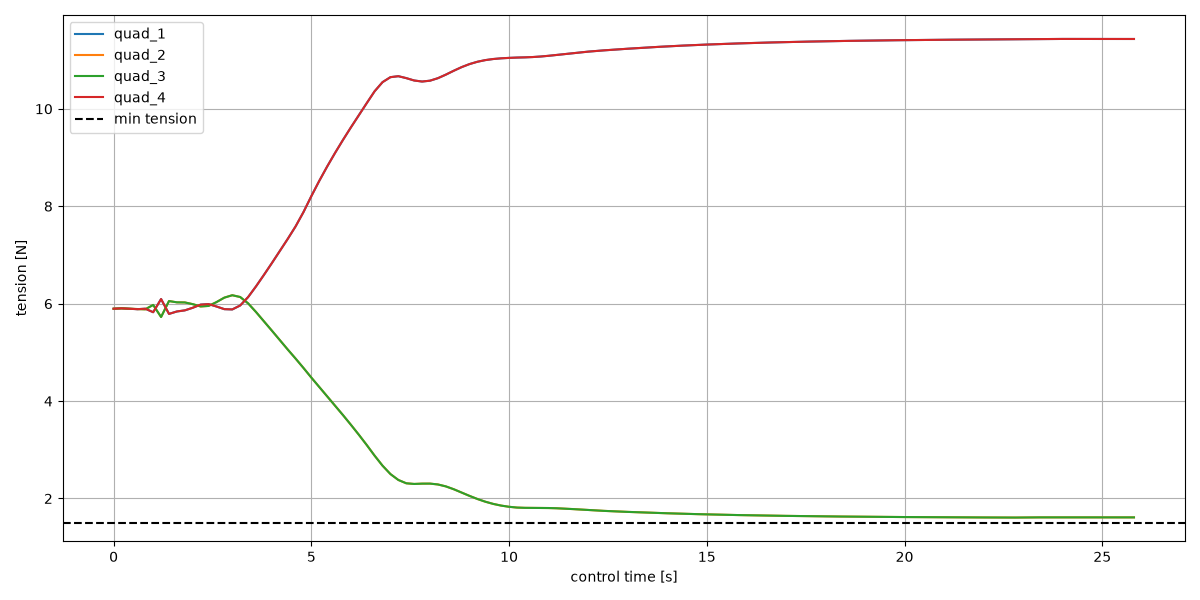}
        \caption{Laterally stiff}
        \label{fig:tension_asym_y}
    \end{subfigure}

    \vspace{0.5em}

    \begin{subfigure}[t]{0.48\columnwidth}
        \centering
        \includegraphics[width=\linewidth]{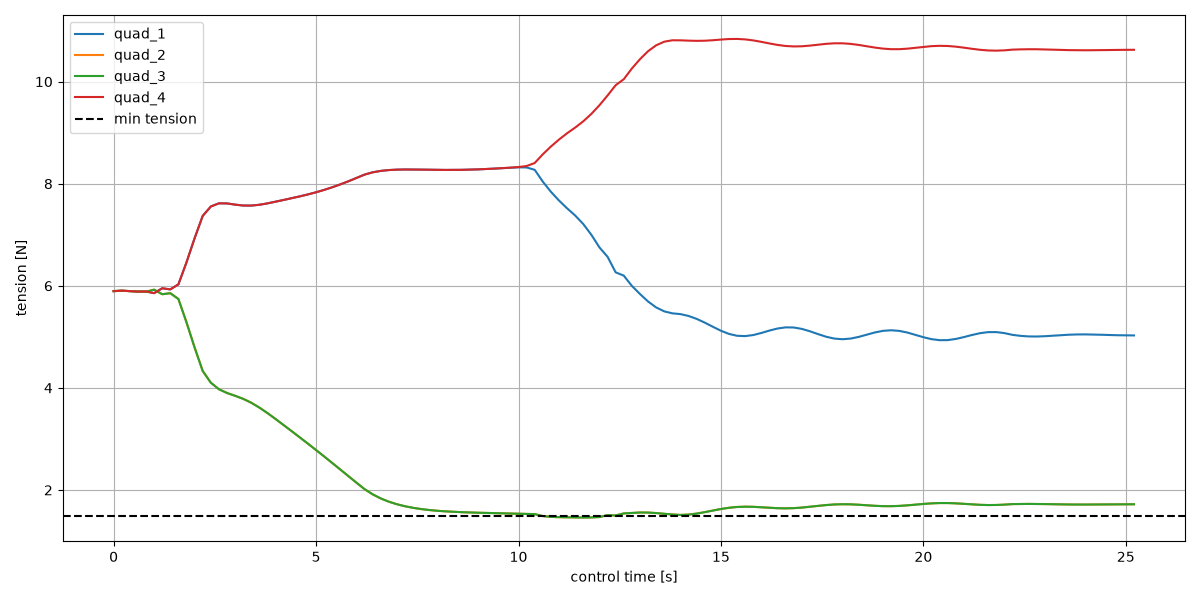}
        \caption{Globally compliant}
        \label{fig:tension_compliant}
    \end{subfigure}
    \hfill
    \begin{subfigure}[t]{0.48\columnwidth}
        \centering
        \includegraphics[width=\linewidth]{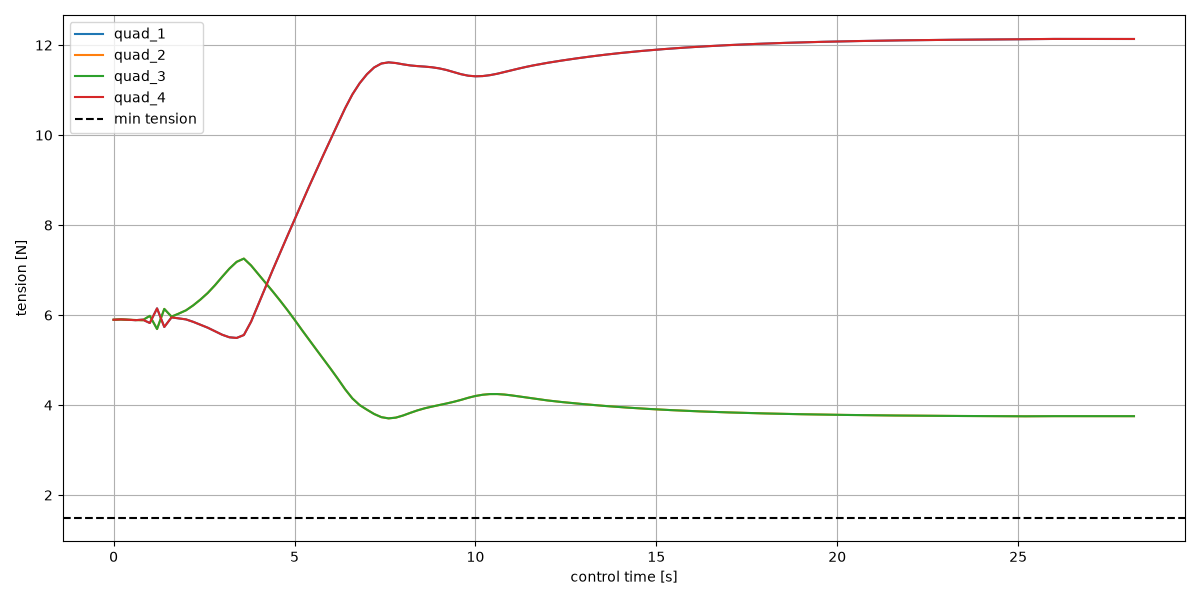}
        \caption{Globally stiff}
        \label{fig:tension_stiff}
    \end{subfigure}

    \caption{Cable tensions during stiffness regulation.}
    \label{fig:tensions_all}
\end{figure}

\begin{figure}[t]
    \centering

    \begin{subfigure}[t]{0.48\columnwidth}
        \centering
        \includegraphics[width=\linewidth]{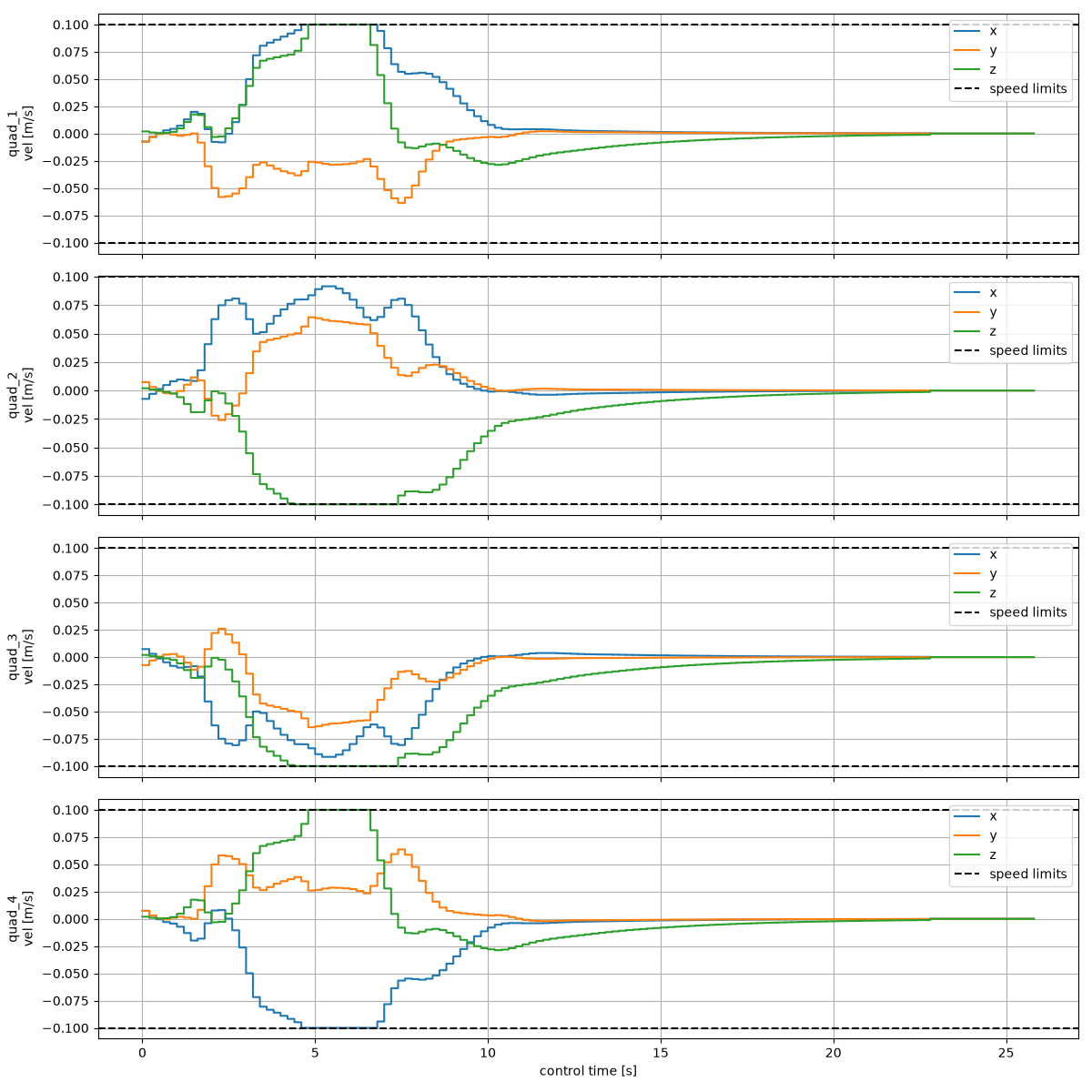}
        \caption{Longitudinally stiff}
        \label{fig:speed_asym_x}
    \end{subfigure}
    \hfill
    \begin{subfigure}[t]{0.48\columnwidth}
        \centering
        \includegraphics[width=\linewidth]{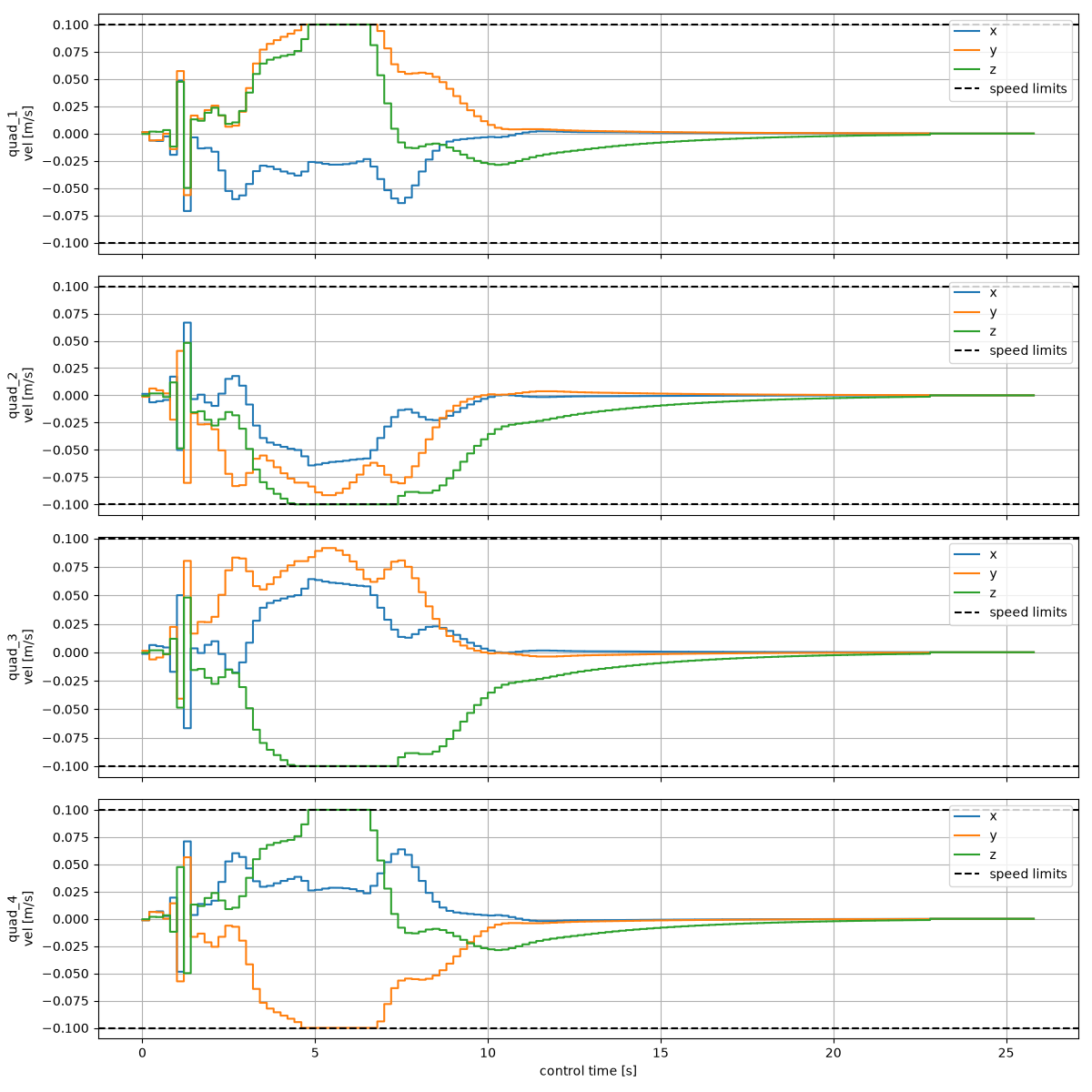}
        \caption{Laterally stiff}
        \label{fig:speed_asym_y}
    \end{subfigure}

    \vspace{0.5em}

    \begin{subfigure}[t]{0.48\columnwidth}
        \centering
        \includegraphics[width=\linewidth]{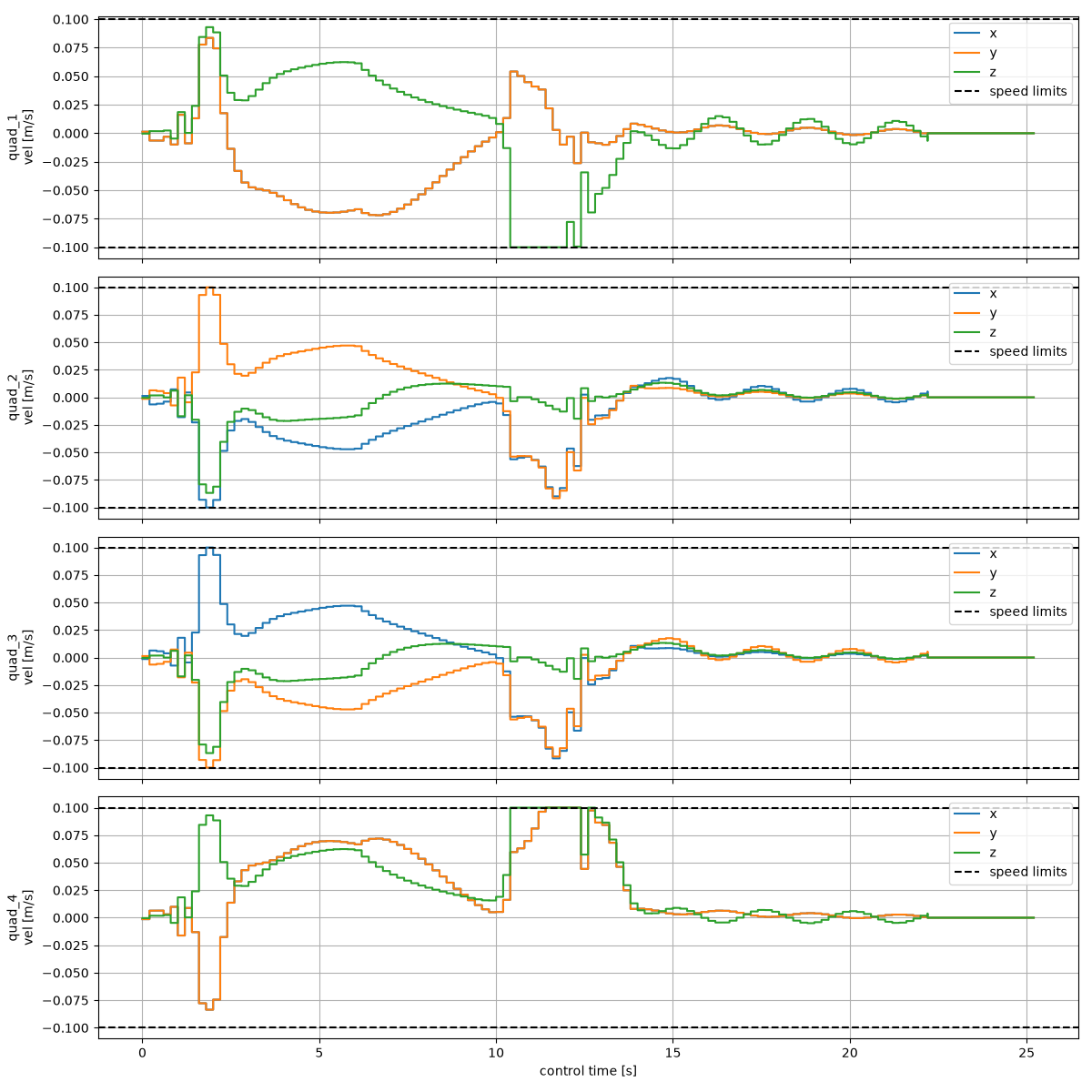}
        \caption{Globally compliant}
        \label{fig:speed_compliant}
    \end{subfigure}
    \hfill
    \begin{subfigure}[t]{0.48\columnwidth}
        \centering
        \includegraphics[width=\linewidth]{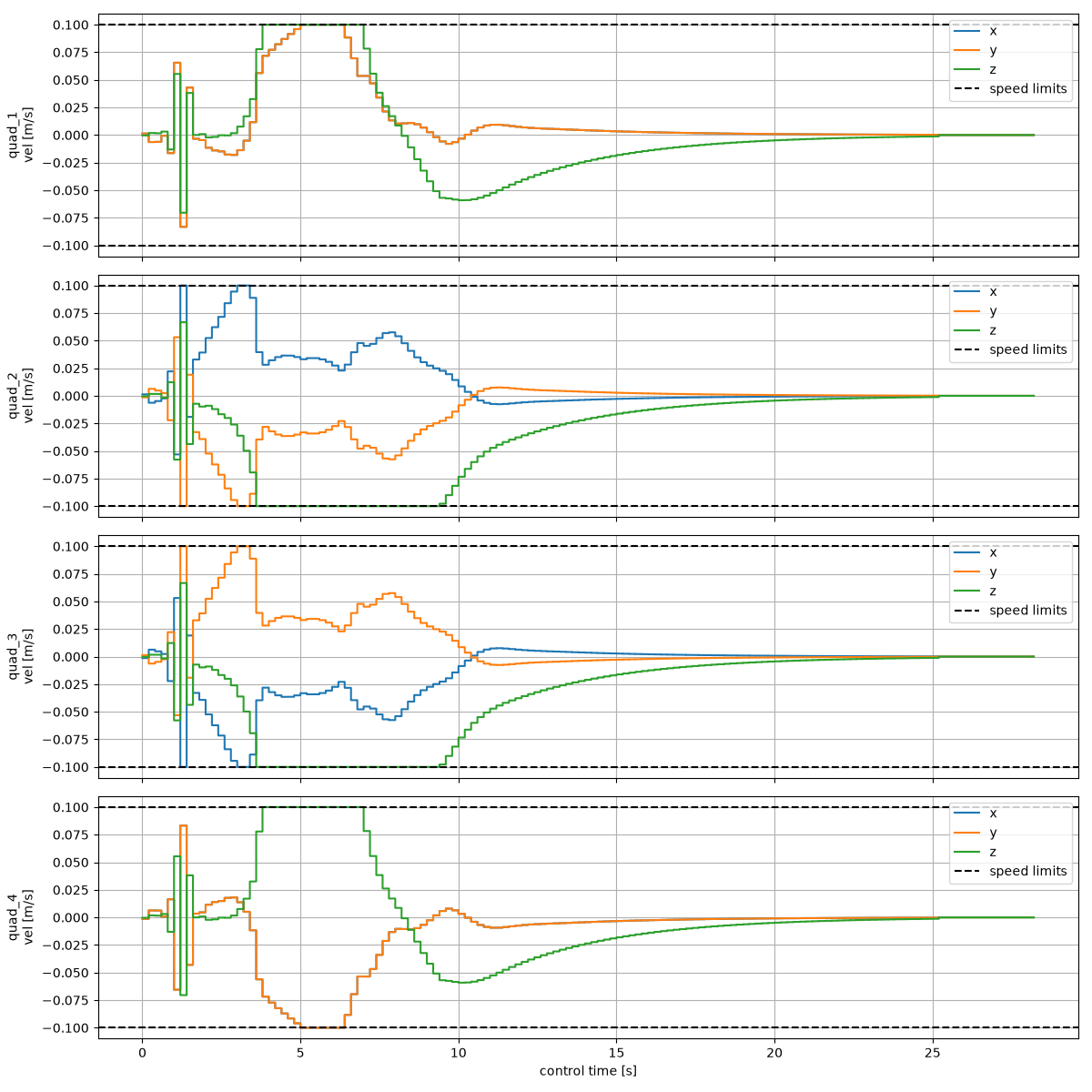}
        \caption{Globally stiff}
        \label{fig:speed_stiff}
    \end{subfigure}

    \caption{Commanded-anchor velocities during stiffness regulation.}
    \label{fig:speeds_all}
\end{figure}

\rev{The implemented gains, weights, active constraints, velocity bounds,
solver, and update period are summarized in
Table~\ref{tab:qp_controller_parameters}. A complete numerical report must
also state the initial configurations, target matrices, Jacobian evaluation
method, solver tolerances, and convergence and blockage thresholds.}
For the simulations reported in this subsection, the high-level planner is implemented as a constrained quadratic program using the local differential quantities introduced in Sections~VI and~VII. Let $q^w \in \mathbb{R}^{3n}$ denote the stacked commanded-anchor coordinates, let $p_e^{w,(b)}(q^w)$ be the selected equilibrium branch, and let $k(q^w)=\mathrm{vech}(K^w(q^w)) \in \mathbb{R}^6$ denote the stiffness-coordinate vector. The payload-position and stiffness errors are defined as
\begin{equation}
e_p(q^w,t) := p_e^{w,(b)}(q^w) - p_d^w(t), \qquad
e_K(q^w,t) := k(q^w) - k_d(t),
\end{equation}
with corresponding Jacobians
\begin{equation}
J_p^w(q^w) := \frac{\partial p_e^{w,(b)}}{\partial q^w}, \qquad
J_K^w(q^w) := \frac{\partial k}{\partial q^w}.
\end{equation}
The implemented controller uses the augmented task Jacobian
\begin{equation}
J_y^w(q^w) :=
\begin{bmatrix}
\sqrt{w_p}\,J_p^w(q^w) \\
\sqrt{w_K}\,J_K^w(q^w)
\end{bmatrix},
\end{equation}
and the corresponding weighted target vector
\begin{equation}
y_{\mathrm{ref}}(q^w,t) :=
\begin{bmatrix}
\sqrt{w_p}\,\lambda_p e_p(q^w,t) \\
\sqrt{w_K}\,\lambda_K e_K(q^w,t)
\end{bmatrix},
\end{equation}
where $w_p>0$ and $w_K>0$ are the payload- and stiffness-objective weights, and $\lambda_p>0$, $\lambda_K>0$ are the corresponding regulation gains. At each planner update, the commanded-anchor velocity $\nu^w \in \mathbb{R}^{3n}$ is obtained as the solution of
\begin{equation}
\label{eq:implemented_qp_controller}
(\nu^w)^\star \in \arg\min_{\nu^w \in \mathbb{R}^{3n}}
\frac{1}{2}\left\|J_y^w(q^w)\,\nu^w + y_{\mathrm{ref}}(q^w,t)\right\|_2^2
+ \frac{\varepsilon_v}{2}\|\nu^w\|_2^2
\end{equation}
subject to the admissibility constraints
\begin{equation}
\nabla_{q^w} c_{T,i}^{\min}(q^w)^\top \nu^w \ge -\eta_T\,c_{T,i}^{\min}(q^w), \qquad
c_{T,i}^{\min}(q^w) := T_i(q^w) - T_{\min},
\end{equation}
for $i=1,\dots,n$, the componentwise anchor-velocity bounds
\begin{equation}
\dot q_{\min}^w \le \nu^w \le \dot q_{\max}^w,
\end{equation}
and the additional stiffness-error descent inequality
\begin{equation}
e_K(q^w,t)^\top w_K J_K^w(q^w)\,\nu^w \le 0.
\end{equation}
Here $\eta_T>0$ is the admissibility gain associated with the minimum-tension margin, $T_{\min}>0$ is the minimum admissible cable tension, and $\varepsilon_v>0$ is the velocity regularization coefficient. In the present implementation, only the lower tension margin is enforced explicitly in the QP; other admissibility functions discussed in Section~VII remain available at the modeling level but are not imposed in the solver for the experiments reported here. After solving~\eqref{eq:implemented_qp_controller}, the commanded-anchor reference is updated according to $q^w \leftarrow q^w + \Delta t_c (\nu^w)^\star$, where $\Delta t_c$ is the high-level controller period.

Table~\ref{tab:qp_controller_parameters} summarizes the default parameters used by the implemented QP controller.

\begin{table*}[t]
\centering
\caption{Default parameters of the implemented QP anchor-setpoint controller.}
\label{tab:qp_controller_parameters}
\begin{tabular}{lll}
\hline
Parameter & Value & Role \\
\hline
Payload regulation gain $\lambda_p$ & 1.0 & Weight on payload-position error \\
Stiffness regulation gain $\lambda_K$ & 1.0 & Weight on stiffness-tracking error \\
Payload objective weight $w_p$ & 1.0 & Relative weight of payload term in the QP cost \\
Stiffness objective weight $w_K$ & 25.0 & Relative weight of stiffness term in the QP cost \\
Velocity regularization $\varepsilon_v$ & $10^{-4}$ & Quadratic penalty on anchor-velocity magnitude \\
Admissibility gain $\eta_T$ & 1.0 & Gain in the minimum-tension inequality \\
Minimum tension threshold $T_{\min}$ & $0.5$~N & Lower cable-tension margin enforced in the QP \\
Componentwise velocity bound $|\nu_j^w|$ & $\leq 0.3$~m/s & Box bound on each anchor-velocity component \\
QP solver & \texttt{qpoases} & Active-set quadratic-program solver \\
Planner update period $\Delta t_c$ & 0.2~s & High-level controller sampling period \\
\hline
\end{tabular}
\end{table*}

Model-based regulation alone does not establish that the desired
stiffness is physically realized in the richer dynamic simulation.
The empirical identification procedure of
Section~\ref{subsec:empirical_stiffness_identification} is therefore
repeated at selected operating points reached during regulation. The
comparison distinguishes
\[
    \DesiredStiffnessMat,
    \qquad
    \PredictedStiffnessMat,
    \qquad
    \EmpiricalStiffnessMat.
\]
This separates regulation error from analytical-model error: the
regulator may accurately attain the model-predicted stiffness while
the empirical stiffness differs because of tendon elasticity,
rigid-payload coupling, or residual dynamic effects.

\rev{Identification at the initial and final equilibria separates
regulation error from model error. Intermediate identification points may
be added when the reconfiguration is sufficiently slow to preserve the
local quasi-static interpretation.}

\subsection{Contact-Rich Guided-Sliding Task}
\label{subsec:guided_sliding_task}

The final validation stage is designed to examine whether stiffness shaping improves
physical interaction rather than only stiffness prediction or
tracking. The task requires the suspended payload to progress through
an alternating guided passage whose geometry forces repeated lateral
contacts.

A common translational component, generated proportionally from the
load-position error, is applied to all commanded anchors to regulate
the payload position without altering the predicted stiffness in the
translation-invariant analytical model.

\subsubsection{Zigzag Environment}
\label{subsubsec:zigzag_environment}

The payload follows a nominal forward path through two alternately
positioned guide surfaces. A collision-free straight trajectory is
incompatible with the passage geometry; progress therefore requires
the payload to yield laterally under contact while maintaining
sufficient stiffness along the forward direction.

This environment exposes the directional role of passive compliance:
high lateral stiffness is expected to increase contact forces and the
risk of jamming, whereas excessive longitudinal compliance may reduce
forward progress.

\rev{The guide geometry, contact parameters, reference motion,
initial condition, and completion criterion form part of the task
definition. Figure~\ref{fig:zigzag_environment} identifies the forward
and lateral directions used to interpret the stiffness profiles.}

\begin{figure}
    \centering
    \includegraphics[width=1.0\linewidth]{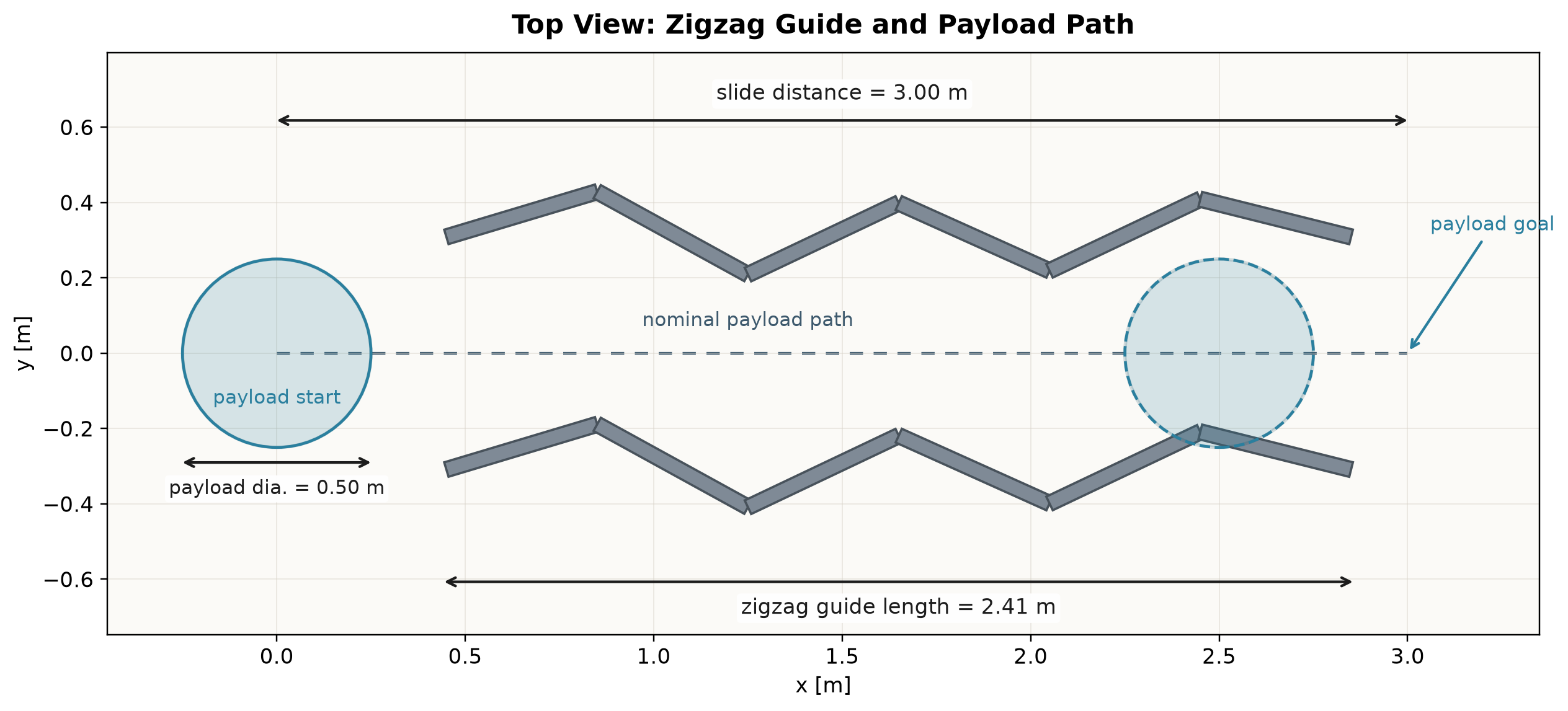}
    \caption{\rev{Schematic of the zigzag guided-sliding task, including the longitudinal direction of progression and the lateral contact direction.}}
    \label{fig:zigzag_environment}
\end{figure}

\begin{figure*}[t]
    \centering

    \begin{subfigure}[t]{0.35\textwidth}
        \centering
        \includegraphics[width=\linewidth]{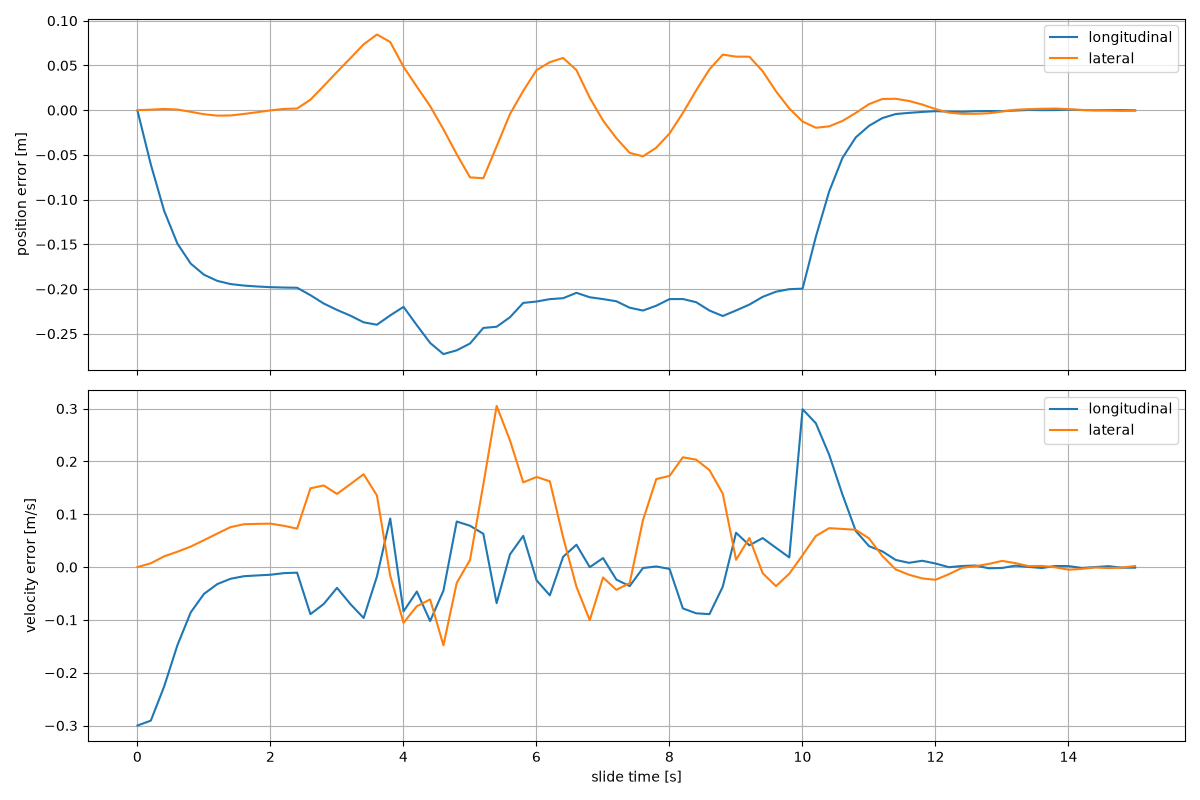}
        \caption{Longitudinally stiff, laterally compliant}
        \label{fig:zigzag_longitudinal_stiff}
    \end{subfigure}
    % \hfill
    \begin{subfigure}[t]{0.35\textwidth}
        \centering
        \includegraphics[width=\linewidth]{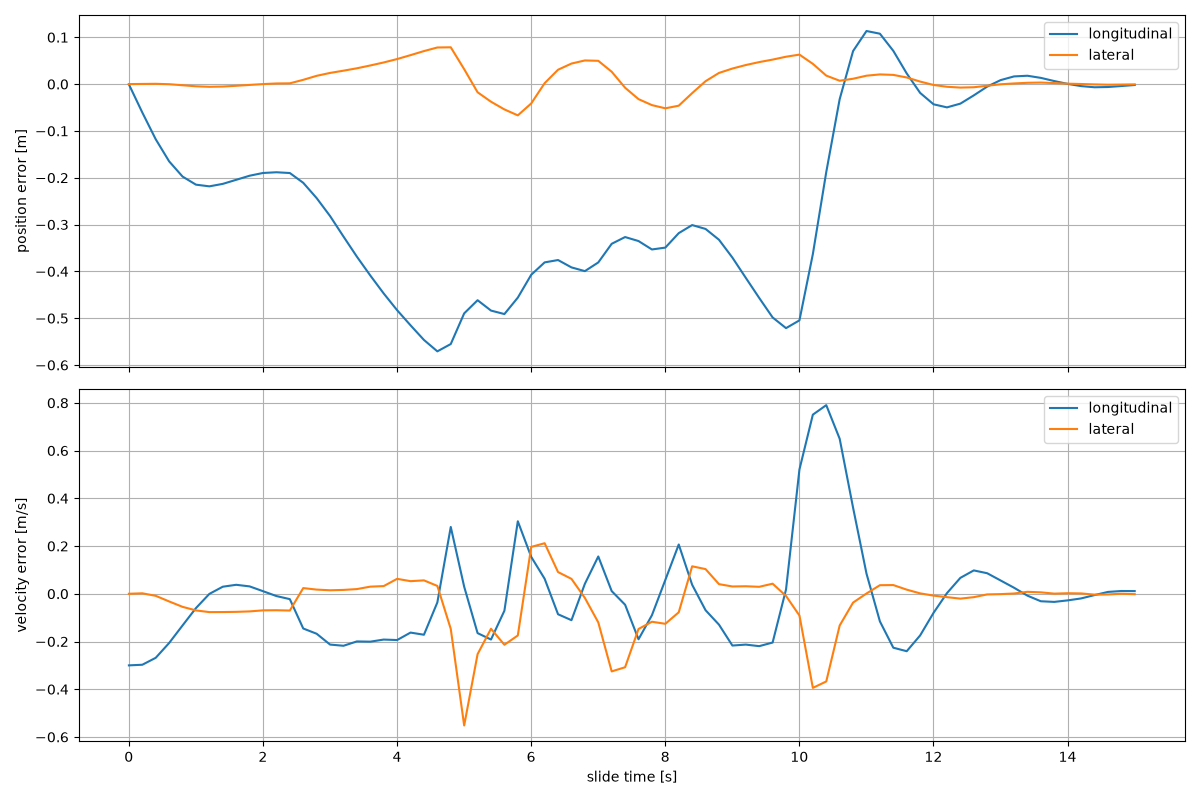}
        \caption{Laterally stiff, longitudinally compliant}
        \label{fig:zigzag_lateral_stiff}
    \end{subfigure}

    \vspace{0.5em}

    \begin{subfigure}[t]{0.35\textwidth}
        \centering
        \includegraphics[width=\linewidth]{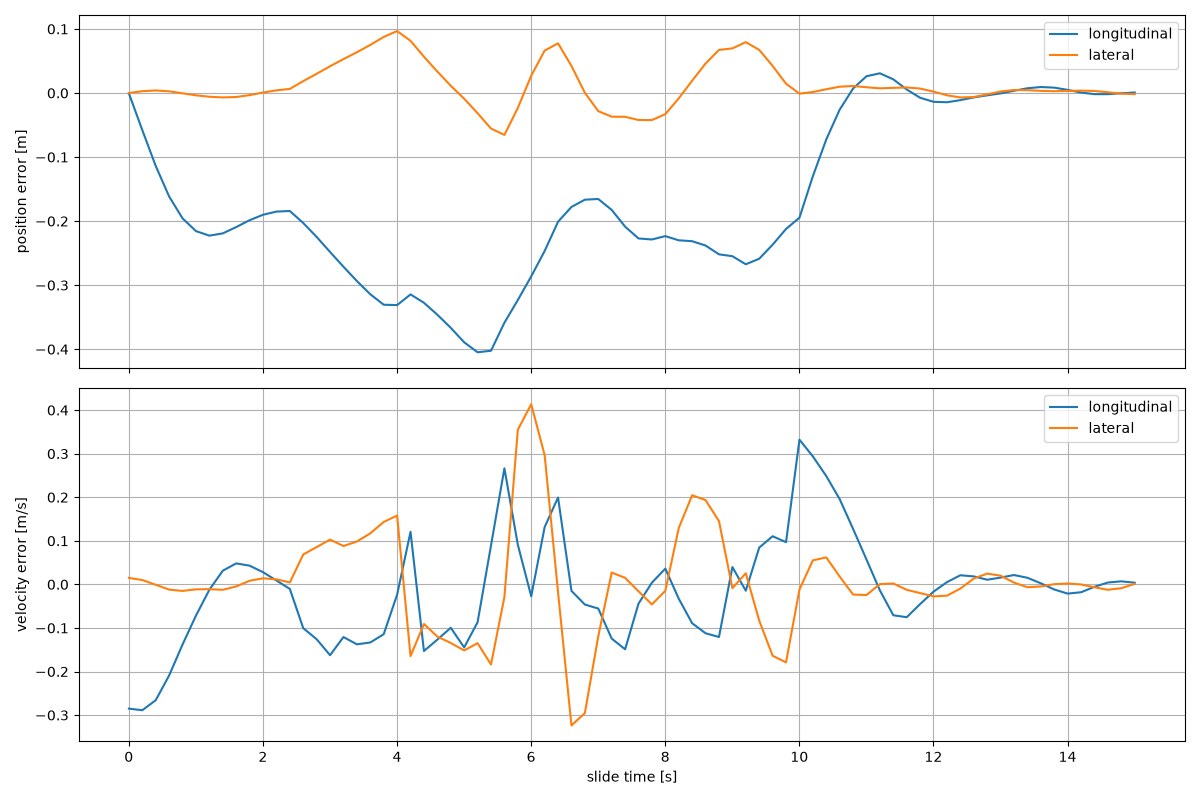}
        \caption{Compliant in both horizontal directions}
        \label{fig:zigzag_compliant}
    \end{subfigure}
    % \hfill
    \begin{subfigure}[t]{0.35\textwidth}
        \centering
        \includegraphics[width=\linewidth]{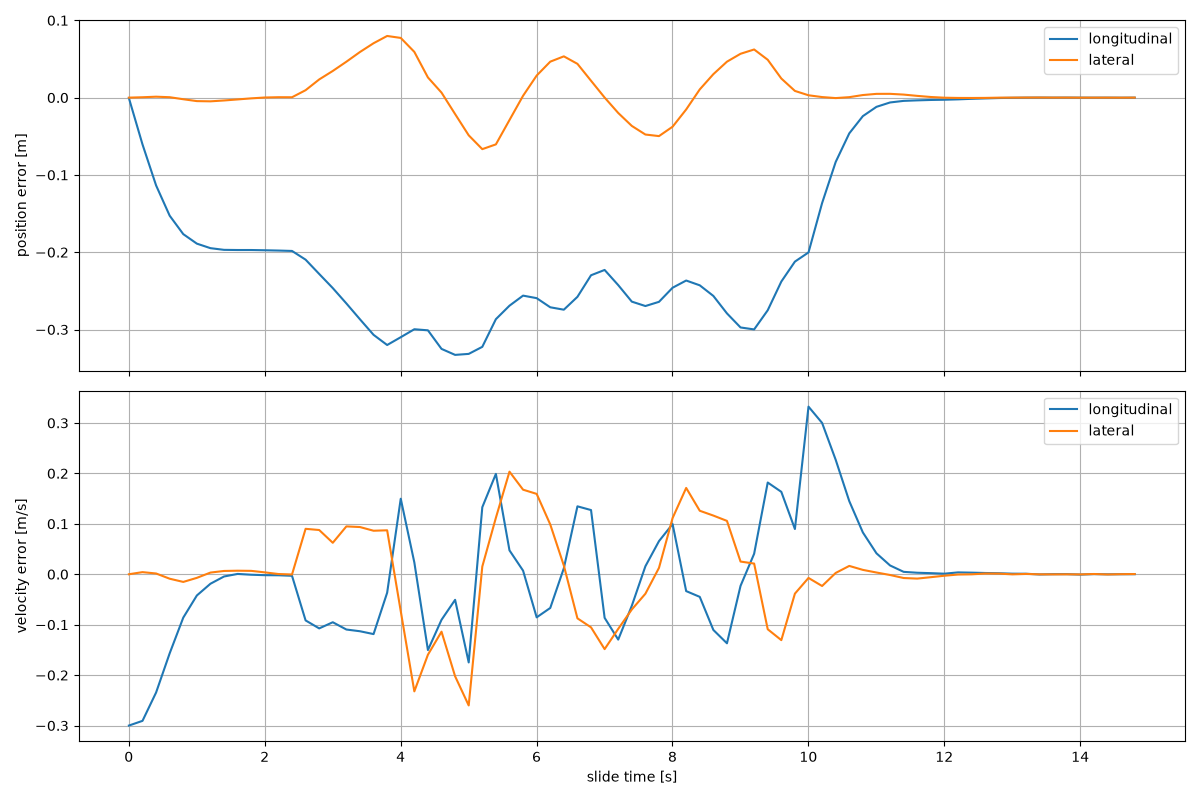}
        \caption{Stiff in both horizontal directions}
        \label{fig:zigzag_stiff}
    \end{subfigure}

    \caption{Longitudinal and lateral tracking errors during guided sliding.}
    \label{fig:zigzag_errors_all}
\end{figure*}

\subsubsection{Passive-Stiffness Profiles}
\label{subsubsec:task_stiffness_profiles}

\begin{figure*}[t]
    \centering

    \begin{subfigure}[t]{0.35\textwidth}
        \centering
        \includegraphics[width=\linewidth]{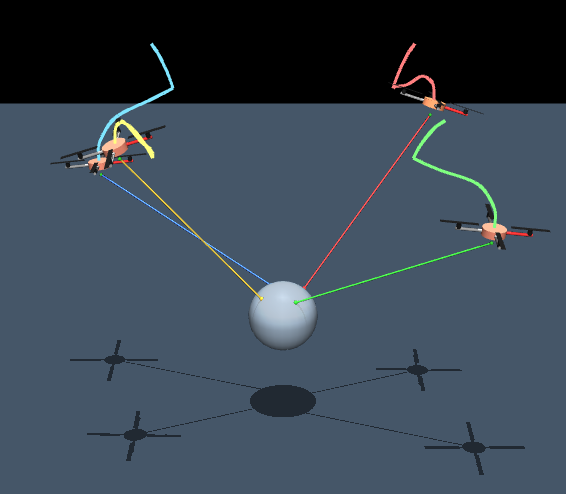}
        \caption{\rev{Longitudinally stiff and laterally compliant formation.}}
        \label{fig:formation_longitudinal_stiff}
    \end{subfigure}
    % \hfill
    \begin{subfigure}[t]{0.35\textwidth}
        \centering
        \includegraphics[width=\linewidth]{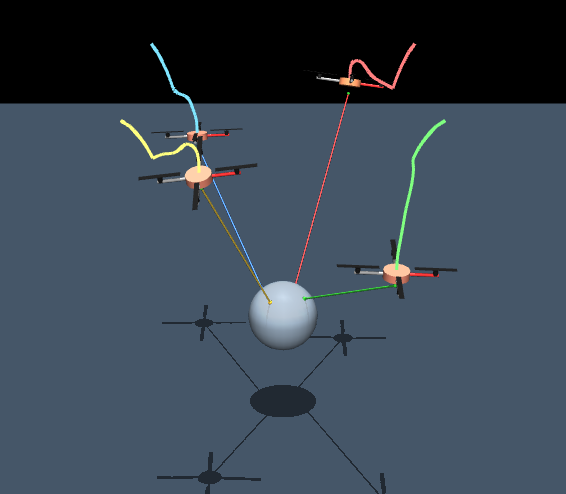}
        \caption{\rev{Laterally stiff and longitudinally compliant formation.}}
        \label{fig:formation_lateral_stiff}
    \end{subfigure}
    % \hfill
    \begin{subfigure}[t]{0.35\textwidth}
        \centering
        \includegraphics[width=\linewidth]{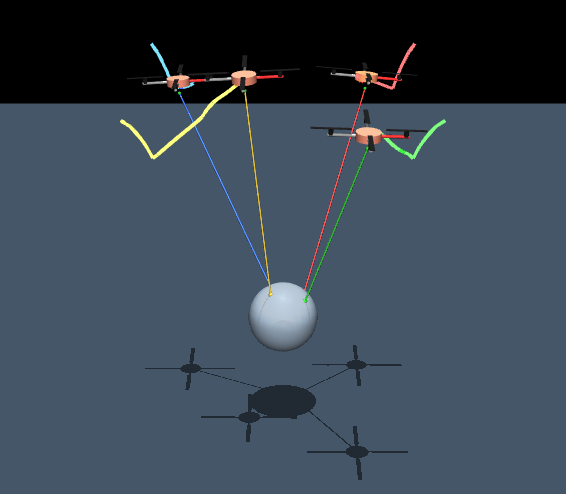}
        \caption{\rev{Longitudinally and laterally compliant formation.}}
        \label{fig:formation_globally_compliant}
    \end{subfigure}
    % \hfill
    \begin{subfigure}[t]{0.35\textwidth}
        \centering
        \includegraphics[width=\linewidth]{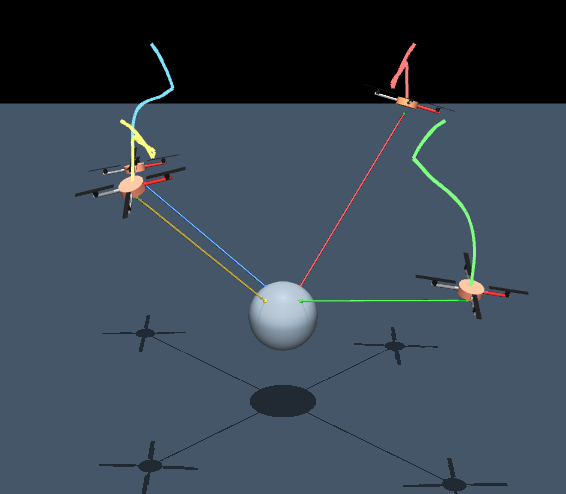}
        \caption{\rev{Longitudinally and laterally stiff formation.}}
        \label{fig:formation_globally_stiff}
    \end{subfigure}

    \caption{\rev{Commanded-anchor formations used to generate the four passive-stiffness profiles for the guided-sliding comparison.}}
    \label{fig:task_stiffness_formations}
\end{figure*}

Four stiffness profiles are defined for comparison:
\begin{enumerate}
    \item longitudinally and laterally stiff;
    \item longitudinally and laterally compliant;
    \item longitudinally stiff and laterally compliant;
    \item longitudinally compliant and laterally stiff.
\end{enumerate}
The first two distinguish global stiffness from global compliance. The
last two test whether task performance depends on aligning the
stiffness anisotropy with the forward-motion and contact directions.

Each profile is realized by aerial-anchor reconfiguration before task
execution. The resulting anchor setpoints are then held fixed, so the
comparison concerns passive mechanical behavior rather than active
stiffness adaptation during contact.

\rev{A conclusive comparison requires the target, predicted, and
empirically identified stiffness matrices for all four profiles, together
with their formations, tensions, and equilibrium load poses. The profiles
must satisfy common initialization and feasibility conditions.}

\subsubsection{Task-Performance Metrics}
\label{subsubsec:task_performance_metrics}

The profiles will be compared using task-completion rate and traversal
time, together with maximum and accumulated contact force, forward
progress, path error, payload-attitude excursion, tendon-tension
margins, and vehicle control effort. Jamming is defined by insufficient
forward progress over a prescribed time interval while persistent
contact is present.

The central comparison is between the task-aligned anisotropic profile
and the globally stiff, globally compliant, and task-misaligned
profiles. Improved performance of the aligned profile would show that
the benefit arises from directional stiffness shaping rather than from
uniformly reducing stiffness.

\rev{Repeated trials or randomized initial conditions are required for
task-level claims. Failed and jammed trials must be retained, with a stated
rule for incomplete traversal times.}
\section{Conclusions}
\label{sec:conclusions}

This work developed a gravity-aware theory for predicting and shaping
passive Cartesian stiffness in cable-suspended aerial manipulation
with movable compliant aerial anchors. For an arbitrary number of
aerial vehicles and taut inextensible cables, the stiffness was derived
at a selected gravity-loaded equilibrium. Within each leg,
aerial-anchor compliance and transverse cable geometric compliance
combine in series, whereas the leg stiffnesses act in parallel on the
load.

For isotropic aerial-anchor stiffness, each inextensible cable and
compliant anchor was shown to be quasi-statically equivalent to a
virtual unilateral elastic cable connected directly to the commanded
anchor. This equivalence reveals an axial--transverse decomposition and
clarifies how cable directions and gravity-loaded tensions determine
the magnitude and anisotropy of the passive load stiffness.

The resulting nonlinear stiffness map relates the commanded-anchor
configuration to the passive stiffness along a selected equilibrium
branch. Its differential characterizes the stiffness variations
locally generated by aerial-anchor motion and supports
constraint-preserving regulation toward an arbitrary desired
stiffness. When further local descent is unavailable, the regulator
preserves admissibility and reports local blockage without asserting
global nonrealizability.

The theory assumes a point load, quasi-static evolution, and taut,
straight, inextensible cables. \rev{A dynamic rigid-body validation framework with nonlinear vehicle control, elastic-damped tendons, and environmental contact specifies how the predicted and shaped stiffness will be stress-tested beyond these assumptions.} Future work will address dynamic
load-port impedance, variable aerial-anchor stiffness and damping,
elastic and sagging cables, slack--taut transitions, rigid-body loads,
global reconfiguration, and experimental validation with aerial-robot
teams.

\bibliographystyle{IEEEtran}
\bibliography{references}

\end{document}